\documentclass{article}
\usepackage{hyperref}

\usepackage{url}
\usepackage{enumitem}
\usepackage[accepted]{icml2026}

\usepackage[utf8]{inputenc} 
\usepackage[T1]{fontenc}    
\usepackage{url}            
\usepackage{booktabs}       
\usepackage{amsfonts}       
\usepackage{microtype}      
\usepackage{xcolor}         
\usepackage{graphicx}
\usepackage{subcaption}
\usepackage{natbib}

\usepackage{tikz}
\usetikzlibrary{arrows.meta, shapes.geometric, positioning, decorations.pathreplacing, calc}

\usepackage{amsmath, amssymb, amsfonts, amsthm}
\IfFileExists{cleveref.sty}{
    \usepackage{cleveref}
}{

    \newcommand{\cref}[1]{\autoref{##1}}
    \newcommand{\Cref}[1]{\autoref{##1}}
}
\IfFileExists{physics.sty}{
    \usepackage{physics}
}{
    \usepackage{xparse}
    \NewDocumentCommand{\qty}{d() d[]}{
        \IfNoValueTF{##1}{\IfNoValueTF{##2}{}{\left[##2\right]}}{\left(##1\right)}
    }
    \NewDocumentCommand{\dv}{m m}{\frac{d ##1}{d ##2}}
}
\usepackage{algorithm}
\usepackage{algorithmic}

\newtheorem{theorem}{Theorem}
\newtheorem{corollary}{Corollary}
\newtheorem{lemma}{Lemma}

\newtheorem{proposition}{Proposition}
\newtheorem{definition}{Definition}

\newtheorem{assumption}{Assumption}

\newcommand{\E}{\mathbb{E}}

\icmltitlerunning{Provable Benefits of RLVR over SFT for Reasoning Models}

\begin{document}
\twocolumn[
  \icmltitle{Provable Benefits of RLVR over SFT for Reasoning Models: \\
  Learning to Backtrack Efficiently}

  \begin{icmlauthorlist}
    \icmlauthor{Stanley Wei}{princeton}
    \icmlauthor{Juno Kim}{berkeley}
  \end{icmlauthorlist}

  \icmlaffiliation{princeton}{Princeton University}
  \icmlaffiliation{berkeley}{University of California, Berkeley}

  \icmlcorrespondingauthor{Stanley Wei}{stanley.wei@princeton.edu}

  \icmlkeywords{reinforcement learning, supervised fine-tuning, reasoning, backtracking}

  \vskip 0.3in
]

\printAffiliationsAndNotice{}

\begin{abstract}
Recent advances in large language models (LLMs) have demonstrated that reinforcement fine-tuning of pretrained base models can lead to significant gains in reasoning performance at inference time. In this work, we theoretically analyze why reinforcement fine-tuning induces better reasoning ability than purely supervised fine-tuning (SFT) methods. We model chain-of-thought (CoT) reasoning as a pathfinding problem on graphs and compare the popular method of reinforcement learning with verifiable rewards (RLVR) against traditional SFT. We prove that SFT, when trained on golden shortest paths without negative examples, fails to learn how to efficiently backtrack. In contrast, an RLVR-trained model can learn how to efficiently backtrack from dead ends using only outcome reward. This leads to an exponential separation in inference-time compute between the two methods, and demonstrates that RLVR leads the model to learn the location of difficult decisions in a reasoning chain, ultimately allowing for better allocation of inference-time compute. Finally, we show that the reasoning traces of an RLVR model can be distilled to train a base model to backtrack efficiently as well.
\end{abstract}

\section{Introduction} \label{sec: intro}

Modern large language models (LLMs) are trained to achieve strong reasoning capabilities on a wide range of tasks such as mathematical problem-solving and code generation. In the context of LLMs, \emph{reasoning} refers to the generation of long chain-of-thought (CoT) which mimics the step-by-step nature of human reasoning to solve complex logical problems \citep{wei2022chain,lightman2023let}. Reasoning models treat such tasks as a sequential multi-step decision process and deploy strategies utilizing additional test-time compute budget, such as sampling, tree search, aggregation, and backtracking. This approach has proved to yield efficient and scalable gains over initial pretraining \citep{snell2024scaling,muennighoff2025s1simpletesttimescaling}, and has been adopted with great success in various frontier and open-source models \citep{openai2024openaio1card,shao2024deepseekmath,guo2025deepseek}.

In order to learn or strengthen these desired inference-time reasoning behaviors, LLMs must undergo stages of post-training \citep{kumar2025llmposttrainingdeepdive,xu2025largereasoningmodelssurvey}. The post-training procedure typically follows one of two main strategies: supervised fine-tuning (SFT) on expert demonstrations, and reinforcement learning (RL) on feedback from a reward model or task verifier. 
SFT uses an off-policy dataset of demonstrations created or annotated by an expert (typically a human or a more powerful model) and trains the base model to imitate these responses. While SFT has been the de facto method for post-training, it is prone to memorization or overfitting \citep{Chu25} and can induce uninformative pseudo-reasoning paths \citep{chen2025sft}. Moreover, curating high-quality expert demonstrations can be expensive and time-consuming, depending on the nature of the task.

In contrast, methods such as reinforcement learning with verifiable rewards (RLVR) \citep{wen2025reinforcementlearningverifiablerewards, deepseekai2025} and reinforcement learning from human feedback (RLHF) \citep{christiano2023deepreinforcementlearninghuman,ouyang2022traininglanguagemodelsfollow} learn from reward models or verifiers via on-policy exploration. The RL approach has been argued to generalize more effectively and have the potential to unlock entirely new reasoning capabilities \citep{Chu25,wang2025reinforcementlearningreasoninglarge, zhu2025surprisingeffectivenessnegativereinforcement}. Nevertheless, we still lack a principled understanding of the differences between the two methods, or a theoretical framework under which to compare various post-training algorithms.

In this paper, we focus on \emph{backtracking ability} as a way to distinguish the effectiveness of post-training methods. Backtracking is a key element of human problem-solving: when a line of approach is revealed to be incorrect or unhelpful, one returns to an earlier decision point and explores an alternative branch. This greatly reduces complexity of the effective search space. Empirically, backtracking has been shown to be greatly beneficial to LLM reasoning ability, either implicitly in the chain of thought \citep{cai2025backtrackingenoughexploringinterplay}, or explicitly with backtrack tokens \citep{yang2025stepleapforwardselfbacktracking} or rolling back sequence generation \citep{singh2025improving}. Hence we are motivated to ask:

\begin{center}
\emph{Which post-training method teaches the base model to efficiently backtrack at inference time?}
\end{center}

\paragraph{Our contributions.} We approach this question by modeling CoT reasoning as pathfinding on graphs, a sandbox commonly studied in the literature \citep{sanford2024understanding,bachmann2025pitfallsnexttokenprediction,kim2025metastable}. While existing works have studied the benefits of backtracking from an information or sampling perspective \citep{shalevshwartz2025reasoningsuperintelligencesearchtheoreticperspective,rohatgi2025tamingimperfectprocessverifiers}, we provide a novel \emph{dynamical} characterization of running SFT versus RLVR. We design a multigraph similar to the path-star graph \citep{bachmann2025pitfallsnexttokenprediction} with $W$ branches of depth $K$, which the pretrained world model -- a linear-softmax bigram over edges, or a trigram over nodes -- must learn to navigate by backtracking from failed branches. Our results are summarized as follows.
\begin{itemize}
\item We prove that SFT trained only on golden shortest paths does not learn any backtracking strategy, while the RLVR-trained model learns to backtrack consistently using only outcome reward with a length penalty in finite time.

\item We show an exponential test-time compute separation: after convergence, the RLVR model can reach any target in $\Theta(WK)$ expected time, while the SFT model requires $\Theta(WL^K)$ time.

\item We further show that distilling RLVR-generated reasoning traces to a base model via supervised learning transfers efficient backtracking, recovering $\Theta(WK)$ inference-time compute.
\end{itemize}

Intuitively, SFT on expert solutions trains the base model to continue along a gold path with no exposure to dead ends. Hence, SFT need not learn an efficient retreat strategy. In contrast, on-policy RLVR necessarily generates and trains on the model's own unsuccessful partial rollouts. This exploration produces gradient signal not only about good forward actions, but also backtracking actions when the model has committed to a poor branch. We formalize this difference via a dynamical analysis of gradient flow (for SFT) and sign policy-gradient flow (for RLVR). Our results provide theoretical support for the importance of backtracking data for reasoning in both RL and distillation.

All proofs are deferred to the appendix.

\section{Related Work}

\paragraph{Backtracking in LLM reasoning.} Backtracking has empirically been used to quantify and improve reliability and efficiency in LLM reasoning. Inference-time search frameworks such as Tree-of-Thoughts explicitly explore and prune a branching space of intermediate thoughts, leading to depth-first or best-first search with backtracking \citep{yao2023treethoughtsdeliberateproblem,long2023largelanguagemodelguided}, generalized by Graph-of-Thoughts methods \citep{besta2024graph}. \citet{qin2025backtrackbacktracksequentialsearch} study when sequential search with backtracking benefits over parallel sampling. In particular, they empirically show that models with backtracking capabilities benefit greatly from RL finetuning on reasoning tasks, which agrees with our theoretical results. Moreover, \citet{singh2025improving} propose preemptive backtracking guided by in-context value verification to identify and focus resampling from suspected failure points. \citet{yang2025stepleapforwardselfbacktracking} study encouraging models to decide when to revert during reasoning via introducing an explicit backtrack token, aiming to internalize structured backtracking.

From a theoretical perspective, \citet{shalevshwartz2025reasoningsuperintelligencesearchtheoreticperspective} show the necessity of search and backtracking for certain graph search tasks and parity problems, and describe a learning method which builds a search tree with explicit backtracking. \citet{rohatgi2025tamingimperfectprocessverifiers} propose a test-time sampling and backtracking algorithm which uses process rewards, which mixes in time quadratic in the depth of the search tree, allowing for exact sampling from a target distribution on the leaves. However, these works do not study how to learn such behavior via post-training, which is the focus of our dynamical analysis. \citet{kim2025metastable} propose a cluster graph model of reasoning and analyze CoT pathfinding as a metastable Markov process. They also provide convergence guarantees for a simple RL method (proximal policy optimization); however, they do not study RLVR or SFT, and do not consider backtracking in CoT.

\paragraph{RL versus SFT.} Recent empirical studies have highlighted significant qualitative differences between models post-trained via SFT and those with RL. \citet{chu2025sftmemorizesrlgeneralizes} provide a comparative analysis on an arithmetic-based reasoning task that suggests SFT tends to memorize the distribution of training demonstrations, whereas RL facilitates better generalization to out-of-distribution (OOD) tasks. This distinction is further supported by \citet{shenfeld2025rlsrazoronlinereinforcement}, in which the authors argue that on-policy RL implicitly regularizes the model towards KL-minimal solutions (with respect to the base model), allowing the model to find simpler solutions that are robust to catastrophic forgetting. \citet{park2025doesrlposttraininginduce} also report that SFT underperforms RL on the synthetic Countdown task; moreover, RL-only post-training can induce fundamentally improved OOD generalization. In addition, \citet{chen2025sft} show that SFT can elicit pseudo-reasoning paths in large vision-language models by only superficially imitating expert models. Such paths often contain uninformative or incorrect reasoning steps, and even hurt subsequent RL training stages. They also propose an RL-based approach which leads to more adaptive reasoning behavior.

\paragraph{Reasoning as graph search.} Pathfinding in graphs has been widely used as a sandbox for understanding LLM reasoning capabilities. \citet{abbe2024far} propose the notion of globality degree to capture transformer learning ability, and show that regular transformers cannot efficiently solve cycle tasks. \citet{sanford2024understanding} study the ability of transformer networks to solve various graph algorithms in terms of their expressivity such as network width and depth. From an empirical perspective, synthetic graph pathfinding tasks have been used to study compositional reasoning ability \citep{khona2024understandingstepwiseinferencetransformers} and understand internal prediction mechanisms \citep{cohen2025spectraljourneytransformerspredict}. For instance, \citet{mirtaheri2025letthink} study the bridge graph to understand the benefits and tradeoffs with respect to parallel and single-context reasoning. Additionally, the path-star graph (which our construction generalizes) has been shown to be difficult to solve for LLMs, and has been used to study the limitations of reasoners trained via next-token prediction \citep{bachmann2025pitfallsnexttokenprediction,Frydenlund_2024}.
Going beyond just graph search, several works have also analyzed the bigram policy class as a tractable way to give mathematical guarantees on reasoning ability \citep{Nichani24,bu2026distributionalbiasesposttrainingmarkovian, wang2026doeschainofthoughthelpmarkovian, yang2025markovchainthoughtefficient}.

\section{Problem Formulation} \label{sec:probformulation}

\begin{figure*}[ht]
    \centering
    \begin{tikzpicture}[scale=1, every node/.style={font=\scriptsize},
      main/.style={circle, draw, thick, minimum size=8.5mm, inner sep=0pt, font=\bfseries\scriptsize},
      target/.style={circle, draw, thick, minimum size=8.5mm, fill=gray!30, font=\bfseries\scriptsize},
      multiedge/.style={thick},
      pathhi/.style={ultra thick, draw=red!70!black, line cap=round}]
    \node[main] (s) {$s_0$};
    \node[main] (f) [right=1cm of s] {$f$};
    \draw[-{Latex[scale=1.1]}, thick] (s) -- (f);
    \def\dxFork{0.75}
    \def\dxLR{1.0}
    \def\dxConn{\dxFork}
    \def\dxT{\dxFork}
    \def\dyBranch{1.0}
    \def\bendWide{30}
    \def\bendNarrow{15}
    \node[main] (u2_1_l) [right=\dxFork cm of f] {$u_{2,1,l}$};
    \node[main] (u1_1_l) [above=\dyBranch cm of u2_1_l] {$u_{1,1,l}$};
    \node[main] (u1_1_r) [right=\dxLR cm of u1_1_l] {$u_{1,1,r}$};
    \draw[thick] (f) -- (u1_1_l);
    \draw[multiedge] (u1_1_l) to[bend left=\bendWide] (u1_1_r);
    \draw[multiedge] (u1_1_l) to[bend left=\bendNarrow] (u1_1_r);
    \draw[multiedge] (u1_1_l) -- (u1_1_r);
    \draw[multiedge] (u1_1_l) to[bend right=\bendNarrow] (u1_1_r);
    \draw[multiedge] (u1_1_l) to[bend right=\bendWide] (u1_1_r);
    \node[main] (u1_2_l) [right=\dxConn cm of u1_1_r] {$u_{1,2,l}$};
    \node[main] (u1_2_r) [right=\dxLR cm of u1_2_l] {$u_{1,2,r}$};
    \draw[thick] (u1_1_r) -- (u1_2_l);
    \draw[multiedge] (u1_2_l) to[bend left=\bendWide] (u1_2_r);
    \draw[multiedge] (u1_2_l) to[bend left=\bendNarrow] (u1_2_r);
    \draw[multiedge] (u1_2_l) -- (u1_2_r);
    \draw[multiedge] (u1_2_l) to[bend right=\bendNarrow] (u1_2_r);
    \draw[multiedge] (u1_2_l) to[bend right=\bendWide] (u1_2_r);
    \node[main] (u1_3_l) [right=\dxConn cm of u1_2_r] {$u_{1,3,l}$};
    \node[main] (u1_3_r) [right=\dxLR cm of u1_3_l] {$u_{1,3,r}$};
    \draw[thick] (u1_2_r) -- (u1_3_l);
    \draw[multiedge] (u1_3_l) to[bend left=\bendWide] (u1_3_r);
    \draw[multiedge] (u1_3_l) to[bend left=\bendNarrow] (u1_3_r);
    \draw[multiedge] (u1_3_l) -- (u1_3_r);
    \draw[multiedge] (u1_3_l) to[bend right=\bendNarrow] (u1_3_r);
    \draw[multiedge] (u1_3_l) to[bend right=\bendWide] (u1_3_r);
    \node[target] (t1) [right=\dxT cm of u1_3_r] {$t_1$};
    \draw[thick] (u1_3_r) -- (t1);
    \node[main] (u2_1_r) [right=\dxLR cm of u2_1_l] {$u_{2,1,r}$};
    \draw[thick] (f) -- (u2_1_l);
    \draw[multiedge] (u2_1_l) to[bend left=\bendWide] (u2_1_r);
    \draw[multiedge] (u2_1_l) to[bend left=\bendNarrow] (u2_1_r);
    \draw[multiedge] (u2_1_l) -- (u2_1_r);
    \draw[multiedge] (u2_1_l) to[bend right=\bendNarrow] (u2_1_r);
    \draw[multiedge] (u2_1_l) to[bend right=\bendWide] (u2_1_r);
    \node[main] (u2_2_l) [right=\dxConn cm of u2_1_r] {$u_{2,2,l}$};
    \node[main] (u2_2_r) [right=\dxLR cm of u2_2_l] {$u_{2,2,r}$};
    \draw[thick] (u2_1_r) -- (u2_2_l);
    \draw[multiedge] (u2_2_l) to[bend left=\bendWide] (u2_2_r);
    \draw[multiedge] (u2_2_l) to[bend left=\bendNarrow] (u2_2_r);
    \draw[multiedge] (u2_2_l) -- (u2_2_r);
    \draw[multiedge] (u2_2_l) to[bend right=\bendNarrow] (u2_2_r);
    \draw[multiedge] (u2_2_l) to[bend right=\bendWide] (u2_2_r);
    \node[main] (u2_3_l) [right=\dxConn cm of u2_2_r] {$u_{2,3,l}$};
    \node[main] (u2_3_r) [right=\dxLR cm of u2_3_l] {$u_{2,3,r}$};
    \draw[thick] (u2_2_r) -- (u2_3_l);
    \draw[multiedge] (u2_3_l) to[bend left=\bendWide] (u2_3_r);
    \draw[multiedge] (u2_3_l) to[bend left=\bendNarrow] (u2_3_r);
    \draw[multiedge] (u2_3_l) -- (u2_3_r);
    \draw[multiedge] (u2_3_l) to[bend right=\bendNarrow] (u2_3_r);
    \draw[multiedge] (u2_3_l) to[bend right=\bendWide] (u2_3_r);
    \node[target] (t2) [right=\dxT cm of u2_3_r] {$t_2$};
    \draw[thick] (u2_3_r) -- (t2);
    \node[main] (u3_1_l) [below=\dyBranch cm of u2_1_l] {$u_{3,1,l}$};
    \node[main] (u3_1_r) [right=\dxLR cm of u3_1_l] {$u_{3,1,r}$};
    \draw[thick] (f) -- (u3_1_l);
    \draw[multiedge] (u3_1_l) to[bend left=\bendWide] (u3_1_r);
    \draw[multiedge] (u3_1_l) to[bend left=\bendNarrow] (u3_1_r);
    \draw[multiedge] (u3_1_l) -- (u3_1_r);
    \draw[multiedge] (u3_1_l) to[bend right=\bendNarrow] (u3_1_r);
    \draw[multiedge] (u3_1_l) to[bend right=\bendWide] (u3_1_r);
    \node[main] (u3_2_l) [right=\dxConn cm of u3_1_r] {$u_{3,2,l}$};
    \node[main] (u3_2_r) [right=\dxLR cm of u3_2_l] {$u_{3,2,r}$};
    \draw[thick] (u3_1_r) -- (u3_2_l);
    \draw[multiedge] (u3_2_l) to[bend left=\bendWide] (u3_2_r);
    \draw[multiedge] (u3_2_l) to[bend left=\bendNarrow] (u3_2_r);
    \draw[multiedge] (u3_2_l) -- (u3_2_r);
    \draw[multiedge] (u3_2_l) to[bend right=\bendNarrow] (u3_2_r);
    \draw[multiedge] (u3_2_l) to[bend right=\bendWide] (u3_2_r);
    \node[main] (u3_3_l) [right=\dxConn cm of u3_2_r] {$u_{3,3,l}$};
    \node[main] (u3_3_r) [right=\dxLR cm of u3_3_l] {$u_{3,3,r}$};
    \draw[thick] (u3_2_r) -- (u3_3_l);
    \draw[multiedge] (u3_3_l) to[bend left=\bendWide] (u3_3_r);
    \draw[multiedge] (u3_3_l) to[bend left=\bendNarrow] (u3_3_r);
    \draw[multiedge] (u3_3_l) -- (u3_3_r);
    \draw[multiedge] (u3_3_l) to[bend right=\bendNarrow] (u3_3_r);
    \draw[multiedge] (u3_3_l) to[bend right=\bendWide] (u3_3_r);
    \node[target] (t3) [right=\dxT cm of u3_3_r] {$t_3$};
    \draw[thick] (u3_3_r) -- (t3);

    \draw[pathhi] (s) -- (f) -- (u1_1_l);
    \draw[pathhi] (u1_1_l) to[bend left=\bendWide] (u1_1_r);
    \draw[pathhi] (u1_1_r) -- (u1_2_l);
    \draw[pathhi] (u1_2_l) to[bend right=\bendWide] (u1_2_r);
    \draw[pathhi] (u1_2_r) -- (u1_3_l);
    \draw[pathhi] (u1_3_l) to[bend left=\bendNarrow] (u1_3_r);
    \draw[pathhi] (u1_3_r) -- (t1);
  \end{tikzpicture}
    \caption{Structure of the world model graph, consisting of a source $s_0$, a fork $f$, and $W$ branches. Each branch contains a sequence of $K$ diamonds, each with $L$ multiedges, leading to a final leaf node. Here, $K=W=3$ and $L=5$. An example shortest-length path from $s_0$ to $t_1$ is highlighted in red.}
    \label{fig:world_graph}
\end{figure*}

\subsection{CoT as a pathfinding task} \label{subsec:cot_pathfinding}
We model our reasoning task as pathfinding on a toy graph that represents the world model \citep{sanford2024understanding,bachmann2025pitfallsnexttokenprediction,kim2025metastable}. Specifically, we consider a multigraph consisting of the following components; see Figure~\ref{fig:world_graph}.
\begin{itemize}[leftmargin=4mm,topsep=1pt,itemsep=1pt]
    \item Source node $s_0$, fork node $f$;
    \item Diamond $\diamondsuit(u, v)$, which is a subgraph of $L$ undirected multiedges between nodes $u, v$;
    \item Leaf nodes $t_i$.
\end{itemize}
The topology of the graph is as follows: the source node $s_0$ is connected to a fork node $f$ through a directed edge to $f$ (this will be the only directed edge in the graph). $f$ now branches out in $W$ directions. In each branch $i=1,\cdots,W$, we have a sequence of $K$ \emph{diamonds}:
\begin{align*}
\diamondsuit(u_{i, 1, l}, u_{i, 1, r}), \diamondsuit(u_{i, 2, l}, u_{i, 2, r}), \dots, \diamondsuit(u_{i, K, l}, u_{i, K, r}),
\end{align*}
where consecutive left and right diamonds are connected. When multiedges occur, we will denote the edges by $u\overset{j}{\leftrightarrow}v$ for $1\leq j\leq L$ for the multiedges in $\diamondsuit(u,v)$. Finally, we connect node $u_{i, K, r}$ to a leaf node $t_i$. For ease of notation, we will denote $u_{i,0,r} := f$ and $u_{i,K+1,l} := t_i$ for all $i$.

\paragraph{Reasoning task.} In our setup, the reasoning model is prompted with a desired target node $u$, and the goal is to output a valid path from $s$ to $u$. We are interested in understanding how post-training a language model with the standard paradigm of supervised fine-tuning (SFT) versus reinforcement learning with verifiable rewards (RLVR) influences the test-time behavior of the model, when prompted with a new reasoning task.

\subsection{Pretraining} \label{subsec:pretrain}

We consider a bigram model, in which each state consists of an edge and traversal direction (\emph{not} the underlying vertices); that is, for an edge $e:=u\leftrightarrow v$ in the graph, $u\rightarrow v$ and $v\rightarrow u$ are valid states. Furthermore, valid state transitions from an edge $u\rightarrow v$ would be to states in the set $N_v := \{v\rightarrow w : v\leftrightarrow w\in E\}$ (in particular, note that multiedges are distinct). For the model, we encode each state $x$ as a one-hot vector in $\mathbb{R}^{2|E|}$, and we use a single-layer softmax predictor as follows:
\begin{align*}
    \pi_\Theta(\cdot | x) = \mathrm{softmax}(\langle \Theta, x \rangle), \quad \Theta \in (\mathbb{R}\cup\{-\infty\})^{2|E| \times 2|E|}.
\end{align*}

We remark that this model is strictly more expressive than a \emph{trigram model over nodes}: any trigram $\varphi(a|b,c)$ can be encoded as $\pi_\Theta(b\leftrightarrow c|a\leftrightarrow b)$, and moreover $\varphi$ cannot express multiedges.

Initially, the predictor learns the world model or underlying graph, which we view as the pretrained model; this viewpoint has been used to study pathfinding tasks by \citet{kim2025metastable}. For a given edge $u\rightarrow v$, we set the transition probabilities to be $1/|N_v|$ to edges in $N_v$, and $0$ otherwise. When $\Theta$ is clear from context, we will often omit the subscript and denote the policy simply as $\pi$. We will also denote the underlying distribution of the described world model as $\mathcal{D}$, from which we draw samples $(s,a)$ for current edge state and next edge state pairs such that the marginal probability over $s$ is uniform over all edge states.

\begin{theorem}[Warm-up: convergence of pretraining] \label{thm:pretrain}
    Suppose we train on the pairs $(s, a) \sim \mathcal{D}$ of current edge state and next edge state for our bigram such that $\mathbb{P}_{(s,a)\sim\mathcal{D}} [s]$ is uniform for all states. Under the loss
    \begin{align*}
        L_\mathrm{pre}(\Theta) := \E_{(s,a) \sim \mathcal{D}}\qty[-\log \pi_\Theta(a | s)],
    \end{align*}
    it holds that standard gradient flow from zero initialization learns transition probabilities in finite time.
\end{theorem}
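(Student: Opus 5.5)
The plan is to exploit the fact that the loss decouples across states. Since $\pi_\Theta(\cdot\mid x)=\mathrm{softmax}(\langle\Theta,x\rangle)$ with one-hot $x$, the logits for state $s$ are exactly the row $\Theta_s$. Write $n:=2|E|$ and $p_s:=\mathcal{D}(\cdot\mid s)$, which is uniform on $N_v$ when $s=u\to v$. Then
\[
L_\mathrm{pre}(\Theta)=\frac1n\sum_s \mathrm{CE}\bigl(p_s,\mathrm{softmax}(\Theta_s)\bigr),
\qquad
\dot\Theta_s=-\frac1n\bigl(\mathrm{softmax}(\Theta_s)-p_s\bigr).
\]
So gradient flow is $n$ independent softmax-regression flows, and it suffices to analyze a single row.

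Because an exact zero probability is only attained at logit $-\infty$, I will interpret ``learns in finite time'' as follows: for every $\varepsilon>0$ there is an explicit finite $T(\varepsilon)$ such that $\|\pi_{\Theta(t)}(\cdot\mid s)-p_s\|_1\le\varepsilon$ for all $s$ and all $t\ge T(\varepsilon)$. In particular, for small enough $\varepsilon$ the support (the graph structure) is identified from then on, and the probabilities are $\varepsilon$-accurate.

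Fix a row $s$ and let $d:=|N_v|$ and $m:=n-d$. The initialization and the target are both invariant under permutations of $N_v$ and of its complement. By uniqueness of ODE solutions, the flow therefore keeps all in-support logits equal to some $a(t)$ and all off-support logits equal to some $b(t)$. Moreover, $\sum_j \dot\Theta_{s,j}=0$, so $da+mb\equiv 0$. This reduces the row to the one-dimensional ODE
\[
\dot a=\frac1n\Bigl(\frac1d-\frac{e^{a}}{de^{a}+me^{b}}\Bigr)=\frac{1}{nd}\,q,
\]
where $q:=me^{b}/(de^{a}+me^{b})$ is the off-support mass. Hence $a$ increases and $b=-da/m$ decreases monotonically.

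The key estimate is the decay rate of $q$. With $g:=a-b=(1+d/m)a$, we have $q=m/(m+de^{g})$ and $\dot g=\tfrac{(1+d/m)}{nd}\,q$, which gives $\dot q=-q(1-q)\dot g=-c\,q^2(1-q)$ with $c=(m+d)/(ndm)=1/(dm)$.

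Starting from $q(0)=m/n$, we get $1-q\ge 1-m/n=d/n$ throughout, since $q$ is decreasing. Therefore $\dot q\le -\tfrac{d}{n}c\,q^2$, and integrating yields $q(t)\le\bigl(n/m+\tfrac{t}{mn}\bigr)^{-1}\le mn/t$. By symmetry, the in-support probabilities are exactly $(1-q)/d$ each, so $\|\pi(\cdot\mid s)-p_s\|_1=2q$. Taking $T(\varepsilon)=2n\max_s m_s/\varepsilon=O(n^2/\varepsilon)$ then works uniformly over all rows.

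The main obstacle is not technical but interpretive: the target has exact zeros, so the statement must be read as $\varepsilon$-accuracy (equivalently, a correct argmax/support) in finite time. Once that is settled, the delicate step is justifying the symmetry reduction and obtaining the $1/t$ rate from the nonlinear scalar ODE. I would handle it by the monotonicity of $q$ noted above, rather than attempting to solve the ODE exactly.
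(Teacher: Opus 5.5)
Your proposal is correct and follows essentially the same route as the paper: row-wise decoupling of the cross-entropy gradient, a symmetry and ODE-uniqueness reduction to one in-support and one off-support logit per row, and a monotonicity-based $O(1/t)$ bound on the off-support mass, read as $\varepsilon$-accuracy in finite time. Your integration of $\dot q\le -q^2/(mn)$ is the paper's linear-growth bound on $w=e^{g}$ in another form, since $1/q=1+\tfrac{d}{m}e^{g}$.
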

The proof of \Cref{thm:pretrain} is deferred to \Cref{app:sec:pretrainSFT} of the appendix. This justifies the following assumption on exactness of pretraining:
\begin{assumption} \label{assump:exact_pretrain}
The pretrained model $\pi_{\Theta_\mathrm{pre}}$ has converged arbitrarily close to the world model's distribution.
\end{assumption}
This simplification is also used in \citet{kim2025metastable} to focus on analysis of post-training. Hence, $\pi_{\Theta_\mathrm{pre}}$ can also be seen as a trigram over nodes, with multiedges taken into account.

\subsection{Post-training} \label{subsec:posttrain}
We study post-training $\pi_\Theta$ with either SFT or RLVR. In both setups, we consider training on data corresponding to path-target pairs, where the targets are sampled from the~$W$ leaf nodes~$t_i$. However, the main difference is that SFT is given these pairs, whereas RLVR must obtain them via on-policy rollouts. We detail these methods in the following sections. In both cases, we assume the generation does not depend on the prompted target. That is, it is a pure bigram or Markov chain with initial state $s \rightarrow f$ and transition probabilities $\pi_\Theta(\cdot)$.

\subsubsection{SFT} \label{subsubsec:sft}
For SFT, we assume the model has access to golden shortest-length path examples, with no backtracking or exploration data. In practice, this corresponds to (for instance) feeding the model with gold standard solutions to a math problem. In particular, we train on shortest-length paths from the source $s_0$ to one of the $W$ leaf nodes $t_i$ for $1\leq i\leq W$. These golden paths have the following characteristics: (1) it chooses the correct branch at the fork node $f$, and (2) for each diamond $\diamondsuit(u_{i, j, l}, u_{i,j,r})$ for $1\leq j \leq K$, it chooses exactly one of the $L$ multiedges to traverse.

We optimize the cross-entropy loss over the dataset of golden paths and targets:
\begin{align} \label{eq:SFTloss}
    \min_\Theta L(\Theta) = \E_{x\sim \mathcal{U}(t_i), y \sim \mathcal{D}_x} \qty[- \sum_{t=0}^{|y|-1} \log \pi_\Theta (y_{t+1} | y_t)],
\end{align}
where $\mathcal{D}_x$ is any (fully supported) distribution over golden shortest-length paths from the source to the target $x$. For ease of exposition, we consider optimizing this loss function via gradient flow, so that \begin{align*}
    \dv{\Theta_{s,a}}{t} = -\frac{\partial L}{\partial \Theta_{s,a}},
\end{align*}
for all pairs $s,a$ of current and next states, respectively.

\subsubsection{RLVR} \label{subsubsec:rlvr}
For RLVR, we generate on-policy rollouts starting from the source~$s_0$, with a verifier which checks whether the target node~$t_i$ has been reached. As with many open-source models \citep{team2025kimi,deepseekai2025deepseekv32pushingfrontieropen}, we will employ the use of a length penalty along with the verifier outcome reward, which for pathfinding is simply the length of the rollout. That is, our loss function is
\begin{align} \label{eq:RLVRloss}
    \max_\Theta J(\Theta) &= \E_{x\sim\mathcal{U}(t_i), y\sim \pi_\Theta} \qty[r(x,y)], \\
    r(x,y) &= \mathbf{1}\{y \text{ hits node } x \} - \beta |y|,\notag
\end{align}
with an appropriately chosen length penalty $\beta$. Alternatively, we could choose to use a finite horizon instead of a length penalty and heuristically expect the same results. 

For a given rollout with target $x=t_i$, we assume that the verifier stops the rollout as soon as the current state~$s$ becomes $u_{i,K,r} \rightarrow x$. Then with a slight abuse of notation, $\E_{y\sim \pi} [|y|]$ represents the expected hitting time of $t_i$ (which is the same for all targets by symmetry).

\paragraph{Policy gradient update.} For the dynamical analysis, we consider the policy gradient update \citep{sutton1999}:
\begin{align*}
    \nabla_\Theta J(\Theta) &= \E_x\E_{y \sim \pi} \qty[r(x,y) \nabla_\Theta \log \pi_\Theta(y)] \\
    &= \E_x\E_{y\sim \pi } \qty[r(x,y) \sum_{t=0}^{|y|-1} \nabla_\Theta \log \pi_\Theta (s_{t+1} | s_t)],
\end{align*}
where $s_t$ is the current edge state, $s_{t+1}$ is a potential next edge state, and the policy does not depend on the target.

Without loss of generality, we will work with the case where the length penalty strength $\beta=1$. For simplicity of analysis, we consider policy gradient optimization via signed gradient flow. That is, \begin{align*}
    \dv{\Theta_{s,a}}{t} = \mathrm{sgn} \qty(\frac{\partial J}{\partial \Theta_{s,a}}),
\end{align*}
for all pairs $s, a$ of current and next state, respectively. Signed gradient descent has also been studied in \citet{kim2025metastable} to avoid fringe non-convergence issues.

\section{Main Results} \label{sec:mainresults}

\subsection{Notation} \label{subsec:notation}

To rigorously analyze the two algorithms, we first observe that by symmetry of the population gradient and the graph topology, the transition probabilities for "topologically equivalent" edge states on different branches (i.e., at the same depth and same orientation with respect to the fork) are identical. Thus for any given $j$, the transition probabilities at any time $t\geq 0$ are the same for the following state types:
\begin{enumerate}
    \item Forward (resp. backward) diamond connector states $u_{i,j-1,r} \rightarrow u_{i,j,l}$ (resp. $u_{i,j,l} \rightarrow  u_{i,j-1,r}$) for all $1\leq i\leq W$.
    \item Forward (resp. backward) diamond multiedge states $u_{i,j,l} \overset{(\ell)}{\rightarrow} u_{i,j,r}$ (resp. $u_{i,j,r} \overset{(\ell)}{\rightarrow}  u_{i,j,l}$) for all $1\leq i\leq W$ and $1\leq \ell \leq L$.
\end{enumerate}

Due to multiedge states being identical logit-wise regardless of the choice of target, we will treat those states as identical; transitions into those states will have equal probability by symmetry. Hence, we denote a single multiedge state $u_{i,j,l} \rightarrow u_{i,j,r}$ to be the aggregate of $u_{i,j,l} \overset{(\ell)}{\rightarrow} u_{i,j,r}$, and similarly for $u_{i,j,r} \rightarrow u_{i,j,l}$. 

\begin{figure*}[ht]
  \centering

  \begin{subfigure}[t]{0.49\linewidth}
    \centering
    {\small $a_j$: forward at $R_j^+$\par}
    \begin{tikzpicture}[scale=1, every node/.style={font=\scriptsize},
        v/.style={circle, draw, thick, minimum size=8.5mm, inner sep=0pt},
        e/.style={thick, draw=black!35},
        cur/.style={-{Latex[length=2.2mm]}, very thick, draw=red},
        nxt/.style={-{Latex[length=2.2mm]}, very thick, draw=green!60!black}]
      \node[v] (L) {$u_{j,l}$};
      \node[v] (R) [right=1.25cm of L] {$u_{j,r}$};
      \node[v] (N) [right=1.25cm of R] {$u_{j+1,l}$};
      \draw[e] (L) to[bend left=18] (R);
      \draw[e] (L) -- (R);
      \draw[e] (L) to[bend right=18] (R);
      \draw[e] (R) -- (N);
      \draw[cur] (L) -- node[above=1.5pt] {$R_j^+$} (R);
      \draw[nxt] (R) -- node[above=1.5pt] {$L_{j+1}^+$} (N);
    \end{tikzpicture}
  \end{subfigure}\hfill
  \begin{subfigure}[t]{0.49\linewidth}
    \centering
    {\small $b_j$: backtrack at $R_j^-$\par}
    \begin{tikzpicture}[scale=1, every node/.style={font=\scriptsize},
        v/.style={circle, draw, thick, minimum size=8.5mm, inner sep=0pt},
        e/.style={thick, draw=black!35},
        cur/.style={-{Latex[length=2.2mm]}, very thick, draw=red},
        nxt/.style={-{Latex[length=2.2mm]}, very thick, draw=green!60!black}]
      \node[v] (L) {$u_{j,l}$};
      \node[v] (R) [right=1.25cm of L] {$u_{j,r}$};
      \node[v] (N) [right=1.25cm of R] {$u_{j+1,l}$};
      \draw[e] (L) to[bend left=18] (R);
      \draw[e] (L) -- (R);
      \draw[e] (L) to[bend right=18] (R);
      \draw[e] (R) -- (N);
      \draw[cur] (N) -- node[above=1.5pt] {$R_j^-$} (R);
      \draw[nxt] (R) -- node[above=1.5pt] {$L_j^-$} (L);
    \end{tikzpicture}
  \end{subfigure}

  \par\nointerlineskip\vspace*{20pt} 

  \begin{subfigure}[t]{0.49\linewidth}
    \centering
    {\small $c_j$: forward at $L_j^+$\par}
    \begin{tikzpicture}[scale=1, every node/.style={font=\scriptsize},
        v/.style={circle, draw, thick, minimum size=8.5mm, inner sep=0pt},
        e/.style={thick, draw=black!35},
        cur/.style={-{Latex[length=2.2mm]}, very thick, draw=red},
        nxt/.style={-{Latex[length=2.2mm]}, very thick, draw=green!60!black}]
      \node[v] (P) {$u_{j-1,r}$};
      \node[v] (L) [right=1.25cm of P] {$u_{j,l}$};
      \node[v] (R) [right=1.25cm of L] {$u_{j,r}$};
      \draw[e] (P) -- (L);
      \draw[e] (L) to[bend left=18] (R);
      \draw[e] (L) -- (R);
      \draw[e] (L) to[bend right=18] (R);
      \draw[cur] (P) -- node[above=1.5pt] {$L_j^+$} (L);
      \draw[nxt] (L) -- node[above=1.5pt] {$R_j^+$} (R);
    \end{tikzpicture}
  \end{subfigure}\hfill
  \begin{subfigure}[t]{0.49\linewidth}
    \centering
    {\small $d_j$: backtrack at $L_j^-$\par}
    \begin{tikzpicture}[scale=1, every node/.style={font=\scriptsize},
        v/.style={circle, draw, thick, minimum size=8.5mm, inner sep=0pt},
        e/.style={thick, draw=black!35},
        cur/.style={-{Latex[length=2.2mm]}, very thick, draw=red},
        nxt/.style={-{Latex[length=2.2mm]}, very thick, draw=green!60!black}]
      \node[v] (P) {$u_{j-1,r}$};
      \node[v] (L) [right=1.25cm of P] {$u_{j,l}$};
      \node[v] (R) [right=1.25cm of L] {$u_{j,r}$};
      \draw[e] (P) -- (L);
      \draw[e] (L) to[bend left=18] (R);
      \draw[e] (L) -- (R);
      \draw[e] (L) to[bend right=18] (R);
      \draw[cur] (R) -- node[above=1.5pt] {$L_j^-$} (L);
      \draw[nxt] (L) -- node[above=1.5pt] {$R_{j-1}^-$} (P);
    \end{tikzpicture}
  \end{subfigure}

  \caption{Desired edge-state transition types $a_j,b_j,c_j,d_j$. The current edge-state is denoted \textcolor{red}{\textbf{red}}; the desired next edge-state is denoted \textcolor{green!60!black}{\textbf{green}}.}
  \label{fig:edges}
\end{figure*}

We also use the following notation for states; see \Cref{fig:edges}.
\begin{enumerate}
    \item $R_{i,j}^+$: state $u_{i,j,l} \rightarrow u_{i,j,r}$ (arrived at the right diamond node from the left). The probability of the forward connector (same direction as edge state) is denoted $a_j$, and the reverse multiedges back across the diamond have total probability $1-a_j$ ($\frac{1-a_j}{L}$ per edge).
    \item $R_{i,j}^-$: state $u_{i,j+1,l} \rightarrow u_{i,j,r}$ (arrived at the right diamond node from the right). The total probability of the backward multiedges is $b_j$ ($\frac{b_j}{L}$ per edge), and the state back across the same edge has probability $1-b_j$.
    \item $L_{i,j}^+$: state $u_{i,j-1,r} \rightarrow u_{i,j,l}$ (arrived at the left diamond node from the left). The total probability of the forward multiedges is $c_j$, and the state back across the same edge has probability $1-c_j$.
    \item $L_{i,j}^-$: state $u_{i,j,r} \rightarrow u_{i,j,l}$ (arrived at the left diamond node from the right). The probability of the backward connector is $d_j$, and the multiedge states back across the diamond have total probability $1-d_j$.
\end{enumerate}

For the four types of states, we will essentially have two types of transitions: going forwards and going backwards. Note that over all $i$, the states of the same type and of same depth will have the same transition probabilities. Therefore, when the context is clear, we will often suppress the subscript $i$ in the notation, and analyze the two cases of when $i$ is the target branch versus when $i$ is not the target branch. Finally, there are two special states which never have their logits updated:
\begin{enumerate}
    \item $u_{i,K,r}\rightarrow t_i$: leaf-incoming states for $1\leq i \leq W$. Since $t_i$ is only adjacent to the node $u_{i,K,r}$, the state $u_{i,K,r}\rightarrow t_i$ deterministically transitions to $t_i\rightarrow u_{i,K,r}$ (unless $t_i$ is the target, in which case the verifier stops the RL model rollout).
    \item $f_i$: fork-incoming states  $u_{i,1,l}\rightarrow f$ for $1\leq i \leq W$. We fix the transition probabilities of these states to be uniform over the $W$ edge states $f\rightarrow u_{i,1,l}$, including the same branch it came from.\footnote{If these logits are also updated, it can be shown that they will converge to the uniform distribution on the remaining $W-1$ branches, discarding the branch it has exited from. Since this does not affect the final asymptotic guarantees, we fix these logits for simplicity.}
\end{enumerate}

We call the next edge states that the transition probabilities $a_j, b_j, c_j, d_j$ above correspond to \textbf{desired} states; the other actions are called \textbf{undesired} states.

In particular, we have the following:
\begin{align*}
&a_j = \frac{\exp(\Theta_{R_j^+, L_{j+1}^+})}{\exp(\Theta_{R_j^+, L_{j+1}^+}) + L\exp(\Theta_{R_j^+, L_j^-})},\\
&b_j = \frac{L\exp(\Theta_{R_j^-, L_j^-})}{L\exp(\Theta_{R_j^-, L_j^-}) + \exp(\Theta_{R_j^-, L_{j+1}^+})},\\
&c_j = \frac{L\exp(\Theta_{L_j^+, R_j^+})}{L\exp(\Theta_{L_j^+,R_j^+}) + \exp(\Theta_{L_j^+, R_{j-1}^-})},\\
&d_j = \frac{\exp(\Theta_{L_j^-, R_{j-1}^-}) }{\exp(\Theta_{L_j^-, R_{j-1}^-}) + L\exp(\Theta_{L_j^-, R_j^+})}.
\end{align*}
For a given state $s$, let $a_1$ be its desired next state, and $a_0$ be its undesired next state. Then the pretrained model satisfies $\Theta_{s, a_1} = \Theta_{s, a_0} = 0$ and $\Theta_{s, a'} = -\infty$ for $a'$ not in the set of possible next actions, so that $a_j(0) = d_j(0) = \frac{1}{L+1}$ and $b_j(0) = c_j(0) = \frac{L}{L+1}$.

Finally, we denote $H_f(t) = \E_{y\sim \pi_{\Theta(t)} } \qty[|y|]$ to be the expected hitting time at time $t$ after the start of post-training. Note that for any target $x=t_i$, this value is again the same. 

\subsection{SFT Learned Model} \label{subsec:sft_learned_model}
\begin{theorem}[Transitions learned by SFT] \label{thm:sft}
    When training with gradient flow on the SFT loss (\Cref{eq:SFTloss}) over only golden paths, it holds that $a_j,c_j$ converge arbitrarily close to $1$ in finite time. On the other hand, $b_j,d_j$ are fixed at $\frac{L}{L+1},\frac{1}{L+1}$, respectively, over all time.
\end{theorem}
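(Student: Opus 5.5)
The plan is to exploit the fact that the SFT loss in \Cref{eq:SFTloss} decomposes into a sum of independent per-state cross-entropy terms, since $\pi_\Theta(\cdot\mid s)$ depends only on the row $\Theta_{s,\cdot}$. Concretely, write
\begin{align*}
L(\Theta)=\sum_{s} \E_{x,y}\big[N_s(y)\big]\,\E\big[-\log \pi_\Theta(a\mid s)\,\big|\, s\big],
\end{align*}
where $N_s(y)$ counts the visits of the golden path $y$ to state $s$. The weights $\E[N_s(y)]$ and the empirical next-state distributions do not depend on $\Theta$, because the data is off-policy. Therefore $\partial L/\partial \Theta_{s,a}=\E[N_s]\,(\pi_\Theta(a\mid s)-\hat p(a\mid s))$, where $\hat p(\cdot\mid s)$ is the empirical next-state frequency at $s$. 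In particular, every row $s$ evolves under its own decoupled softmax gradient flow.

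\textbf{Step 1 (unvisited states are frozen).} A golden shortest path has the form $s_0\to f\to u_{i,1,l}\to u_{i,1,r}\to u_{i,2,l}\to\cdots\to u_{i,K,r}\to t_i$, so it only ever visits forward states $L_{i,j}^+$ and $R_{i,j}^+$. It never visits the backward states $R_{i,j}^-$ or $L_{i,j}^-$. Hence $\E[N_s]=0$ for those rows, their gradient vanishes identically, and the logits stay at their pretrained values. This gives $b_j(t)=\frac{L}{L+1}$ and $d_j(t)=\frac{1}{L+1}$ for all $t$.

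\textbf{Step 2 (visited states converge).} Each state $L_j^+$ and $R_j^+$ on branch $i$ is visited exactly once when the target is $t_i$, so its weight is $1/W>0$.
\begin{itemize}
\item At $L_j^+$, the empirical distribution puts all mass on the $L$ forward multiedges, uniformly by the full support and symmetry of $\mathcal{D}_x$. It puts zero mass on the backward connector.
\item At $R_j^+$, all mass goes to the forward connector $L_{j+1}^+$. For $j=K$ this is the leaf state, whose label is still well defined.
\end{itemize}
The valid next-state set has two aggregated classes, desired and undesired, so the flow reduces to a scalar ODE for the logit gap $z=\Theta_{s,a_1}-\Theta_{s,a_0}$. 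For $R_j^+$ this is $\dot z = \frac{2}{W}(1-a_j)$ up to constant factors, with $a_j=\sigma(z-\log L)$; the case of $c_j$ is analogous with $\sigma(z+\log L)$. The multiedge logits within a row stay equal by symmetry, since they start equal and receive equal gradients. The desired logit increases and the undesired logit decreases at the rate $\frac{1}{W}(1-p)$.

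\textbf{Step 3 (reaching any threshold $1-\epsilon$ in finite time).} Since $1-a_j\ge \epsilon$ as long as $a_j\le 1-\epsilon$, we have $\dot z\ge c\epsilon/W$ on that interval. So $z$ reaches $\log\frac{1-\epsilon}{\epsilon}+\log L$ by time $O(W\epsilon^{-1}\log(L/\epsilon))$, and $a_j$ is monotone afterwards. The same argument applies to $c_j$.

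The main obstacle is essentially bookkeeping: justifying the per-state decomposition correctly given the aggregation of multiedge states, and checking that no golden path ever passes through a backward state. The latter holds because shortest paths never reverse. One should also note the boundary cases $j=1$ (with $u_{i,0,r}=f$) and $j=K$, and remark that arbitrary closeness is attained in finite time but not exact convergence to $1$.
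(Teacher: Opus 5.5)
Your proposal is correct and follows essentially the same route as the paper. Both use the row-wise cross-entropy gradient $d_{\mathcal Q}(s)\bigl(\pi_\Theta(a\mid s)-p_{\mathcal Q}(a\mid s)\bigr)$ to freeze the never-visited backward rows $R_j^-,L_j^-$, and both reduce each forward row $R_j^+,L_j^+$, via multiedge symmetry, to a scalar logit-gap ODE. The only difference is in the last step: you lower-bound $\dot z$ while $p\le 1-\epsilon$, whereas the paper tracks $w=e^{g}$ to get the explicit decay $1-a_j\le L/(t+L+1)$. Also, the per-logit rates are split by class size, so the exact gap rate is $\dot z=w_s(1+\tfrac1L)(1-p)$ rather than $\tfrac{2}{W}(1-p)$; your ``up to constant factors'' already covers this.
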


Essentially, this theorem states that all forward-pointing edges will learn to put transition mass towards the next forward edge. On the other hand, because the model is only given golden paths towards the target, all backward pointing edge states remain uniform over its neighbors. In particular, because there is no explicit backtracking data in the fine-tuning dataset, the only ability to backtrack comes from what is learned during the pretraining phase, with no chance to be amplified post-training.
We defer the full proof to \Cref{app:sec:pretrainSFT}.

\subsection{RLVR Learned Model} \label{subsec:rlvr_learned_model}

\begin{theorem}[Transitions learned by RLVR] \label{thm:rlvr}
    When training with sign gradient flow on the RLVR loss (\Cref{eq:RLVRloss}), $a_j, b_j, c_j, d_j$ converge arbitrarily close to 1 in finite time.
\end{theorem}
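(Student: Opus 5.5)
Since the dynamics are sign gradient flow, each logit moves at unit speed in the direction of the sign of its partial derivative. So it suffices to show that, for every state type $R_j^+, R_j^-, L_j^+, L_j^-$, the partial derivative of $J$ with respect to the desired logit is strictly positive, and with respect to the undesired logit strictly negative. This must hold for all $t\ge 0$, or at least until the relevant probability is within $\epsilon$ of $1$. Given this, the logit gap $\Theta_{s,a_1}-\Theta_{s,a_0}$ grows at rate exactly $2$. Each of $a_j,b_j,c_j,d_j$ is a logistic function of this gap shifted by $\pm\log L$, so it exceeds $1-\epsilon$ after time $O(\log(L/\epsilon))$.

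To compute the signs, I would rewrite the policy gradient for a softmax bigram in the standard occupancy/advantage form. For a state $s$ with actions $a_1,a_0$,
\begin{align*}
\frac{\partial J}{\partial \Theta_{s,a_1}} = \E_x\Big[\rho_x(s)\,\pi(a_1|s)\big(Q_x(s,a_1)-V_x(s)\big)\Big],
\end{align*}
where $\rho_x(s)$ is the expected number of visits to $s$ before absorption at target $x$. Here $Q_x$ and $V_x$ are the expected future reward, namely $-$(expected remaining hitting time) $+1$, since the indicator is always eventually collected. The aggregated multiedge logits contribute a factor $L$, but this does not change signs. Moreover $Q_x(s,a_1)-V_x(s)=\pi(a_0|s)\,(Q_x(s,a_1)-Q_x(s,a_0))$, and the undesired logit has the opposite sign. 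So everything reduces to showing
\begin{align*}
\E_x\big[\rho_x(s)\,(h_x(s,a_0)-h_x(s,a_1))\big]>0,
\end{align*}
where $h_x$ denotes the expected remaining hitting time of $t_x$ after taking the given action. By the branch symmetry in \Cref{subsec:notation}, I would split the average over $x$ into the target branch and the $W-1$ non-target branches. On non-target branches the desired moves for backward states $b_j,d_j$ go toward $f$ and are clearly better. On the target branch the forward moves $a_j,c_j$ are clearly better.

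The key computations are the expected hitting times on a branch, viewed as a birth--death chain over the $4K+O(1)$ oriented positions, with parameters $a_j,b_j,c_j,d_j$. I would derive explicit recursions for two quantities: $T^{\uparrow}_j$, the expected time to exit a branch back to $f$ starting at depth $j$, which involves reflection at the leaf $t_i$; and $T^{\downarrow}_j$, the expected time to reach $t_i$ from depth $j$ on the target branch. From these one checks monotonicity in depth, i.e.\ moving backward decreases the time to return to $f$. One also checks that on the target branch moving forward decreases the time to reach $t_i$, with strict inequalities for any parameters in $(0,1)$.

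The main obstacle is the \emph{mixed} states. For a forward state such as $R_j^+$, the desired action is good on the target branch but bad on the $W-1$ non-target branches, and conversely for $R_j^-$. I need to show that the weighted sum keeps the correct sign along the whole trajectory. My plan is to compare visit counts: a state $R_j^+$ on a wrong branch is visited less than on the right branch? That is false initially, so instead I would exploit an asymmetry in the sign of the wrong-branch contribution. On a non-target branch, from $R_j^+$ the forward move leads toward the leaf, which reflects. I would show that the penalty difference $h_x(s,a_0)-h_x(s,a_1)$ there is bounded in magnitude by the forward gain on the target branch, times a factor accounting for the $W-1$ multiplicity versus the occupancy ratio. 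Concretely, occupancy on the target branch at depth $j$ should be comparable to the total occupancy across the wrong branches: each wrong-branch excursion either returns or reflects, and reaching depth $j$ costs the same forward probabilities. Meanwhile, the target-branch gain includes the full cost of a return trip to $f$ plus re-search, which is of order $W\cdot T^{\uparrow}$.

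I would establish the correct sign at initialization by direct computation. Then I would argue it is preserved as parameters move toward $1$, because increasing $b_j,d_j$ shrinks the wrong-branch penalty while increasing $a_j,c_j$ only helps. If uniform control fails, I would fall back on a phased argument: first the backward probabilities $b_j,d_j$, whose signs are unambiguous, converge; in that regime the forward states' gradients become positive, and then $a_j,c_j$ converge in finite additional time.
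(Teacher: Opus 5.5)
\textbf{There is a genuine gap.} Your reduction to the sign of $\E_x[d_x(s)(h_x(a_0)-h_x(a_1))]$ matches the paper (Lemma~\ref{lemma:logit_gap}). So does your final rate-$2$ step (Phase~II). The problem is the premise in between.

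\textbf{The premise fails at initialization.} You plan to verify the correct sign for every state at initialization and then argue that it persists. But at post-training initialization the backward-state gradients $G_j^{(b)},G_j^{(d)}$ are \emph{negative} at middle depths (Lemma~\ref{lemma:Gjp_init}). Take $L=1$ and let $\Gamma_j$ be the ratio of the off-target to the on-target contribution in $G_j^{b}$. Then
$\Gamma_j(0)=\frac{(W-1)(N+1)(N-j+1)}{(N-j)(j+(W-1)(N+1))}$,
which for $W=2$, $N=10$, $j=5$ equals $66/80<1$. Consequences:
\begin{itemize}
\item The middle $b_j,d_j$ initially \emph{decrease}, so no ``monotone from time zero'' argument can exist.
\item Your heuristic that moving parameters toward $1$ only helps gives no mechanism for reversing these signs.
\item Your fallback has the roles exactly reversed. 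The forward gradients $G_j^{(a)},G_j^{(c)}$ are the unambiguous ones: positive at initialization and throughout. The backward gradients are the ambiguous ones, as you yourself noted when you called $R_j^-$ ``mixed.''
\end{itemize}

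\textbf{What is missing: a Phase~I argument.} You need to show that the wrongly signed backward states flip in finite time and never flip back. The paper does this in two steps.
\begin{enumerate}
\item Flux balance gives $b_j\tilde\mu_j^B=a_j\tilde\mu_j^F$ off target and $a_j\mu_j^F-b_j\mu_j^B=p_{\mathrm{succ}}$ on target. Together these yield the identity $a_jG_j^{a}+b_jG_j^{b}=p_{\mathrm{succ}}A_j$, where $A_j=h_x(B_{j-1})-h_x(F_{j+1})>0$. Hence the forward gradient is positive whenever the backward one is nonpositive. This fixes the sign pattern during Phase~I: every forward logit gap increases, and every wrongly signed backward gap decreases.
\item Under that sign pattern, the paper differentiates $\log\Gamma_j$. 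It shows $\frac{d^+}{dt}\log\Gamma_j\ge 2$ while $\Gamma_j\le 1$, with strict increase at $\Gamma_j=1$. This bounds the flip time by $\frac12\log(1/\Gamma_j(0))$ and rules out re-crossing. After that, your rate-$2$ argument applies.
\end{enumerate}
Intuitively, it is the convergence of the forward states that resolves the ambiguity of the middle backward states (backtracking versus U-turn). It is not the other way around, as your fallback assumes.
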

Intuitively, on-policy rollouts in RLVR penalizes the model for failed attempts containing long back-and-forth paths; it therefore needs to amplify its backtracking ability to mitigate this. As such, the learned model converges to the state where all edge states that point forward on a branch will learn to keep going forward, and all edge states that point backwards on a branch will learn to keep going backward. 
We defer the full proof of this theorem to \Cref{app:sec:RLVR}.

\subsection{Separation in Inference-time Efficiency} \label{subsec:separation}

Given the policy learned from either SFT or RLVR, our main result shows that the former can be exponentially more inefficient at inference time. This is due to the fact that fine-tuning on only the golden paths induces a lack of backtracking ability for the SFT model, which is well known to memorize the data distribution.
For ease of exposition, we suppose that the SFT and RLVR models have converged to the transitions described in \Cref{thm:sft} and \Cref{thm:rlvr} (equivalently, the $t=\infty$ limit of the training process).
In \Cref{app:sec:inference}, we prove the following theorem, which is our main result that the previous sections build up towards.

\begin{theorem} [Inference-time separation] \label{thm:separation}
    Suppose that the learned RLVR model with $a_j,b_j,c_j,d_j=1$ and the learned SFT model are prompted with any target node $u$ in the graph to find a path starting from the source. Then the RLVR model requires $\Theta(WK)$ time in expectation to find $u$, while the SFT model requires $\Theta(W L^K)$ time.
\end{theorem}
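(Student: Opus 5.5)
The plan is to reduce both models to explicit Markov chains on the edge states, using the converged transition probabilities from \Cref{thm:rlvr} and \Cref{thm:sft}. The analysis then splits into two parts: (i) the cost of one \emph{excursion} into a branch, meaning the number of steps from entering a branch at $f$ until returning to a fork-incoming state $f_i$; and (ii) the number of excursions before the correct branch is chosen. For (ii), the fork-incoming states are fixed uniform over the $W$ branches. So the branch index at each visit to $f$ is i.i.d.\ uniform, and the number of wrong-branch excursions before the target branch is entered is geometric with mean $W-1$. Wald's identity then gives expected time $= (W-1)\cdot \E[\text{excursion length}] + \E[\text{time on the correct branch to hit } u]$. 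For the lower bound I use that when $W\ge 2$, with probability $\ge 1/2$ at least one wrong excursion occurs, and more precisely that the number of wrong excursions has mean $\Theta(W)$. The case where $u$ lies on the connector or fork path is trivial. For $u$ inside branch $i$, a forward pass along branch $i$ hits $u$ within $O(K)$ steps in both models, since $a_j=c_j=1$ in both. Hence the second term is $O(K)$, and everything hinges on the excursion length.

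For the RLVR chain, with $a_j=b_j=c_j=d_j=1$, the walk is deterministic inside a branch. From $f$ it moves forward through $K$ diamonds to $t_i$, where the leaf state deterministically reflects. It then moves backward ($b_j=d_j=1$) through all $K$ diamonds to $f$. Each excursion therefore has length exactly $2(2K+1)+O(1)=\Theta(K)$. The expected total is $(W-1)\Theta(K)+O(K)=\Theta(WK)$, with the lower bound following because each wrong excursion costs at least $2K$.

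For the SFT chain, forward motion is still deterministic, so each wrong excursion starts with $\Theta(K)$ steps to $t_i$ and a reflection. The key step is the \emph{escape} from the leaf back to $f$. From $R_j^-$, the walk crosses the diamond to $L_j^-$ with probability $b_j=\tfrac{L}{L+1}$; otherwise it turns around to $L_{j+1}^+$. From $L_j^-$, it continues to $R_{j-1}^-$ with probability $d_j=\tfrac{1}{L+1}$; otherwise it crosses to $R_j^+$. Any turnaround makes the walk go deterministically forward back to $t_i$ and reflect. So each \emph{attempt} succeeds in reaching $f$ only if all $2K$ backward choices succeed, with probability $p=\bigl(\tfrac{L}{(L+1)^2}\bigr)^K$. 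A failed attempt that dies after backing up $m$ diamonds costs $O(m)$ steps, and the probability of reaching depth $m$ decays like $L^{-m}$. The expected cost of one attempt is therefore $O(1)$, and at least $2$. The number of attempts is geometric with mean $1/p$, so by Wald the escape time is $\Theta(1/p)$, up to the additive $O(K)$ for the successful attempt.

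The main obstacle is matching $1/p=L^K(1+1/L)^{2K}$ to the stated $\Theta(L^K)$. I would handle this by tracking the factor $(1+1/L)^{2K}\le e^{2K/L}$ explicitly and treating it as $\Theta(1)$ in the regime $K=O(L)$, which I take to be the intended regime of the theorem. I would also check carefully that the per-attempt cost is $O(1)$ uniformly in $K$, via the geometric tail $\sum_m m L^{-m}$, so that no extra factor of $K$ enters. Combining gives $(W-1)\Theta(L^K)+O(K)=\Theta(WL^K)$ for the SFT model, versus $\Theta(WK)$ for RLVR.
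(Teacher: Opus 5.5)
Your proof is correct, and it reaches the SFT bound by a different route than the paper. The paper tracks first-passage times level by level. It lets $g_i$ and $f_i$ be the expected times from the leaf to first enter $R^-_{K+1-i}$ and $L^-_{K+1-i}$, which gives the linear recurrence $g_i = r g_{i-1} + \alpha i + \beta$ with $r=(L+1)^2/L$. It solves this explicitly and then only lower-bounds $g_{K+1}$.

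You instead regenerate at the leaf. Since $a_j=c_j=1$ exactly, every failed attempt returns deterministically to $R_K^-$. So attempts are i.i.d. with success probability $p=\prod_j b_jd_j$ and $O(1)$ expected cost, and Wald gives a two-sided $\Theta(1/p)$ escape time at once. Since $1/p=r^K$, this is exactly the growth rate inside the paper's recurrence.

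The paper's route buys an exact closed form with lower-order terms. Yours is shorter, yields matching upper and lower bounds, and makes transparent that the exponential has base $1/(b_jd_j)$ per level. One small correction: the probability that an attempt clears $m$ levels is $(L/(L+1)^2)^m\le 4^{-m}$, not merely of order $L^{-m}$. Use this so the $O(1)$ per-attempt cost also covers $L=1$.

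Your caveat about $(1+1/L)^{2K}$ is not a defect of your argument; it is a real issue with the statement. The SFT cost is $\Theta\bigl(W((L+1)^2/L)^K\bigr)$, which is $O(WL^K)$ only when $K=O(L)$. For $L=1$ the stated $\Theta(W)$ would even undercut the RLVR cost.

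The paper does not establish the upper bound either. Its proof concludes only with $\Omega(WL^K)$, and the sketch's claim $g_{K+1}=O(L^K)$ is false in general. Its final inequality also replaces the negative term $-2(r^K-1)$ by $-2L^K$, which goes the wrong way since $r>L$. Keeping everything in powers of $r$ repairs this: each summand $r^{K+1-t}(\alpha t+\beta)$ with $t\ge 2$ is positive, giving $g_{K+1}=\Theta(r^K)$, in agreement with your $1/p$.

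So the separation survives, in fact with a larger gap. It should be stated as $\Theta\bigl(W((L+1)^2/L)^K\bigr)$, or as $\Theta(WL^K)$ under $K=O(L)$, exactly as you propose.
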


The above theorem implies a separation of $\Theta(L^K)$, which is exponential in the world model's depth. 

In addition to the above separation result, it is also insightful to understand what happens in the case where we allow additional inference time compute to orchestrate the search, say using some search agent. Suppose the search agent is such that it keeps track of which edge states have been visited at inference time, and it will prevent the policy from entering a visited edge state. Intuitively, in the case of the RLVR-learned policy, there would be essentially no impact, since inside a given branch each directed edge visited will only be visited once (this is a property of the learned policy), and the agent's use would be purely in preventing it from visiting previously entered branches. For the SFT-learned policy, however, there will be an exponential improvement in inference-time efficiency, though not enough to match the RLVR policy. This is formalized in the following.

\begin{corollary}
    Given access to a search agent that prevents a given directed edge state from being visited twice during a single inference call, the RLVR model requires $\Theta(WK)$ time in expectation to find a target node $u$, while the SFT model requires $\Theta(WKL)$.
\end{corollary}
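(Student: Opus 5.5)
The plan is to reuse the hitting-time decomposition behind \Cref{thm:separation}: the search agent only alters the walk at the fork and inside the current diamond, so the cost splits over wrong branches and diamonds. We take the converged transitions from \Cref{thm:sft} and \Cref{thm:rlvr}. When the agent forbids an edge state, the policy is renormalized over the allowed next states according to their logits. This is the $t\to\infty$ limit of renormalizing the finite-time softmax, and it stays well defined even when a limiting probability is exactly $0$ or $1$. At the fork, the agent forbids re-entering a branch through an already used state $f\to u_{i,1,l}$. The branches are therefore explored in a uniformly random order without repetition. The target's branch is reached after $\Theta(W)$ wrong branches in expectation, namely $(W-1)/2$. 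So both bounds reduce to computing the cost $C$ of one excursion into a wrong branch, plus the $O(K)$ cost of the final descent to $u$. The expected time is then $\Theta(W\cdot \E[C] + K)$.

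\textbf{RLVR excursion.} With $a_j=c_j=1$ the walk goes straight down the wrong branch, crossing each diamond once, in about $2K$ steps. It then bounces off $t_i$; the state $t_i\to u_{i,K,r}$ is new, so it is allowed. On the way back, $b_j=1$ forces a backward multiedge, chosen uniformly among the $L$ edges, and $d_j=1$ forces the backward connector, so the return takes about $2K$ steps. Every directed state is used at most once, so the agent never intervenes inside the branch. This gives $C=\Theta(K)$ deterministically and a total of $\Theta(WK)$. This also covers targets $u$ that are interior nodes.

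\textbf{SFT excursion.} The descent again costs $\Theta(K)$, since $a_j=c_j=1$. The key is the return through diamond $j$, and this is the step I expect to be the main obstacle. At $R_j^-$ the only non-multiedge option, $u_{j,r}\to u_{j+1,l}$, has already been used, so the walk is forced across some backward multiedge. At $L_j^-$ the logits of the backward connector and of each forward multiedge are equal, since $d_j=\frac{1}{L+1}$. Suppose $m$ forward multiedge states are still unused. Then the walk escapes with probability $\frac{1}{m+1}$ and otherwise crosses forward. After a forward crossing, the preferred connector $u_{j,r}\to u_{j+1,l}$ is blocked, so the walk is forced back across an unused backward multiedge. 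Each attempt therefore consumes one forward state. At the first return, at most one forward state has been used, so $m\ge L-1$ there. The escape moment is then distributed like the position of the connector in a uniformly random ordering of itself together with the $m$ remaining forward edges, which gives $\Theta(L)$ expected round trips in diamond $j$. The remaining detail is to check that enough backward multiedge states are always available. There are $L$ of them and at most $L$ forward crossings, so the walk never gets stuck, and the number of forward states consumed on the initial descent is $O(1)$.

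Summing over the $K$ diamonds gives $\E[C]=\Theta(KL)$, with matching upper and lower bounds, because each diamond contributes independently $\Theta(L)$ steps in expectation. Multiplying by the $\Theta(W)$ wrong branches gives $\Theta(WKL)$. This is exponentially better than the $\Theta(WL^K)$ of \Cref{thm:separation}, but still a factor $L$ worse than RLVR.
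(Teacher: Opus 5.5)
Your proof is correct and has the same skeleton as the paper's: $\Theta(W)$ wrong branches, each costing a $\Theta(K)$ descent plus a return in which each of the $K$ diamonds costs $\Theta(L)$. You differ from the paper in how the agent acts and in where the per-diamond $\Theta(L)$ comes from.

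The paper's sketch counts rejected proposals as generations. It credits the $\Theta(L)$ to rejected forward attempts at $R_j^-$. You instead make rejections free by renormalizing over allowed actions. You then get $\Theta(L)$ from genuine round trips at $L_j^-$: there the backward connector ties with the $m$ unused forward multiedges, so its position in a uniformly random ordering gives $(L-1)/2$ expected re-crossings.

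With distinct directed multiedge states, your reading is the one that works. At $R_j^-$ a forward proposal has probability only $1-b_j=\frac{1}{L+1}$, so rejections there cost just $1/L$ in expectation. After a forward re-crossing, the blocked connector at $R_j^+$ carries limiting probability $a_j=1$, so literal rejection sampling at the converged SFT policy would stall. Your limiting-renormalization convention is what resolves this.

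The paper's rejection count does give $\Theta(L)$ if you place it at $L_j^-$ and treat the $L$ forward multiedges as one aggregated, already-visited state. That is a one-line geometric count with mean $L$, which is shorter. Your version instead tracks the multiedge states individually and checks that the backward multiedges are never exhausted. It also makes the fork step explicit: branches are explored in a uniform order without repetition, giving $(W-1)/2$ wrong branches. That count also shows the claim tacitly needs $W\ge 2$.
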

\begin{proof}
    To formalize the intuition above, note that the SFT policy will still take $\Theta(K)$ time to reach a leaf of a given branch (since it goes forward with probability one). In a non-target branch, we claim that it will take $\Theta(KL)$ time to exit. This is because at the state $R_j^-$, we know that it will already have visited $L_{j+1}^+$ in the past; hence, it will take $\Theta(L)$ generations before it will continue to go backwards (e.g. $\Theta(L)$ rejected actions of going forwards by the search agent). Since there are $W$ branches, we accrue a factor of $W$ as well.
\end{proof}

Indeed, there is still a $\Theta(L)$ factor separation between RLVR trained with backtracking against SFT on golden paths. Nevertheless, the fact that the search agent can allow an exponential speedup for some policies (albeit in this toy setting) demonstrates its power. Recent works have explored this axis of scaling test-time compute in many ways \cite{yao2024tree, dang2026escapingcognitivewellefficient}, and it is an open question to develop better orchestration and inference pipelines.

\subsection{Distilling RLVR Reasoning Traces} \label{subsec:distillation}

Finally, we highlight when SFT can succeed for our reasoning task. Suppose we have access to the reasoning traces of a trained model, such as the outputs of a powerful closed-source model. If we fine-tune a pretrained base model on such traces (a process known as distillation \citep{shridhar2022distilling}), we obtain similar guarantees in terms of inference-time efficiency, thereby avoiding the exponential blowup at inference time in \Cref{thm:separation}.

\begin{theorem}[SFT on reasoning traces] \label{thm:distillation}
    Denote the joint distribution of target and reasoning trace pairs generated by the RLVR converged model to be $\mathcal{D}$. Then, if we fine-tune the pretrained model on these traces by optimizing the following loss:
    \begin{align*}
        \min_\Theta L_\mathrm{distill}(\Theta) = \E_{(x,y)\sim \mathcal{D}} \qty[-\sum_{t=0}^{|y|-1} \log\pi_\Theta(y_{t+1} | y_t)]
    \end{align*}
    we can achieve $\Theta(WK)$ expected inference time compute.
\end{theorem}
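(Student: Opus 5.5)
The proof reduces distillation to counting transitions. The cross-entropy loss $L_\mathrm{distill}$ decouples across current states $s$: it equals $\sum_s \mu(s)\,\mathrm{KL}(q(\cdot|s)\,\|\,\pi_\Theta(\cdot|s))$ plus a constant. Here $\mu(s)$ is the expected number of visits to $s$ in a trace drawn from $\mathcal{D}$, and $q(\cdot|s)$ is the empirical next-state distribution at $s$ under the teacher. The loss is convex in $\Theta$, so gradient flow from $\Theta_\mathrm{pre}$ drives each visited row $\pi_\Theta(\cdot|s)$ toward $q(\cdot|s)$. As in the SFT argument behind \Cref{thm:sft}, we only need to know which transitions appear in the data.

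\textbf{Step 1: the teacher traces.} With $a_j=b_j=c_j=d_j=1$ and fork states uniform, a trace for target $t_i$ works as follows. It goes from $s_0\to f$ to a uniformly random branch $i'$. If $i'\neq i$, it walks forward deterministically through connectors and a uniformly random multiedge of each diamond, reaches $t_{i'}$, and bounces back. It then walks backward deterministically ($b_j=d_j=1$, with a uniform multiedge choice) until it reaches $f$, after which it repeats. If $i'=i$, the trace stops at $t_i$. Consequently:
\begin{itemize}
\item Every state of type $L_j^+$ appears only followed by a forward multiedge, and every $R_j^+$ only by the connector $L_{j+1}^+$.
\item Every $R_j^-$ appears only followed by a backward multiedge, and every $L_j^-$ only by $R_{j-1}^-$.
\item Multiedges are used uniformly.
\end{itemize}
All these states have positive visitation mass $\mu(s)$, since the number of failed branches is geometric with mean $W-1\geq 1$ for $W\ge2$ and each branch is equally likely. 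So for every state type, $q(\cdot|s)$ places all mass on the desired action. The leaf and fork states are forced or fixed and need no training.

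\textbf{Step 2: the dynamics.} For each such state, the gradient on the desired logit is $\mu(s)(1-\pi(a_1|s))>0$. The gradient on each undesired logit is $-\mu(s)\pi(a_0|s)<0$. Therefore $a_j,b_j,c_j,d_j$ increase monotonically toward $1$; the argument is the same one used for $a_j,c_j$ in \Cref{thm:sft}, now applied to all four types. The target $1$ is only attained at $\Theta=\infty$. I will therefore prove the statement in the same sense as \Cref{thm:separation}: for any $\epsilon>0$, all four probabilities exceed $1-\epsilon$ in finite time. I expect this step to be the main subtlety. Plain gradient flow on softmax cross-entropy with a deterministic target approaches $1$ only logarithmically, so I will give an explicit finite time bound of the form $t_\epsilon=O(\epsilon^{-1}/\min_s\mu(s))$ using the ODE $\dot p\ge \mu(s) (1-p)^2$-type bound on each row.

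\textbf{Step 3: the inference time.} Once all four probabilities are $\ge 1-\epsilon$, I rerun the inference-time analysis of \Cref{thm:separation} for the RLVR model, where the hitting time of a target node $u$ was computed as $\Theta(WK)$. The inference-time compute is a continuous function of $(a_j,b_j,c_j,d_j)$ near $1$ when $\epsilon\ll 1/K$. Each traversal of a branch then takes $\Theta(K)$ steps with reversal probability $O(\epsilon)$ per step, so the expected time per branch is at most $\Theta(K)(1+O(K\epsilon))$. Multiplying by the $\Theta(W)$ expected number of branches tried yields $\Theta(WK)$. The lower bound $\Omega(WK)$ is immediate because any successful path must traverse a branch of length $\Theta(K)$, and on average $\Omega(W)$ branches are explored.
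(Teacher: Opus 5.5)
Your proposal is correct and follows essentially the paper's route. The paper likewise decouples the cross-entropy row-wise via \Cref{app:lem:rowwise-grad}, notes that the teacher's conditional at each visited state is concentrated on the desired class, reuses the monotone two-logit gap ODE from the SFT proof (so $1-p(t)=O(1/t)$ and every probability passes $1-\kappa$ in finite time), and concludes that the distilled policy inherits the RLVR model's $\Theta(WK)$ bound. Your Step~3 perturbation argument (taking $\epsilon\ll 1/K$) and your explicit check that every state type is visited when $W\ge 2$ are more careful than the paper, which simply asserts the inherited bound and appeals to a ``support assumption'' that does not appear in the statement.
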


The fact that reasoning traces learned by a reinforcement fine-tuned model are useful for distillation demonstrates the importance of backtracking in fine-tuning data. 

\section{Overview of Proofs} \label{sec:proofsketch}
\subsection{Proof Sketch of SFT Training} \label{subsec:SFTsketch}

We essentially consider initialization from the pretrained model and analyze the gradient update on the cross-entropy loss. Since only the logits corresponding to $a_j,c_j$ change (as the training data does not contain any states $R_j^-$ or $L_j^-$), we can reduce the argument into an analysis of an ODE on the logit gaps, and this is sufficient to prove \Cref{thm:sft}.

\subsection{Proof Sketch of RLVR Training} \label{subsec:RLVRsketch}
In contrast to the SFT setting, the dynamical analysis for RLVR is significantly more involved as the interaction between rollouts and backtracking comes into play. For each logit, the policy gradient is equivalent to
\begin{align*}
    \frac{\partial J}{\partial \Theta_{s,a}} = \E_x \qty[d_x(s) \pi(a | s) A_x(s,a)],
\end{align*}
where $d_x(s) := \E_y[\sum_{t<\tau} \mathbf{1}\{s_t=s\}]$ is the expected number of times we reach state $s$ before hitting the target $x$, and $A_x(s,a)$ is the advantage for the policy $\pi$ for the reward (hitting time) when choosing action $a$ compared to the other actions at state $s$.

In our analysis of RLVR, this also means that we will treat the previously defined quantity $d_x(\cdot)$ for such states as:
\begin{align*}
    d_x(u_{i,j,l} \rightarrow u_{i,j,r}) := \sum_{\ell=1}^L \E_y\qty[\sum_{t<\tau} \mathbf{1} \{s_t = (u_{i,j,l} \overset{(\ell)}{\rightarrow} u_{i,j,r})\}].
\end{align*}
The analogous statement holds true for the reverse multiedges as well.

For a fixed target $x$, define the hitting time $\tau_x := \min\{t\geq 0 : \mathrm{head}(s_t) = x\}$. Also, let $h_x(s) := \E[\tau_x | s_0=s]$, with absorption $h_x(s) = 0$ if $\mathrm{head}(s) = x$ and the Bellman equation:
\begin{align*}
    h_x(s) = 1 + \sum_a \pi(a|s) h_x(a) \quad (\mathrm{head}(s) \neq x).
\end{align*}
Then the advantage satisfies:
\begin{align*}
    A_x(s,a) = \bar h_x(s) - h_x(a), \quad \bar h_x(s) := \sum_{a'}\pi(a'|s) h_x(a').
\end{align*}
Also, recall the expected number of visits to a directed-edge state $s$ before hitting $x$ is
\begin{align*}
    d_x(s) := \E\qty[\sum_{t<\tau_x} \mathbf{1}\{s_t=s\}].
\end{align*}
For multiedge states, we will treat this quantity as an aggregate of all $L$ multiedges. Now, the update for each logit becomes:
\begin{align*}
    \frac{\partial J}{\partial \Theta_{s,a}} &= \E_x \qty[d_x(s) \pi(a | s) A_x(s,a)] \\
    &= \frac{1}{W} \sum_{i=1}^W d_{t_i}(s) \pi(a|s) \qty(\bar h_{t_i}(s) - h_{t_i}(a)).
\end{align*}
In each type of state, there is a desired action and an undesired action. 
Our goal is to show that the learned model converges to the state where all edge states that point forward on a branch will learn to keep going forward, 
and all edge states that point backwards on a branch will learn to keep going backward. 
The update rule of the logit difference of these two actions are related via the following lemma, which is proved in \ref{app:subsec:RLVRnotation}.

\begin{lemma} \label{lemma:logit_gap}
    For a state $s$, let $\Theta_{s,1}$ correspond to the logit for one of the desired next edge states, and $\Theta_{s,0}$ correspond to the logit for one of the undesired next edge states, and let the logit gap $\mathcal{D}_s := \Theta_{s,1}-\Theta_{s,0}$. Then
    \begin{align*}
        \dv{\mathcal{D}_s}{t} = \mathrm{sgn}\qty( \E_x\qty[d_x(s) \qty(h_x(a_0) - h_x(a_1))] ).
    \end{align*}
\end{lemma}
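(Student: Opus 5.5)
The plan is to compute the two partial derivatives $\partial J/\partial \Theta_{s,1}$ and $\partial J/\partial \Theta_{s,0}$ explicitly from the state-visitation form of the policy gradient stated above, and to show that they are negatives of each other up to a common positive factor. The sign flow then moves the two logits in opposite directions, so their difference moves at a rate whose sign is that of the common factor.

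\textbf{Step 1: the gradient formula.} I take as given the identity
\[
\frac{\partial J}{\partial \Theta_{s,a}} = \E_x\qty[d_x(s)\,\pi(a|s)\,A_x(s,a)], \qquad A_x(s,a)=\bar h_x(s)-h_x(a).
\]
It follows from the policy gradient theorem applied to the reward $\mathbf{1}\{\text{hit}\}-|y|$ with $\beta=1$. Since every rollout eventually hits the target (the chain is irreducible on a finite graph with positive transition probabilities), the indicator is constant, and $J$ equals $1$ minus the expected hitting time. The advantage of an action is therefore the baseline expected remaining time minus the remaining time after taking that action. To make this rigorous, I would write the expected hitting time through the Bellman equation above, differentiate it with respect to $\Theta_{s,a}$, and use the softmax derivative $\partial_{\Theta_{s,a}}\pi(a'|s)=\pi(a'|s)(\mathbf{1}\{a'=a\}-\pi(a|s))$. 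Only the row of $\pi$ at $s$ depends on $\Theta_{s,a}$, and the resolvent of the killed chain turns the local perturbation into the visitation count $d_x(s)$.

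\textbf{Step 2: reduce to two action classes.} At each state type ($R_j^\pm$, $L_j^\pm$) the possible next states are the desired one(s) and the undesired one(s). When one of the classes consists of $L$ multiedges, every member of the class has the same logit and, by the branch and multiedge symmetry noted in \Cref{subsec:notation}, the same value $h_x$. Write $p_1,p_0$ for the total probabilities of the two classes, and $h_x(a_1),h_x(a_0)$ for the common values. Then $\bar h_x(s)=p_1h_x(a_1)+p_0h_x(a_0)$. For a single logit in the desired class, carrying probability $\pi(a|s)=p_1/m_1$ where $m_1\in\{1,L\}$, this gives
\[
\pi(a|s)A_x(s,a)=\frac{p_1}{m_1}\big(p_0h_x(a_0)+p_1h_x(a_1)-h_x(a_1)\big)=\frac{p_1p_0}{m_1}\big(h_x(a_0)-h_x(a_1)\big).
\]
The same computation for a logit in the undesired class gives $\frac{p_0p_1}{m_0}\big(h_x(a_1)-h_x(a_0)\big)$. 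Averaging over $x$ and setting $G_s:=\E_x[d_x(s)(h_x(a_0)-h_x(a_1))]$, we obtain
\[
\partial J/\partial\Theta_{s,1}=\tfrac{p_0p_1}{m_1}G_s, \qquad \partial J/\partial\Theta_{s,0}=-\tfrac{p_0p_1}{m_0}G_s.
\]

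\textbf{Step 3: conclude.} All logits are finite along the flow, because sign flow changes each logit at unit speed and the flow is run over a finite horizon. Hence $p_0p_1>0$, and
\[
\mathrm{sgn}(\partial J/\partial\Theta_{s,1})=\mathrm{sgn}(G_s)=-\mathrm{sgn}(\partial J/\partial\Theta_{s,0}).
\]
It follows that $\dot{\mathcal D}_s=\mathrm{sgn}(G_s)-(-\mathrm{sgn}(G_s))=2\,\mathrm{sgn}(G_s)$. The constant factor $2$ can be absorbed by a time rescaling, or equivalently by a per-coordinate learning rate of $1/2$; it does not affect any of the finite-time convergence conclusions. This factor is the only discrepancy with the displayed formula, and I would state the convention explicitly.

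\textbf{Main obstacle.} The step I expect to be most delicate is Step 1 in the aggregated-multiedge setting. There one must check that perturbing a single multiedge logit yields the visitation weight $d_x(s)$ defined as an aggregate over the $L$ parallel copies. One must also check that the symmetry making all $L$ next-states share the same $h_x$ is preserved along the flow. I would verify the latter by an invariance argument: the dynamics are equivariant under permuting the multiedges, and the initialization is symmetric.
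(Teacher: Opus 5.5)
Your proposal is correct and follows essentially the same route as the paper: substitute the advantage form $\E_x[d_x(s)\pi(a|s)(\bar h_x(s)-h_x(a))]$ into the two partials, use $\bar h_x(s)-h_x(a_1)=p_0\,(h_x(a_0)-h_x(a_1))$ and its counterpart to see that they have opposite signs with a common positive factor, and read off the sign of the gap's derivative. Your bookkeeping is slightly more careful than the paper's. The paper treats each class as a single action with $\pi(a_0|s)+\pi(a_1|s)=1$, and it takes $\mathrm{sgn}$ of the difference of the partials rather than the difference of their signs; your factor $2$ is right and matches the slope $2s$ the paper itself uses in its Phase II argument.
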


As a corollary, we can express the updates of the logit differences as follows, where we suppress the subscript $i$ for brevity. The following hold for the four types of states:
    \begin{enumerate}
        \item $R_{j}^+$ state: Denote the logit gap as $\mathcal{D}_j^{(a)}$. Then, \begin{align*}
            \dv{\mathcal{D}_j^{(a)}}{t} = \mathrm{sgn} \underbrace{\E_x\qty[d_x(R_{j}^+) \cdot \qty( h_x(L_{j}^-) - h_x(L_{j+1}^+))]}_{=: G_j^{(a)}}.
        \end{align*}
        
        \item $R_{j}^-$ state: Denote the logit gap as $\mathcal{D}_j^{(b)}$. Then, \begin{align*}
            \dv{\mathcal{D}_j^{(b)}}{t} =  \mathrm{sgn} \underbrace{\E_x\qty[d_x(R_{j}^-) \cdot \qty( h_x(L_{j+1}^+) - h_x(L_{j}^-) )]}_{=:G_j^{(b)}}.
        \end{align*}

        \item $L_{j}^+$ state: Denote the logit gap as $\mathcal{D}_j^{(c)}$. Then, \begin{align*}
            \dv{\mathcal{D}_j^{(c)}}{t} =  \mathrm{sgn} \underbrace{\E_x\qty[d_x(L_{j}^+) \cdot \qty(h_x(R_{j-1}^-) - h_x(R_{j}^+))]}_{=:G_j^{(c)}}.
        \end{align*}
        
        \item $L_{j}^-$ state: Denote the logit gap as $\mathcal{D}_j^{(d)}$. Then, \begin{align*}
            \dv{\mathcal{D}_j^{(d)}}{t} =  \mathrm{sgn} \underbrace{\E_x\qty[d_x(L_{j}^-) \cdot \qty(h_x(R_{j}^+) - h_x(R_{j-1}^-))]}_{=:G_j^{(d)}}.
        \end{align*}
    \end{enumerate}
Note that the dynamics depend exactly on the signs of $G_j^{(p)}$ for $p\in\{a,b,c,d\}$. We divide our analysis into two phases: 1) the time until all of the logit gaps are increasing (i.e., $G_j^{(p)} > 0$ for all $1\leq j \leq K$ and $p\in\{a,b,c,d\}$) and 2) convergence from said time towards $1$.

\paragraph{RLVR at post-training initialization.} Denote the start of RLVR as time $t=0$. The following lemma characterizes the properties of the gradients at initialization.
\begin{lemma} \label{lemma:Gjp_init}
    At $t=0$, it holds that $G_j^{(a)},G_j^{(c)} > 0$. However, $G_j^{(b)},G_j^{(d)}$ may be negative for depths $ 1\leq l(W,K,L) \leq j \leq r(W,K,L) \leq K-1$, where $l(\cdot), r(\cdot)$ depend on the specific values of $W,K,L$.
\end{lemma}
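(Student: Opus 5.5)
The plan is to compute everything at $t=0$ in closed form. At initialization the pretrained policy of \Cref{assump:exact_pretrain} is, at the level of nodes, a simple random walk on the multigraph. Crossing a diamond uses one of $L$ parallel edges, and stepping along a connector uses a single edge. Inside one branch, the edge-state chain restricted to the aggregated states $R_j^\pm, L_j^\pm$ is therefore a nearest-neighbour (birth--death) chain on the node positions $f=u_{0,r},u_{1,l},u_{1,r},\dots,u_{K,r},t$ of the branch. Its transition probabilities are $a_j(0)=d_j(0)=\frac{1}{L+1}$ and $b_j(0)=c_j(0)=\frac{L}{L+1}$. The only exits are the leaf (reflecting, or absorbing if it is the target) and the fork. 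At the fork the walk is re-randomized uniformly over the $W$ branches.

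For a fixed target $x=t_i$ I would split the expectation in each $G_j^{(p)}$ into the target branch $T$ and the $W-1$ non-target branches $N$. This gives, for example,
\begin{align*}
W\,G_j^{(a)} = d_T(R_j^+)\,\Delta_T - (W-1)\,d_N(R_j^+)\,\Delta_N ,
\end{align*}
where $\Delta_T := h_x(L_j^-)-h_x(L_{j+1}^+)$ is evaluated on the target branch and $\Delta_N := h_x(L_{j+1}^+)-h_x(L_j^-)$ on a non-target branch.

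The signs of $\Delta_T,\Delta_N$ come from a cut argument using the strong Markov property. On the target branch, every trajectory from $L_j^-$ to $x$ must cross the connector $u_{j,r}\to u_{j+1,l}$ rightwards. The state at the first such crossing is exactly $L_{j+1}^+$. Hence $h_x(L_j^-)=\E[\text{time to reach }L_{j+1}^+]+h_x(L_{j+1}^+)$, so $\Delta_T>0$. On a non-target branch, every trajectory from $L_{j+1}^+$ must first cross diamond $j$ leftwards to leave through $f$, which makes $\Delta_N>0$. The same argument handles the $c,b,d$ pairs. So each $G_j^{(p)}$ is a difference of two positive terms, and the lemma is a statement about which term dominates.

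Next I would compute the four ingredients explicitly for the birth--death chain. The hitting-time gaps $\Delta_T,\Delta_N$ are given by the standard formula: the expected time to cross a cut equals the sum of stationary weights on one side divided by the flux across the cut. The visit counts $d_T(\cdot),d_N(\cdot)$ come from the Green's function of the chain killed at the target. For non-target branches this means multiplying the expected number of entries into the branch by the expected visits per excursion from $f$. By detailed balance, $d(s)$ times a hitting-time gap across the cut through $s$ collapses to products of the form (stationary mass on one side) $\times$ (expected number of crossings).

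With these formulas, $G_j^{(a)},G_j^{(c)}>0$ should follow because the sign is set by a rightward (forward) move. On the target branch the forward cut at depth $j$ separates the walk from the absorbing leaf, and its contribution scales with the whole mass to the left of the cut, including the fork and all $W-1$ other branches. This should dominate the non-target contribution, which involves only the part of a single branch beyond the cut, times $W-1$.

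For $G_j^{(b)},G_j^{(d)}$ the weights reverse. The relevant target-branch term only sees mass to the right of depth $j$. The non-target term sees the $W-1$ branches' excursions that penetrate to depth $j$, whose probability decays like $L^{-j}$-type factors. I would write $G_j^{(b)}$ as an explicit function of $(W,K,L,j)$ and show that it is negative exactly on an interval of depths. Near $j=K$ the target term wins because the leaf is close. Near $j=0$ the non-target penetration is large, but so is the backtracking benefit. In between, the sign can flip. I would exhibit the interval $[l(W,K,L),r(W,K,L)]$ concretely, or at least give one parameter choice where it is nonempty, which is all the "may be negative" clause requires.

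The main obstacle is this last quantitative comparison, and in particular making the positivity of $G_j^{(a)},G_j^{(c)}$ hold uniformly in $W,K,L$ rather than only asymptotically. I would first attempt the electrical-network/detailed-balance identity, which turns each product $d_x(s)\,\Delta$ into a ratio of effective resistances that can be compared term by term. Only if that fails would I fall back on bounding the geometric sums directly.
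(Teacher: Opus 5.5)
Your plan is the same as the paper's: evaluate every ingredient of $G_j^{(p)}$ at $t=0$ in closed form and read off the signs. Your toolkit is valid: the pretrained chain is the node-level walk with conductance $L$ across each diamond and $1$ on each connector, hitting-time gaps come from cut formulas, and visit counts come from Green's functions. It is arguably a more transparent route to the paper's initialization values than the general closed forms. For example, it gives $\Delta_T=\frac{2(L+1)}{L}\bigl[(W-1)(K(L+1)+1)+j(L+1)\bigr]$ and $\Delta_N=\frac{2(L+1)}{L}\bigl[(L+1)(K-j)+1\bigr]$, exactly as in the paper.

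The gap is that the sign analysis, which is the whole content of the lemma, rests on heuristics that are false, and the decisive comparison is never carried out.

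(i) There is no $L^{-j}$ decay at $t=0$. The pretrained walk is driftless, so each non-target excursion visits every connector state once and every aggregated multiedge state $L$ times in expectation. Hence $d_N(\cdot)\in\{D,\,DL\}$ with $D=1+K+K/L$, independent of $j$.

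(ii) For $G_j^{(a)}$ and $G_j^{(c)}$ nothing dominates. You dropped the visit counts: $d_T(R_j^+)=(K-j+1)(L+1)$ while $d_N(R_j^+)=K(L+1)+L$. As a result, $d_T\Delta_T$ and $(W-1)d_N\Delta_N$ agree at leading order (both of order $(W-1)K(K-j)(L+1)^2$ times the same prefactor), and that term cancels exactly. In your normalization, with $M:=L+1$,
\[
\frac{L}{2M}\,W\,G_j^{(a)}(0)=(W-1)\bigl(jM(M-2)+KM+1\bigr)+j(K-j+1)M^2 ,
\]
so positivity is a lower-order effect that only the exact computation detects.

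(iii) Your endpoint heuristic for $G^{(b)}$ is reversed. The target term in $G_j^{(b)}$ is $d_T(R_j^-)\,\Delta_T$, with the same left-mass gap $\Delta_T$ as in $G_j^{(a)}$. Only the visit count $d_T(R_j^-)=(K-j)(L+1)/L$ reflects the right side. It vanishes at $j=K$ because the rollout stops at the target, so $G_K^{(b)}>0$, and this is what forces $r\le K-1$.

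The missing observation is that at $t=0$ all visit counts and gaps are affine in $j$. So each $G_j^{(p)}(0)$ is an explicit quadratic in $j$; up to positive factors these are the four polynomials in the paper's proof. For $a$ and $c$, every coefficient left after the cancellation is positive. For $b$ and $d$, the leading coefficient is $+(L+1)^2$, so the negative set is automatically an interval, with endpoints controlled by the values at $j=1$ and $j=K$.

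One caution when you carry this out. At $j=K$ the $d$-polynomial gives
\[
W\,G_K^{(d)}(0)=\frac{2(L+1)}{L}\bigl[W((L+1)K+L^2)-(L+1)^2K\bigr],
\]
which agrees with the paper's own formula for $G_j^{(d)}(0)$. This is negative, for instance, at $W=2$, $L=2$, $K=3$; directly, $-2\cdot 51+11\cdot 9=-3$. So $r\le K-1$ holds for the $b$-states in general, but for the $d$-states only when $W\ge L+1$ or $K$ is small enough. In that regime your intuition that the target term wins near the leaf is right for $d$, though not for $b$.
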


The full proof is deferred to \ref{app:subsec:summary_quantities}. Intuitively, this can be interpreted as the following. First, forward-oriented states (corresponding to $G_j^{(a)}, G_j^{(c)}$) want to continue forwards, as going backwards would be a waste of movement. Second, backward-oriented states (corresponding to $G_j^{(b)}, G_j^{(d)}$) near the end of a branch want to continue backwards, because they tend to believe that they came from a non-target leaf on the current branch. Third, backward-oriented states near the fork tend to put higher preference on the additional optionality of moving towards the fork as opposed to continuing forward on an uncertain branch. Finally, backward-oriented states in the middle are more "confused," in the sense that it is unclear if they are facing backwards because of backtracking or because of a U-turn from a forward-oriented state; this confusion should eventually be resolved as forward-oriented states converge to putting all mass forwards.
 
\paragraph{RLVR Phase I.} We begin by showing the forward states' logits are increasing over all time; that is, we show that $G_j^{(a)}, G_j^{(c)}$ are always positive. As such, we will prove that once all the $G_j^{(b)}, G_j^{(d)}$ become positive, all four of these quantities will be increasing (and analyzed in Phase II). Hence, we will bound the time $T_\textrm{meet}$ that it takes us to reach Phase II. This is given by the following lemma.

\begin{lemma} \label{lemma:RLVRflipping_time}
    There exists a time $T_\mathrm{meet}$ 
    such that for all $1\leq j\leq K$ and all $p\in\{a,b,c,d\}$, $G_j^{(p)}(T_\mathrm{meet})>0$.
\end{lemma}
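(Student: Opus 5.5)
We argue by tracking the sign-flow dynamics directly: the forward gaps grow linearly, and that growth eventually forces every backward gradient to be positive. First we establish, as an invariant over all $t\ge 0$, that $G_j^{(a)}(t),G_j^{(c)}(t)>0$. This extends the initialization argument of \Cref{lemma:Gjp_init}. At $R_j^+$, going back via $L_j^-$ only returns the walk to the left node of the same diamond, so any path to a target from $L_j^-$ must pass through $R_j^+$ again (or exit to the fork). Hence $h_x(L_j^-) > h_x(L_{j+1}^+)$ for the target branch. For non-target branches we compare the two continuations by coupling, using that the leaf $t_i$ forces a U-turn. The same reasoning applies at $L_j^+$.

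Given this invariant, \Cref{lemma:logit_gap} yields $\mathcal{D}_j^{(a)}(t)=\mathcal{D}_j^{(a)}(0)+t$ and $\mathcal{D}_j^{(c)}(t)=\mathcal{D}_j^{(c)}(0)+t$. Therefore $1-a_j(t),1-c_j(t)=O(Le^{-t})$ uniformly in $j$, regardless of what the backward logits are doing.

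Next we show that once $1-a_j,1-c_j\le\varepsilon$ for a sufficiently small $\varepsilon=\varepsilon(W,K,L)$, every $G_j^{(b)}$ and $G_j^{(d)}$ is positive, whatever the current values of $b_j,d_j\in(0,1)$. Fix a backward state $R_j^-$ on branch $i$ and compare $h_x(L_{j+1}^+)$ with $h_x(L_j^-)$.

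\textbf{Non-target branch $i$.} From $L_{j+1}^+$ the walk runs forward in about $K-j$ steps, bounces off $t_i$, and then returns to the same backward state $R_j^-$. So $h_x(L_{j+1}^+)\ge 2(K-j)+h_x(R_j^-) - O(\varepsilon\cdot\mathrm{poly})$. Moreover $h_x(R_j^-) \ge 1 + \min(h_x(L_{j+1}^+),h_x(L_j^-))$. Combining these gives $h_x(L_{j+1}^+)-h_x(L_j^-)\ge 2(K-j)+1-O(\varepsilon\cdot\mathrm{poly})>0$.

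\textbf{Target branch $i$.} The difference is negative but bounded: $h_x(L_j^-)-h_x(L_{j+1}^+)\le C$. To see this, note that from $L_j^-$ the walk either reverses (with probability at least the backward multiedge mass) and goes forward with near-certainty, or returns to the fork and re-selects a branch. All of these hitting times are polynomially bounded in $W,K,L$ and $1/\min(b,d)$.

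The weights make the non-target terms dominate. With near-deterministic forward motion, $d_{t_i}(R_j^-)$ on the target branch is $O(\varepsilon)$, since reaching a backward state there requires an undesired reversal. On a non-target branch, by contrast, every excursion into it produces a visit to $R_j^-$ after bouncing off the leaf, so $d_x(R_j^-)=\Omega(1/W)$ per excursion. Hence $G_j^{(b)}\ge \Omega(1)\cdot 2(K-j) - O(\varepsilon)\cdot C>0$, and the same computation handles $G_j^{(d)}$. Choosing $T_\mathrm{meet}=\log(L/\varepsilon)+O(1)$ then gives the lemma.

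The main obstacle is making the target-branch term $d_{t_i}(R_j^-)\,(h(L_j^-)-h(L_{j+1}^+))$ uniformly small. The difficulty is that $b_j,d_j$ may have decreased during Phase I, where $G^{(b)},G^{(d)}$ were negative at intermediate depths. A small $b_j$ inflates the hitting times from backward states, because the walk can oscillate near $R_j^-$. I would first bound how far $b_j,d_j$ can drift: their logit gaps change at rate at most $1$, so by time $T$ we have $b_j,d_j\ge e^{-T}/(L+1)$. The task is then to show that the $O(\varepsilon)$ visitation factor, scaling like $e^{-t}$, beats this polynomial-in-$e^{T}$ blowup. Failing that, I would instead prove that the sign of $G^{(b)}_j$ flips before $b_j$ degrades, by showing $G^{(b)}_j$ becomes positive at a time depending only on $\varepsilon$ and using a bootstrap or continuity argument on the first time any backward gap falls below a threshold.
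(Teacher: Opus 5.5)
\paragraph{Verdict.} The proposal has a genuine gap, and it is in Step 1, on which everything else rests.

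\paragraph{Step 1: the forward invariant.} Your pathwise argument only handles the target branch. On a non-target branch the comparison goes the other way. Every path from $L_{j+1}^+$ back to the fork passes through $L_j^-$, so $h_x(L_{j+1}^+)-h_x(L_j^-)=\Delta_j\ge 2$. Hence the non-target part of $G_j^{(a)}$, namely $-(W-1)\tilde\mu(R_j^+)\Delta_j$, is strictly negative (similarly $-(W-1)\tilde\mu(L_j^+)E_j$ in $G_j^{(c)}$). No coupling reverses this. The sign of $G_j^{(a)}$ is a quantitative competition that depends on the backward probabilities, and the invariant is false ``regardless of the backward logits.''

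For instance, take $L=1$, $K=3$, $W=2$, $a_j,c_j\to 1$, $d_1=\tfrac12$, and all other $b_j,d_j\to 1$. Then $\mu(R_1^+)\bigl(\Delta g_1^{(a)}+\delta_1H_f\bigr)\to -\tfrac12+\tfrac12\cdot 33=16$, but $(W-1)\tilde\mu(R_1^+)\Delta_1\to 2\cdot 10=20$. So $G_1^{(a)}<0$ at nearby finite logits.

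You therefore cannot conclude $1-a_j,1-c_j=O(Le^{-t})$ independently of the backward dynamics. Forward positivity is a property of the trajectory, not of the configuration. The paper ties it to the backward signs through the identity $a_jG_j^a+b_jG_j^b=p_\mathrm{succ}A_j>0$ (unfolded-chain notation). This comes from the flux relations $b_j\tilde\mu_j^B=a_j\tilde\mu_j^F$ and $a_j\mu_j^F-b_j\mu_j^B=p_\mathrm{succ}$, and it forces $G_j^a>0$ wherever $G_j^b\le 0$.

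\paragraph{Step 3: the obstacle you flag.} This obstacle is real for the bounds you chose, and your proposed escape fails. The target-branch gap $\Delta g_j^{(a)}+\delta_jH_f$ is dominated by $\delta_jH_f$. Through the off-target exit time $\tilde g(L_1^+)$ inside $H_f$ (see the $m=K$ term of $E_1$), it grows roughly like $\prod_{m\ge j}(b_md_m)^{-1}$. That is up to $e^{\Theta(KT)}$ if backward logits at many depths have fallen for time $T$. Three things then go wrong:
\begin{itemize}
\item an $O(\varepsilon)\sim e^{-T}$ visitation factor cannot beat this growth;
\item your non-target lower bound $2(K-j)$ does not grow at all;
\item choosing $\varepsilon$ in terms of $\min(b,d)$ at time $T_\mathrm{meet}$ is circular.
\end{itemize}

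The blow-up is harmless only if you compare the two terms as a ratio, because the non-target gap inflates the same way. In the $L=1$ chain, $\prod_{m<j}b_m\,V_1\le 2(j-1)+V_j$, which gives $\delta_jH\le (W-1)V_j+\mathrm{poly}$ once the $a$'s are near $1$.

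\paragraph{What the paper does instead.} The paper builds exactly this ratio into a potential, $\Gamma_j=(W-1)\tilde\mu_j^BV_j/(\mu_j^BA_j)$, with $G_j^b>0\iff\Gamma_j>1$. Via telescoping identities it shows that $\frac{d^+}{dt}\log\Gamma_j\ge 2$ whenever $\Gamma_j\le 1$, and that $\Gamma_j$ is strictly increasing at $\Gamma_j=1$, so no backward sign flips back. This yields $T_\mathrm{meet}=\tfrac12\max_j[\log(1/\Gamma_j(0))]_+$ directly. It does not first drive $a_j,c_j\to1$, and it needs no a priori control of how far $b_j,d_j$ drift. General $L$ then follows by unfolding each diamond into two chain steps. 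This potential, together with the identity above, is the idea your proposal is missing.
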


The proof of this fact relies on analyzing how $G_j^{(p)}$ evolves over this time. In particular, we can show theoretically and empirically that the middle interval of depths for which $G_j^{(b)}$ and $G_j^{(d)}$ are negative becomes shorter and shorter in Phase I, which is at the core of the proof of \Cref{lemma:RLVRflipping_time}. Full details are deferred to the appendix.

\paragraph{RLVR Phase II.} Continuing on, we enter the second phase of the training dynamics after time $T_\mathrm{meet}$. In the setting of \Cref{lemma:RLVRflipping_time}, we now consider the additional time needed after time $t=T_\mathrm{meet}$ to reach the regime where $\min_j \qty(\min\{a_j,b_j,c_j,d_j\}) \geq 1-\kappa$ for some $\kappa \ll 1$, for which we prove the following guarantee.
\label{subsubsec:RLVRphase2}
\begin{lemma} \label{lemma:RLVRconvergence}
For any $\kappa \in (0, 1)$, there exists a time $T'$ such that at time $t=T_\mathrm{meet} + T'$, the following holds:
\begin{align*}
        \min\{a_j(t),b_j(t),c_j(t),d_j(t)\}\geq 1-\kappa
        \quad \forall 1\leq j\leq K.
    \end{align*}
\end{lemma}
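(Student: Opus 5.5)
The plan is to show that the positivity of all $G_j^{(p)}$ established at time $T_\mathrm{meet}$ by \Cref{lemma:RLVRflipping_time} is preserved for all later times. Once that holds, \Cref{lemma:logit_gap} says every logit gap $\mathcal{D}_j^{(p)}$ grows at unit rate. Each of $a_j,b_j,c_j,d_j$ is a logistic function of its gap, for instance $a_j = (1+L e^{-\mathcal{D}_j^{(a)}})^{-1}$ and $b_j = (1+L^{-1}e^{-\mathcal{D}_j^{(b)}})^{-1}$. So all four exceed $1-\kappa$ as soon as every gap exceeds $\log(L/\kappa)$. Since every gap at $T_\mathrm{meet}$ is at least $\min_{j,p}\mathcal{D}_j^{(p)}(T_\mathrm{meet})$, which is finite, we can take
\[
T' = \max\Bigl\{0,\ \log(L/\kappa) - \min_{j,p}\mathcal{D}_j^{(p)}(T_\mathrm{meet})\Bigr\}.
\]

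The substance is therefore a \emph{forward invariance} claim: the region $\mathcal{R} := \{G_j^{(p)}>0\ \forall j,p\}$ is never exited after $T_\mathrm{meet}$. I would argue by contradiction. Let $t^\ast$ be the first time after $T_\mathrm{meet}$ at which some $G_j^{(p)}(t^\ast)=0$. On $[T_\mathrm{meet},t^\ast)$ every one of $a_j,b_j,c_j,d_j$ is strictly increasing. The $d_x(s)$ factors are strictly positive, since every state is reachable with positive probability, so the sign of $G_j^{(p)}$ is the sign of a weighted average over targets of hitting-time differences. For the forward states, Phase I already showed that $G_j^{(a)},G_j^{(c)}>0$ for all time, so only $G_j^{(b)}$ and $G_j^{(d)}$ need checking.

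For those, I would compute $h_x(L_{j+1}^+)-h_x(L_j^-)$ explicitly using the birth--death structure of a single branch. Within a branch the chain on edge states is a one-dimensional walk with forward/backward persistence probabilities $a,b,c,d$. This gives closed-form expected exit times, toward the fork or toward the leaf $t_i$, via products of ratios such as $(1-a_k)/a_k$ and $(1-b_k)/b_k$. I would then split the target average into two parts: the target branch (weight $1/W$), where going forward is good, and the $W-1$ non-target branches, where going backward is good.

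The main obstacle is showing that this weighted combination is monotone in the parameters. Concretely, I want the partial derivatives of $G_j^{(b)}$ with respect to each of $a_k,b_k,c_k,d_k$ to be nonnegative, or at least that $G_j^{(b)}$ cannot decrease to $0$ while all parameters increase. The intuition is clear. As the forward probabilities $a,c$ rise, a backward-oriented state is increasingly likely to have come from a dead-end leaf rather than a U-turn, so $d_x(R_j^-)$ shifts mass toward non-target branches. As the backward probabilities $b,d$ rise, returning to the fork becomes cheaper, which only increases the advantage of continuing backward.

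To make this rigorous, I would first try to show the following. The ratio of visitation weights $d_{x}(R_j^-)$ for non-target versus target branches is nondecreasing in all parameters. The hitting-time difference in each branch type is monotone in the right direction. Together these give $\frac{d}{dt}G_j^{(b)}\ge 0$ on $[T_\mathrm{meet},t^\ast)$, which contradicts $G_j^{(b)}(t^\ast)=0$ because $G_j^{(b)}(T_\mathrm{meet})>0$. If full monotonicity fails, the fallback is to exploit the Phase I mechanism, namely the shrinking interval of negative depths, and show that this interval cannot reopen once it has closed.
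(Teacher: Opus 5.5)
\textbf{Where the proof breaks.} Your reduction matches the paper's. You show that every $G_j^{(p)}$ stays positive after $T_{\mathrm{meet}}$, then let the logit gaps grow linearly, and your choice of $T'$ is fine. The gap is the forward-invariance step. You correctly call it the substance, but you do not establish it, and the route you propose fails.

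Coordinatewise monotonicity of $G_j^{(b)},G_j^{(d)}$ in all of $a_k,b_k,c_k,d_k$ is false. Your intuition for the backward parameters also points the wrong way. The advantage of continuing backward on a non-target branch is the extra cost of a mistaken forward step, and that cost \emph{shrinks} as backtracking becomes more reliable. The closed forms show this directly:
\begin{itemize}
\item $\Delta_j=(2+c_{j+1}E_{j+1})/b_j$;
\item $E_j$ carries $b_m d_m$ in all its denominators;
\item the off-target weights $\tilde\mu(R_j^-)=\prod_{m\le j}\frac{a_mc_m}{b_md_m}$ and $\tilde\mu(L_j^-)=\frac{c_j}{d_j}\prod_{m<j}\frac{a_mc_m}{b_md_m}$ decrease in every $b_m,d_m$.
\end{itemize}
So the positive term $(W-1)\tilde\mu\cdot(\Delta_j \text{ or } E_j)$ is eroded as the backward persistences rise.

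\textbf{Counterexample.} Take $K=L=1$, $W=2$ and $a_1=b_1=c_1=0.9$. The rescaled $G_1^{(d)}$ is about $13.2$ at $d_1=0.5$ and about $3.6$ at $d_1=0.9$. The ratio $\Gamma$ and the unrescaled gradient also decrease. Yet all four gradients are positive at both points. So parameter-monotonicity cannot exclude a drift to $0$ inside $\mathcal{R}$. Your fallback, that the negative interval ``cannot reopen,'' just restates what has to be proved.

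\textbf{What is needed.} You need a trajectory-level statement about the actual joint flow, in which all gaps increase simultaneously. At any time where $G_j^{(b)}$ or $G_j^{(d)}$ vanishes, its derivative (equivalently that of $\log\Gamma_j$) must be strictly positive. In other words, you must show the gains from rising $a,c$ dominate the losses from rising $b,d$.

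This is how the paper closes Phase~II. Clause (iii) of Lemma~\ref{lem:l1-phase1-certificate} asserts $\frac{d^+}{dt}\log\Gamma_j>0$ whenever $\Gamma_j=1$, derived via the identities $a_{j+1}A_{j+1}=2+b_jA_j$, $b_jV_j=2+a_{j+1}V_{j+1}$, $b_j\tilde\mu_j^B=a_j\tilde\mu_j^F$ and $a_j\mu_j^F-b_j\mu_j^B=p_{\mathrm{succ}}$, with general $L$ handled by unfolding the diamonds into the $L=1$ chain. Combined with clause (i), this rules out any sign flip after $T_{\mathrm{meet}}$.

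Your first-exit-time contradiction scheme is compatible with this. But it must be supplied with a derivative bound at the exit time along the joint dynamics, not with coordinatewise monotonicity.
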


The result of this phase follows from the self-reinforcing nature of the dynamics in this regime. Combining the results thus far gives \Cref{thm:rlvr}. Full proofs can be found in \Cref{app:sec:RLVR} in the appendix.

\subsection{Proof Sketch of Inference Time Separation} \label{subsec:inference_sketch}
Given the construction of the converged model in \Cref{thm:rlvr}, the model takes $\Theta(K)$ time to traverse any branch back-and-forth, and traverses $W$ branches on expectation before hitting the target branch. Therefore, the required inference time compute is $\Theta(WK)$ for the RLVR model.

For the converged SFT model in \Cref{thm:sft}, the backtracking behavior is more complex. Upon entering a branch $i$, since $a_j=c_j=1$ for all $j$, we reach the $t_i$ node in $\Theta(K)$ time. However, exiting the branch is more complicated. Starting from the state $R_{K+1}^+$ with head $t_i$, let $g_i$ denote the expected time of first entry into $R_{K+1-i}^-$, and $f_i$ denote the expected time of first entry into $L_{K+1-i}^-$. Then, we have the following recursion with base case $g_1 = 1$:
    \begin{align*}
        f_i &= \frac{L+1}{L} g_i + \frac{2i-1}{L} + 1, \\
        g_i &= (L+1) f_{i-1} + (2i-2) L + 1.
    \end{align*}
We prove and analyze this recurrence in \Cref{app:sec:inference} of the appendix. In particular, the time needed from entry of a wrong branch to getting back to the fork state is $g_{K+1}=O(L^K)$, leading to a total compute of $\Theta(W L^K)$.

\section{Experiments} \label{sec:exps}
In this section, we run experiments on our synthetic setup to validate our theory. We provide example plots for the case of $W=15, K=15, L=5$, run under sign gradient descent with learning rate $0.01$. Figure~\ref{fig:hitting} shows that the final hitting time indeed converges to $\Theta(KW)$, as suggested by \ref{subsec:separation}. Moreover, Figure~\ref{fig:abcd} shows that all probabilities $a_j,b_j,c_j,d_j$ indeed converge to $1$ with a rate (only) depending on their type. While not visually clear, we note that there is a very brief regime in the beginning in which the probabilities $b$ and $d$ are decreasing for middle depth nodes.

\begin{figure}[ht]
    \centering

    \begin{minipage}[t]{\linewidth}
    \vspace{5pt}
        \centering
        \includegraphics[width=\linewidth]{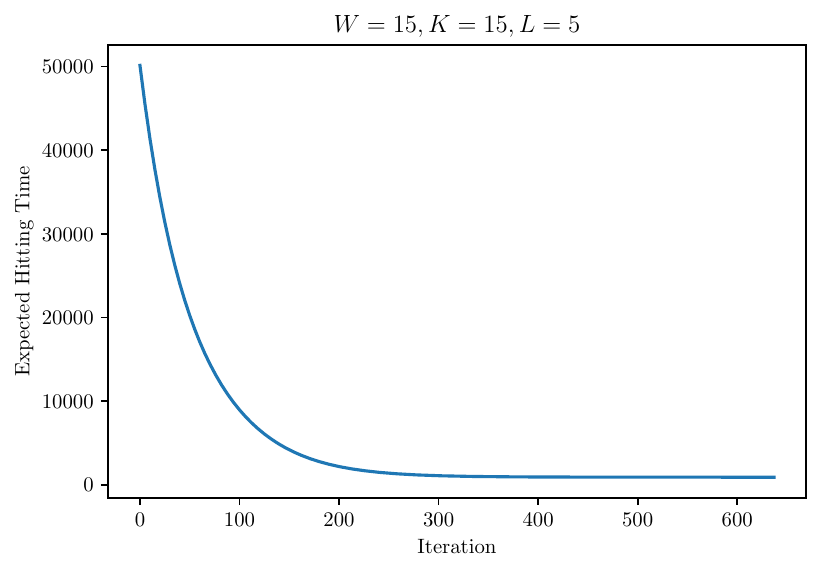}
        
        \caption{Expected hitting time of RLVR-trained model against training iterations. Here, hitting time converges to $4WK = 900$.}
        \label{fig:hitting}
    \end{minipage}\vfill

    \begin{minipage}[t]{\linewidth}
    \vspace{0pt}
        \centering

        \begin{subfigure}[t]{0.495\linewidth}
            \centering
            \includegraphics[width=\linewidth]{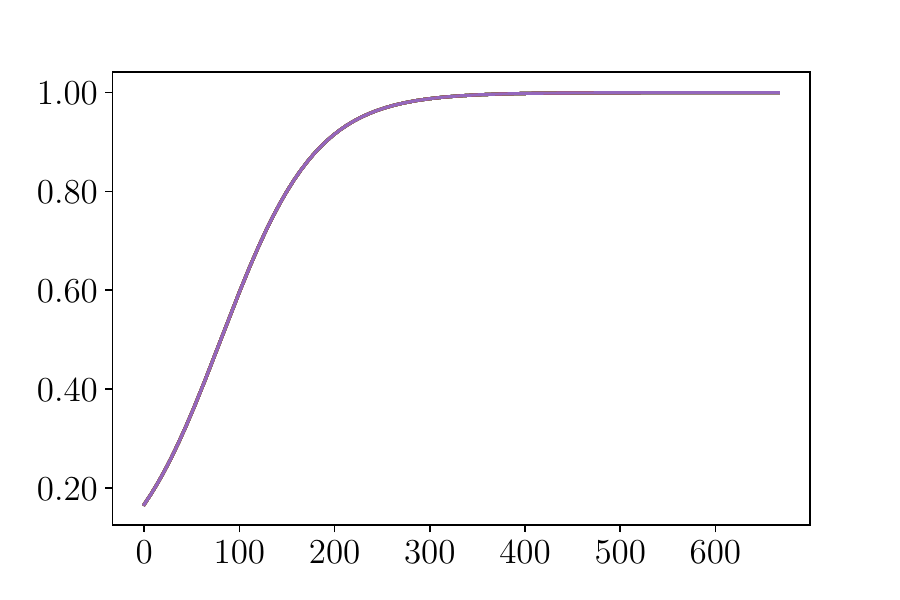}
            \label{fig:hitting:a}
        \end{subfigure}\hfill
        \begin{subfigure}[t]{0.495\linewidth}
            \centering
            \includegraphics[width=\linewidth]{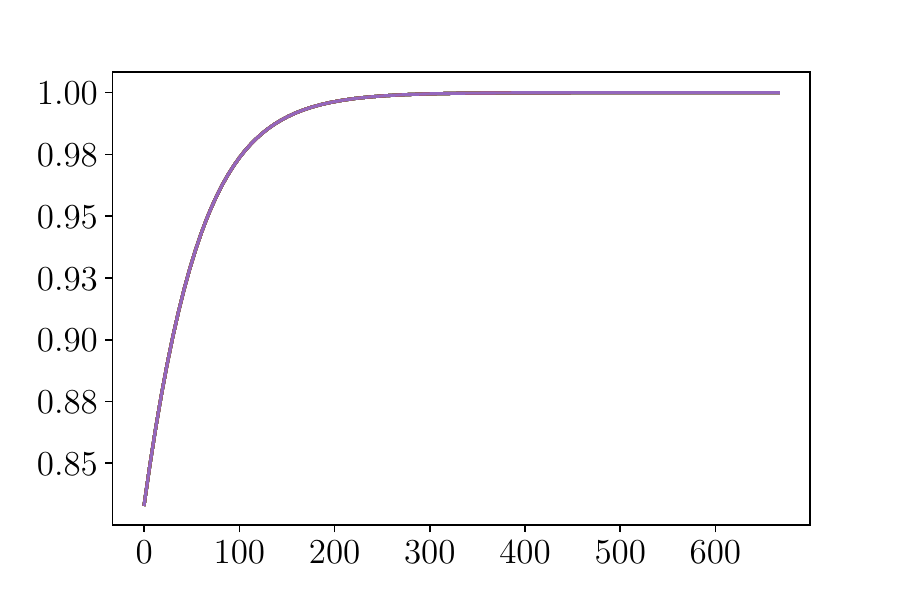}
            \label{fig:hitting:b}
        \end{subfigure}

        \par\vspace*{-16.5pt}
        
        \begin{subfigure}[t]{0.495\linewidth}
            \centering
            \includegraphics[width=\linewidth]{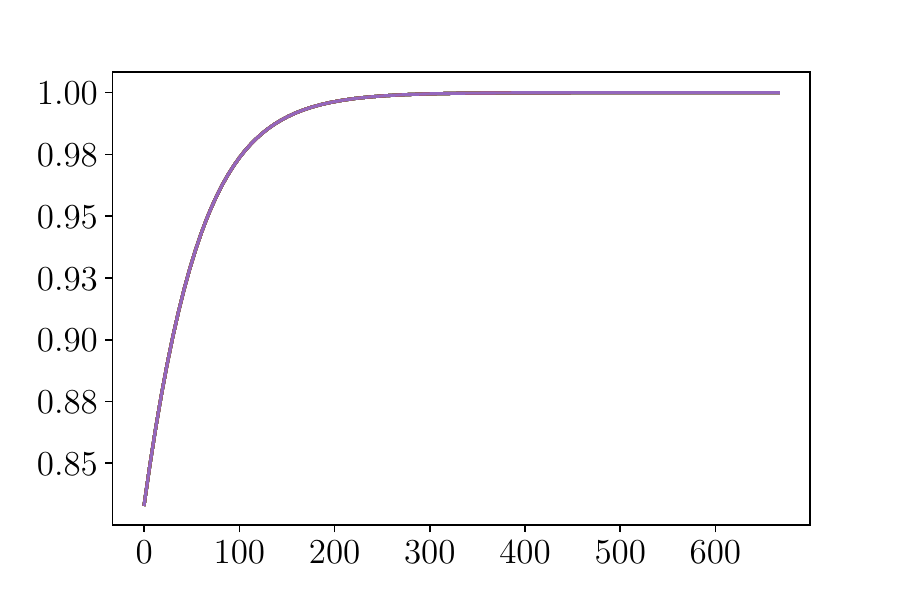}
            \label{fig:hitting:c}
        \end{subfigure}\hfill
        \begin{subfigure}[t]{0.495\linewidth}
            \centering
            \includegraphics[width=\linewidth]{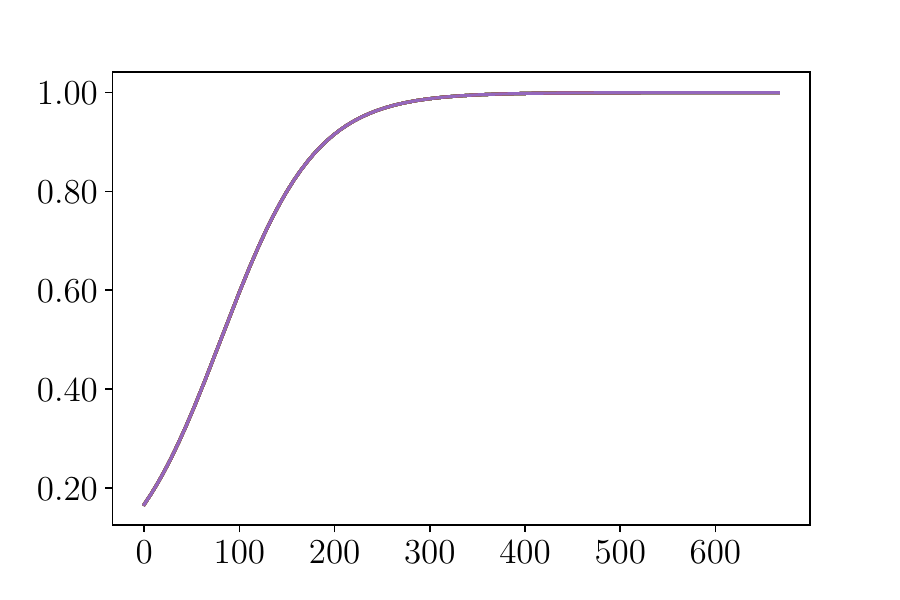}
            \label{fig:hitting:d}
        \end{subfigure}

        \caption{$a_j,b_j,c_j,d_j$ values for RLVR-trained model (ordered in Z shape). Values of edges over all depths $j$ are overlaid. Note that $a(0) = d(0) = \frac{1}{L+1}$ and $b(0)=c(0)=\frac{L}{L+1}$.}
        \label{fig:abcd}
    \end{minipage}
\end{figure}

\section{Conclusion}

We highlight backtracking as a key capability for reasoning. We introduce a simple graph pathfinding model of chain-of-thought reasoning and give a dynamical comparison between supervised fine-tuning on shortest-path demonstrations and RL with verifiable rewards. We show that SFT on gold traces need not update backward-state transitions and can incur exponential search cost, whereas RLVR's on-policy rollouts provide signal to learn efficient backtracking from failed paths, yielding a provable inference-time compute separation. Finally, we show that RLVR-generated traces can be distilled via supervised learning to transfer efficient backtracking to a base model.

\section*{Acknowledgements} SW acknowledges support from an NSF Graduate Research Fellowship.

\section*{Impact Statement} This paper presents work whose goal is to advance the field of machine learning. There are many potential societal consequences of our work, none of which we feel must be specifically highlighted here.

\bibliography{bib_stuff}
\bibliographystyle{icml2026}

\newpage
\appendix
\onecolumn
\raggedbottom

\section{Pretraining and Supervised Fine-Tuning} \label{app:sec:pretrainSFT}
We start off with the following general lemma for cross-entropy loss.

\begin{lemma} \label{app:lem:rowwise-grad}
Let $\mathcal{Q}$ be any distribution over pairs $(s,a)$ of current edge-state $s$ and next edge-state $a$.
Define the marginal $d_{\mathcal{Q}}(s) := \mathbb{P}_{(s,a)\sim\mathcal{Q}} [s]$ and the conditional
$p_{\mathcal{Q}}(a\mid s) := \mathbb{P}_{(s,a)\sim\mathcal{Q}}[a\mid s]$.
Consider the cross-entropy objective
\[
L_{\mathcal{Q}}(\Theta) := \mathbb{E}_{(s,a)\sim\mathcal{Q}}\big[-\log \pi_\Theta(a\mid s)\big].
\]
Then for every pair $(s,a)$,
\[
\frac{\partial L_{\mathcal{Q}}}{\partial \Theta_{s,a}}
= d_{\mathcal{Q}}(s)\Big(\pi_\Theta(a\mid s)-p_{\mathcal{Q}}(a\mid s)\Big).
\]
In particular, under gradient flow $\dv{\Theta_{s,a}}{t} =-\partial L_{\mathcal{Q}}/\partial \Theta_{s,a}$, the dynamics of each
row $\Theta_{s,\cdot}$ depends only on transitions out of $s$ that appear in the bigram distribution; for instance, if $d_{\mathcal{Q}}(s)=0$ then $\Theta_{s,\cdot}(t)$ remains constant.
\end{lemma}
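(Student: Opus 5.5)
The plan is to reduce everything to the standard softmax log-derivative identity and then use linearity of expectation. Since $\pi_\Theta(\cdot\mid s)=\mathrm{softmax}(\langle\Theta,s\rangle)$ and $s$ is a one-hot vector, the logits of the row $s$ are exactly $\Theta_{s,\cdot}$. Thus $\log\pi_\Theta(a'\mid s')$ depends only on row $s'$, and
\[
\frac{\partial}{\partial\Theta_{s,a}}\log\pi_\Theta(a'\mid s') = \mathbf{1}\{s'=s\}\big(\mathbf{1}\{a'=a\}-\pi_\Theta(a\mid s)\big).
\]
This follows from differentiating $\Theta_{s',a'}-\log\sum_{b}\exp(\Theta_{s',b})$. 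Entries equal to $-\infty$ correspond to impossible transitions. For those, $\pi_\Theta(a\mid s)=0$ and $p_{\mathcal Q}(a\mid s)=0$, so I will treat them as constants and restrict attention to finite entries.

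Next I take the expectation over $(s',a')\sim\mathcal Q$, interchanging derivative and expectation, which is justified because the state space is finite. This gives
\[
\frac{\partial L_{\mathcal Q}}{\partial\Theta_{s,a}} = -\,\mathbb{P}_{\mathcal Q}[s'=s,a'=a] + \mathbb{P}_{\mathcal Q}[s'=s]\,\pi_\Theta(a\mid s).
\]
Writing $\mathbb{P}_{\mathcal Q}[s,a]=d_{\mathcal Q}(s)p_{\mathcal Q}(a\mid s)$ yields the claimed formula. When $d_{\mathcal Q}(s)=0$ the conditional is undefined, but the product is zero, so I will adopt that convention.

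For the "in particular" part, note that the right-hand side for row $s$ involves only $d_{\mathcal Q}(s)$, $p_{\mathcal Q}(\cdot\mid s)$, and $\pi_\Theta(\cdot\mid s)$. The last of these depends only on $\Theta_{s,\cdot}$. So the gradient-flow ODE decouples across rows. If $d_{\mathcal Q}(s)=0$, the time derivative of every entry of that row vanishes identically, and the row stays constant.

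There is no real obstacle here. The only care needed is the bookkeeping for the $-\infty$ entries and for the undefined conditional at states with zero mass.
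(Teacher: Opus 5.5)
Your proposal is correct and follows essentially the same route as the paper. Both arguments combine the softmax log-derivative identity $\partial_{\Theta_{s,a}}\log\pi_\Theta(a'\mid s')=\mathbf{1}\{s'=s\}(\mathbf{1}\{a'=a\}-\pi_\Theta(a\mid s))$ with linearity of expectation; the paper just first writes the loss as $\sum_{s'}d_{\mathcal Q}(s')\sum_b p_{\mathcal Q}(b\mid s')[-\log\pi_\Theta(b\mid s')]$, and your extra bookkeeping for $-\infty$ entries and zero-mass states is a harmless refinement the paper leaves implicit.
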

\begin{proof}
We rewrite the loss as follows:
\[
    L_{\mathcal Q}(\Theta)
    =
    \sum_{s'}d_{\mathcal Q}(s')
    \sum_{b}p_{\mathcal Q}(b\mid s')\qty[-\log\pi_\Theta(b\mid s')].
\]
In particular, for a given state $s$, we have
\[
    \frac{\partial}{\partial \Theta_{s,a}}\log\pi_\Theta(b\mid s')
    =
    \mathbf 1\{s'=s\}\qty(\mathbf 1\{b=a\}-\pi_\Theta(a\mid s)).
\]
Therefore
\begin{align*}
    \frac{\partial L_{\mathcal Q}}{\partial \Theta_{s,a}}
    &=
    -d_{\mathcal Q}(s)
    \sum_b p_{\mathcal Q}(b\mid s)
    \qty(\mathbf 1\{b=a\}-\pi_\Theta(a\mid s))\\
    &=
    d_{\mathcal Q}(s)\qty(\pi_\Theta(a\mid s)-p_{\mathcal Q}(a\mid s)),
\end{align*}
as desired.
\end{proof}

We now use \Cref{app:lem:rowwise-grad} to prove the pretraining convergence result.
\begin{proof}[Proof of \Cref{thm:pretrain}]
We first initialize $\Theta(0)=0$ to be the zero logit matrix. Let $\mathcal{D}$ be the world model transition distribution described in the main text.
For each state $s=u\to v$, let $\mathcal{A}(s):=N_v$ denote the set of valid next edge-states and let $m_s:=|\mathcal{A}(s)|$.
Under $\mathcal{D}$, we have the conditional target distribution
\[
p_{\mathcal{D}}(a\mid s)=
\begin{cases}
\frac{1}{m_s}, & a\in \mathcal{A}(s),\\
0, & a\notin \mathcal{A}(s).
\end{cases}
\]

By Lemma~\ref{app:lem:rowwise-grad}, the pretraining gradient flow satisfies, for every $s,a$,
\[
\dv{\Theta_{s,a}}{t} =-d_{\mathcal{D}}(s)\big(\pi_{\Theta(t)}(a\mid s)-p_{\mathcal{D}}(a\mid s)\big).
\]
By assumption, we have that all of the states of the world model have equally likely marginal probability. Hence, for all of the gradient flow logits, we can rescale time by $d_\mathcal{D}(s)$, so that:
\[
\dv{\Theta_{s,a}}{t} =-\big(\pi_{\Theta(t)}(a\mid s)-p_{\mathcal{D}}(a\mid s)\big).
\]
We now fix any state $s$. Because $\Theta_{s,a}(0)=0$ for all $a$, we have complete symmetry over all states at $t=0$.
Moreover, the target distribution $p_{\mathcal{D}}(\cdot\mid s)$ is invariant under permutations within the two groups
$\mathcal{A}(s)$ and $\mathcal{A}(s)^c$. Since the ODE is deterministic and coordinates within each group have identical right-hand sides
whenever they are equal, uniqueness of ODE solutions implies that for all $t\ge 0$,
\[
\Theta_{s,a}(t)=u_s(t)\ \ \forall a\in \mathcal{A}(s),
\qquad
\Theta_{s,a}(t)=v_s(t)\ \ \forall a\notin \mathcal{A}(s),
\]
for some scalars $u_s(t),v_s(t)$.

Let $M:=2|E|$ be the total number of edge-states, and denote $n_s:=M-m_s$.
Define the total probability mass assigned to valid actions
\[
p_s(t):=\sum_{a\in\mathcal{A}(s)}\pi_{\Theta(t)}(a\mid s)
=\frac{m_s e^{u_s(t)}}{m_s e^{u_s(t)}+n_s e^{v_s(t)}}.
\]
Then each valid action has probability $p_s(t)/m_s$ and each invalid action has probability $(1-p_s(t))/n_s$.
Applying Lemma~\ref{app:lem:rowwise-grad} to one representative valid action and one representative invalid action gives
\[
\dv{u_s}{t} = \left(\frac{1}{m_s}-\frac{p_s(t)}{m_s}\right)
= \frac{1-p_s(t)}{m_s},
\qquad
\dv{v_s}{t} = - \frac{1-p_s(t)}{n_s}.
\]
Hence for the logit gap $g_s(t):=u_s(t)-v_s(t)$ we have
\[
\dv{g_s}{t} = \,(1-p_s(t))\left(\frac{1}{m_s}+\frac{1}{n_s}\right)\ge 0,
\]
so $g_s(t)$ is nondecreasing. Writing $w_s(t):=e^{g_s(t)}$ and using
\[
p_s(t)=\frac{m_s w_s(t)}{m_s w_s(t)+n_s}\quad\Rightarrow\quad 1-p_s(t)=\frac{n_s}{m_s w_s(t)+n_s},
\]
we obtain
\[
\dv{w_s}{t} =w_s(t)\dv{g_s}{t} 
= \left(\frac{1}{m_s}+\frac{1}{n_s}\right)\frac{n_s w_s(t)}{m_s w_s(t)+n_s}
\ \ge\  \frac{1}{m_s},
\]
where for the final inequality we have used that $w_s(t)\ge w_s(0)=1$ and thus
$\frac{n_s w_s}{m_s w_s+n_s}\ge \frac{n_s}{m_s+n_s}$. 
Therefore $w_s(t)\ge 1+\frac{1}{m_s}t$ and hence
\[
1-p_s(t)=\frac{n_s}{m_s w_s(t)+n_s}\le \frac{n_s}{m_s\left(1+\frac{1}{m_s}t\right)}
=\frac{n_s}{m_s+t}.
\]
Altogether, we have that the total invalid mass $1-p_s(t)$ goes to $0$ as $t\to\infty$, and the valid mass goes to $1$.
Because all valid logits remain equal, the distribution over valid actions is uniform at all times:
\[
\pi_{\Theta(t)}(a\mid s)=\frac{p_s(t)}{m_s} \quad \forall a\in\mathcal{A}(s).
\]
Consequently, for any $\varepsilon>0$ we may choose $T_s(\varepsilon):=n_s/\varepsilon$ so that for all $t\ge T_s(\varepsilon)$ we have $1-p_s(t)\le \varepsilon$, and therefore
\begin{align*}
\max_{a\in\mathcal{A}(s)}\bigg|\pi_{\Theta(t)}(a\mid s)-\frac{1}{m_s}\bigg|\le \frac{\varepsilon}{m_s} \quad\text{and}\quad  \sum_{a\notin\mathcal{A}(s)}\pi_{\Theta(t)}(a\mid s)\le \varepsilon.
\end{align*}
Since this holds for every state $s$, pretraining under gradient flow learns the world model transition
kernel up to arbitrary error in finite time, as desired.
\end{proof}

Using a similar technique of analyzing a training with a teacher distribution using cross-entropy loss, we can prove the convergence of the SFT model. In the following section, we provide the proof for \Cref{thm:sft}.

\begin{proof}[Proof of \Cref{thm:sft}]
We assume the SFT phase is initialized at the pretrained world model described in the text, i.e.,
for every state $s$ all valid next-actions have equal logits,
and invalid actions have zero probability (equivalently logits $-\infty$). We also assume $\mathcal{D}_x$ is uniform with respect to the targets; this only affects the relative timescales of convergence.

Let $\mathcal{Q}_{\mathrm{SFT}}$ denote the induced distribution over adjacent transition pairs $(s,a)$ obtained by the following procedure:
sample a target $x\sim\mathcal{U}(\{t_i\})$, sample a golden path $y\sim\mathcal{D}_x$, and then sample a uniformly random transition
$(y_{t-1},y_t)$ (i.e. each edge connecting diamonds, and each of the $L$ multiedges at a diamond will be sampled uniformly at random).

Then the SFT objective can be written as
\[
L_{\mathrm{SFT}}(\Theta)=\mathbb{E}_{(s,a)\sim \mathcal{Q}_{\mathrm{SFT}}}\big[-\log \pi_\Theta(a\mid s)\big].
\]
By Lemma~\ref{app:lem:rowwise-grad}, for each state $s$ the gradient flow depends only on the conditional
$p_{\mathcal{Q}_{\mathrm{SFT}}}(\cdot\mid s)$, and if a state $s$ is never encountered in golden paths then its row never updates.

First, observe that by construction, every golden path is a shortest path from $s_0$ to some $t_i$ and therefore never contains backtracking.
In particular, none of the backward-pointing states $R_{i,j}^-$ or $L_{i,j}^-$ appear in $y$.
Hence for each such backward state $s$ we have $d_{\mathcal{Q}_{\mathrm{SFT}}}(s)=0$, and Lemma~\ref{app:lem:rowwise-grad} implies
$\Theta_{s,\cdot}(t)$ is constant over time. Therefore the associated transition probabilities remain equal to their pretrained values:
\[
b_j(t)= b_j(0)=\frac{L}{L+1},
\qquad
d_j(t)= d_j(0)=\frac{1}{L+1}.
\]

We now analyze the forward-oriented states. First, consider a forward-oriented state $R_{i,j}^+$, which has exactly one forward action and $L$ backward actions.
On any golden path, whenever we are in state $R_{i,j}^+$ the next state is deterministically the forward connector state
$L_{i,j+1}^+$.
Therefore the SFT conditional satisfies
$p_{\mathcal{Q}_{\mathrm{SFT}}}(\cdot\mid R_{i,j}^+)$ being a point mass on the forward connector action.

By symmetry among the $L$ backward multiedges and the symmetric pretrained initialization, their logits remain equal for all time,
so the row can be summarized by two scalars: $u(t)$ (logit for the unique forward connector) and $v(t)$ (common logit for each
backward multiedge). Then
\[
a_j(t)
=\frac{e^{u(t)}}{e^{u(t)}+L e^{v(t)}}.
\]
Applying Lemma~\ref{app:lem:rowwise-grad} to this row yields (up to time rescaling of $d_{\mathcal{Q}_{\mathrm{SFT}}}(R_{i,j}^+)$):
\[
\dv{u}{t} =1-a_j(t),\qquad \dv{v}{t} =-\frac{1-a_j(t)}{L}.
\]
Hence for the logit gap $g(t):=u(t)-v(t)$ we have
\[
\dv{g}{t} =(1-a_j(t))\left(1+\frac{1}{L}\right)\ge 0.
\]
Let $w(t):=e^{g(t)}$. Using $a_j(t)=\frac{w(t)}{w(t)+L}$ one computes
\[
\dv{w}{t} =w(t)\dv{g}{t} =(L+1)\frac{w(t)}{w(t)+L}\ge 1,
\]
since $w(t)\ge w(0)=1$. Therefore $w(t)\ge 1+t$ and
\[
1-a_j(t)=\frac{L}{w(t)+L}\le \frac{L}{t+L+1}.
\]
Thus for any $\kappa>0$, taking $t\ge \frac{L}{\kappa}-L-1$ ensures $a_j(t)\ge 1-\kappa$.

Finally, we consider a forward-oriented state of type $L_{i,j}^+$.
This state has $L$ forward multiedges transitions and one backward transition.
On any golden path, whenever we are at $L_{i,j}^+$ the next action is always one of the forward multiedges across $\diamondsuit(u_{i,j,l},u_{i,j,r})$.
Therefore the SFT conditional satisfies $p_{\mathcal{Q}_{\mathrm{SFT}}}(\cdot \mid L_{i,j}^+)$ having zero probability on the backward transition.

Let $\rho_j(t)$ denote the model’s probability of taking the backward connector at $L_{i,j}^+$; then $c_j(t)=1-\rho_j(t)$.
The backward connector logit is pushed down at rate proportional to $\rho_j(t)$, and hence $\rho_j(t)\to 0$ and $c_j(t)\to 1$.
Since $\mathcal{D}_x$ is invariant under permutations of the $L$ multiedges in each diamond
(i.e. each forward multiedge is equally likely in the population loss), we similarly have that 
all $L$ forward multiedges share a common logit $u(t)$ and the backward connector has logit $v(t)$, so
\[
c_j(t)=\frac{L e^{u(t)}}{L e^{u(t)}+e^{v(t)}}=\frac{L w(t)}{L w(t)+1},\quad w(t):=e^{u(t)-v(t)}.
\]
The same calculation gives $\dv{w}{t} \ge 1/L$, hence $w(t)\ge 1+t/L$ and
\[
1-c_j(t)=\frac{1}{L w(t)+1}\le \frac{1}{t+L+1}.
\]
Thus for any $\kappa>0$, taking $t\ge \frac{1}{\kappa}-L-1$ ensures $c_j(t)\ge 1-\kappa$, as desired.
\end{proof}

\section{RLVR} \label{app:sec:RLVR}

\subsection{Setup and Notations} \label{app:subsec:RLVRnotation}

Recall that: \begin{align*}
    \nabla_\Theta J(\Theta) &= \E_x\E_{y \sim \pi} \qty[r(x,y) \nabla_\Theta \log \pi_\Theta(y)] = \E_x\E_{y\sim \pi_\Theta } \qty[r(x,y) \sum_{t=0}^{|y|-1} \nabla_\Theta \log \pi_\Theta (s_{t+1} | s_t)].
\end{align*}
In the main text, we claimed that this is equivalent to:\begin{align*}
    \frac{\partial J}{\partial \Theta_{s,a}} = \E_x \qty[d_x(s) \pi(a | s) A_x(s,a)]
\end{align*}
where $d_x(s) := \E_y[\sum_{t<\tau} \mathbf{1}\{s_t=s\}]$ is the expected number of times we reach state $s$ before hitting the target $x$, and $A_x(s,a)$ is the advantage for the policy $\pi$ for the reward (hitting time) when choosing action $a$ compared to the other actions at state $s$. We prove this claim below.

\begin{lemma}  \label{app:lemma:policygrad}
The policy gradient can be expressed as:
    \begin{align*}
        \E_x\E_{y\sim \pi } \qty[r(x,y) \sum_{t=0}^{|y|-1} \nabla_\Theta \log \pi (s_{t+1} | s_t)] = \E_x \qty[d_x(s) \pi(a | s) A_x(s,a)],
    \end{align*}
    where \begin{align*}
            A_x(s,a) = \bar h_x(s) - h_x(a) \quad \mathrm{where} \quad \bar h_x(s) := \sum_{a'}\pi(a'|s) h_x(a').
    \end{align*}
    Here, we recall that $h_x(\cdot)$ denotes the expected hitting time of a state whose head is the target $x$, starting from a given current state.
\end{lemma}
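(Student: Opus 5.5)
The plan is to prove the identity one target $x$ at a time and average over $x$ at the end. Throughout, the per-rollout reward is $r = \mathbf{1}\{\text{hit}\} - |y|$ with $\beta=1$. Since the verifier stops the rollout at hitting, the indicator equals $1$ almost surely, assuming $\tau_x<\infty$ a.s. (the chain on the finite graph reaches every leaf with positive probability). So $r(x,y) = 1-\tau_x$, and the constant $1$ contributes nothing because $\E_y[\nabla\log\pi_\Theta(y)]=0$. It therefore suffices to show, for fixed $x$,
\begin{align*}
\E_y\Big[-\tau_x \sum_{t<\tau_x} \nabla_{\Theta_{s,a}} \log \pi(s_{t+1}|s_t)\Big] = d_x(s)\,\pi(a|s)\,A_x(s,a).
\end{align*}

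The first step is the standard causality (reward-to-go) argument. Write $\tau_x = t + (\tau_x - t)$ for the contribution at step $t$. The past part $t$ is $\mathcal{F}_t$-measurable, and the score at step $t$ has conditional mean zero given $s_t$, so the past part drops out. This leaves $-\E\big[\sum_{t<\tau_x} \nabla\log\pi(s_{t+1}|s_t)\,(\tau_x - t)\big]$. Conditioning on $(s_t,s_{t+1})$ and using the Markov property, $\E[\tau_x - t \mid s_t, s_{t+1}] = 1 + h_x(s_{t+1})$, where $h_x(s_{t+1})=0$ if the head of $s_{t+1}$ is $x$. The constant $1$ again integrates to zero against the score. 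This exchange is justified by dominated convergence, since $\tau_x$ has exponential tails on a finite absorbing chain.

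The second step uses the softmax derivative $\partial_{\Theta_{s,a}}\log\pi(b|s') = \mathbf{1}\{s'=s\}(\mathbf{1}\{b=a\}-\pi(a|s))$. For each visit $s_t=s$, the conditional expectation over $s_{t+1}$ is
\begin{align*}
-\sum_b \pi(b|s)\big(\mathbf{1}\{b=a\}-\pi(a|s)\big)h_x(b) = \pi(a|s)\big(\bar h_x(s) - h_x(a)\big).
\end{align*}
Summing over visits before $\tau_x$ and taking expectations gives the factor $d_x(s)$ by definition of $d_x$. The result is $d_x(s)\pi(a|s)A_x(s,a)$, and averaging over $x\sim\mathcal{U}(t_i)$ completes the identity.

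The only delicate points are technical. One is the interchange of the infinite sum over $t$ with expectation, handled by the geometric tail of $\tau_x$. The other is the aggregated multiedge states: summing the identity over the $L$ symmetric copies is consistent with the aggregated definition of $d_x$, because all copies share $\pi$ and $h_x$ by symmetry. I expect the main care to be needed in the reward-to-go step, namely showing rigorously that the past-reward terms vanish under a random stopping time. I would handle this by writing the sum as $\sum_t \mathbf{1}\{t<\tau_x\}(\cdot)$, noting that $\{t<\tau_x\}\in\mathcal{F}_t$, and applying the tower property term by term.
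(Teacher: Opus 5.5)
Your proposal is correct and follows essentially the same route as the paper's proof. Both drop the constant reward via the zero-mean score, discard the past-reward term $t$ to pass to the reward-to-go $\tau_x - t$, and then apply the softmax score derivative to obtain $d_x(s)\pi(a|s)(\bar h_x(s)-h_x(a))$. You also fill in steps the paper leaves implicit, namely the identity $\E[\tau_x - t \mid \mathcal{F}_{t+1}] = 1 + h_x(s_{t+1})$ on $\{t<\tau_x\}$, the measurability of $\{t<\tau_x\}$ with respect to $\mathcal{F}_t$, and the integrability that justifies interchanging sums and expectations.
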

\begin{proof}
    First, we note that under softmax parameterization, we have \begin{align*}
        \pi_\Theta(a | s) = \frac{\exp(\Theta_{s,a})}{\sum_{a'} \exp(\Theta_{s,a'})},
    \end{align*}
    and so
    \begin{align*}
        \frac{\partial \log \pi_\Theta(a_t|s_s)}{\partial \Theta_{s,a}} = \mathbf{1}\{s_t=s\} \qty(\mathbf{1}\{a_t=a\} - \pi_\Theta(a|s)).
    \end{align*}
    In our case, we have that $r(x,y) = 1 - |y|$. We can first simplify by noting that for any fixed $t$, \begin{align*}
        \E[\nabla_\Theta \log \pi_\Theta(a_t|s_t) | s_t] = \sum_{a} \pi_\Theta(a|s_t) \nabla_\Theta(a|s_t) = \nabla_\Theta \left(\sum_a \pi_\Theta(a|s_t)\right) = 0.
    \end{align*}
Therefore, we obtain:
    \begin{align*}
        \nabla_\Theta J(\Theta) &= \E_{x,y}\qty[\sum_{t=0}^{|y|-1} (1-|y|)  \mathbf{1}\{s_t=s\} \qty(\mathbf{1}\{a_t=a\} - \pi_\Theta(a|s))] \\
        &= -\E_{x,y}\qty[\sum_{t=0}^{|y|-1} |y|  \mathbf{1}\{s_t=s\} \qty(\mathbf{1}\{a_t=a\} - \pi_\Theta(a|s))].
    \end{align*}

    To continue, we note that for any fixed $t$, it holds that:
    \begin{align*}
        \E\qty[t \nabla_\Theta \log_\Theta(a | s_t) | s_t] = 0,
    \end{align*}
    due to a similar argument as before (rewards from the past do not impact the gradient). Therefore, we can rewrite the policy gradient as:
    \begin{align*}
        \nabla_\Theta J(\Theta) &= -\E_{x,y}\qty[\sum_{t=0}^{|y|-1} |y|  \mathbf{1}\{s_t=s\} \qty(\mathbf{1}\{a_t=a\} - \pi_\Theta(a|s))] \\
        &= -\E_{x,y}\qty[\sum_{t=0}^{|y|-1} (|y|-t)  \mathbf{1}\{s_t=s\} \qty(\mathbf{1}\{a_t=a\} - \pi_\Theta(a|s))] \\
        &= \E_x \qty[d_x(s) \pi(a | s) (\bar h_x(s) - h_x(a))],
    \end{align*}
    as desired.
\end{proof}

We are now ready to prove \Cref{lemma:logit_gap}.
\begin{proof}[Proof of \Cref{lemma:logit_gap}]
    First, we note that $\pi(a_0 | s) + \pi(a_1|s) = 1$. Then, we observe that:     \begin{align*}  
    \bar h_x(s) - h_x(a_1) &= \pi(a_0|s) h_x(a_0) + \pi(a_1|s) h_x(a_1) - h_x(a_1)\\
    &=  \pi(a_0|s) (h_x(a_0) - h_x(a_1)).
    \end{align*}
    Similarly, 
    \begin{align*}
        \bar h_x(s) - h_x(a_0) &= -\pi(a_1|s) (h_x(a_0) - h_x(a_1)).
    \end{align*}

    Observe that $\bar h_x(s) - h_x(a_1)$ and $\bar h_x(s) - h_x(a_0)$ have opposite signs. Combining with \Cref{app:lemma:policygrad}, we obtain:
    \begin{align*}
        \dv{\mathcal{D}_s}{t}  &= \mathrm{sgn}\qty( \E_x\qty[d_x(s) \qty[\pi(a_1|s) (\bar h_x(s) - h_x(a_1)) - \pi(a_0|s) (\bar h_x(s) - h_x(a_0))]] ) \\
       &= \mathrm{sgn}\qty( 2 \E_x\qty[d_x(s) \pi(a_1|s) \pi(a_0|s) \qty(h_x(a_0) - h_x(a_1))] ) \\
       &= \mathrm{sgn}\qty( \E_x\qty[d_x(s) \qty(h_x(a_0) - h_x(a_1))] ),
    \end{align*}
    as desired.
\end{proof}

To characterize the dynamics of RLVR, we first define the following notations.

\begin{definition} \label{app:defn:g_q}
    Fix a target leaf $x$.
    We define the following quantities:
    \begin{enumerate}
        \item Let $H_f := h_x(u_{i,1,l}\rightarrow f)$ denote the hitting time of a fixed target $x$ from any of the states whose head is the fork $f$ (including the initial source state $s_0 \rightarrow f$); by symmetry they are all equal, hence why we consider a single value.
        \item For state $s$ not on the branch of $x$, we define $\tilde g(s)$ to be the expected time of hitting the fork, starting at state $s$. That is, $\tilde g(s) := \E[\tau_f]$ where $\tau_f := \min\{t\geq 0 : s_0=s, \mathrm{head}(s_t)=f\}$. In particular, this means that $h_x(s) = \tilde g(s) + H_f$, and we have the absorbing condition $h_x(u_{i,1,l} \rightarrow f) = 0$ for branch $i$ not equal to the branch of target $x$.
        \item For state $s$ on the branch of $x$, we define $g(s)$ to be the expected time of hitting either $f$ or $x$ (e.g. both states are absorbing). That is, $g(s) := \E[\tau_{\{f,x\}}]$ where $\tau_{\{f,x\}} := \min\{t\geq 0 : s_0=s, \mathrm{head}\in\{f, x\}\}$.
        \item For state $s$ on the branch of $x$, we define $q(s)$ to be the probability it hits $f$ before $x$. In particular, this means that $h_x(s) = g(s) + q(s) H_f$.
    \end{enumerate}
    
\end{definition}

\begin{lemma} \label{app:lemma:q_gap}
    Define the following $q$-gaps:
    \begin{align*}
        \delta_j := q(L_j^-) - q(L_{j+1}^+),\quad \epsilon_j := q(R_{j-1}^-) - q(R_j^+)
    \end{align*}
    Then, it holds that $\delta_j>0$ and $\epsilon_j>0$ for all $j$.
\end{lemma}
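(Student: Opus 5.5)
The plan is a first-passage decomposition along the target branch, using the strong Markov property of the bigram chain over edge states. Fix the target $x=t_i$ and work on branch $i$. The key structural fact is that the branch is a line of nodes $f, u_{i,1,l}, u_{i,1,r}, \dots, u_{i,K,r}, t_i$, and every edge state moves its head to an adjacent node on this line. Hence, to move the head from a node strictly left of some node $v$ to $v$ itself, the chain must enter an edge state whose head is $v$ and which arrives from the left. For $v=u_{i,j+1,l}$ the only such state is $L_{j+1}^+$. For $v=u_{i,j,r}$ such states are the $L$ multiedge copies of $R_j^+$, which by the symmetry discussed in \Cref{subsec:notation} are interchangeable and have the same $q$-value.

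\emph{Gap $\delta_j$.} Starting from $L_j^-$, the head is at $u_{i,j,l}$, which lies left of $u_{i,j+1,l}$. Let $p_j$ be the probability, starting from $L_j^-$, of hitting $f$ before ever entering $L_{j+1}^+$. Any trajectory reaching $x$ must first pass through $L_{j+1}^+$. The strong Markov property at that entry time then gives
\begin{align*}
q(L_j^-) = p_j + (1-p_j)\, q(L_{j+1}^+).
\end{align*}
Rearranging, $\delta_j = p_j\,(1-q(L_{j+1}^+))$. It therefore suffices to establish two facts.
\begin{itemize}
\item $p_j>0$. Along the way backward to $f$, namely $L_j^-\to R_{j-1}^-\to L_{j-1}^-\to\cdots\to f$, every transition has probability $d_\cdot$ or $b_\cdot$. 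These are strictly positive at any finite training time, since all logits on valid actions stay finite under (sign) gradient flow over finite time.
\item $q(L_{j+1}^+)<1$. From $L_{j+1}^+$ the all-forward path to $x$ has positive probability, being a product of $c_\cdot, a_\cdot$ factors.
\end{itemize}
The boundary case $j=K$ is immediate: $L_{K+1}^+$ has head $x$, so $q=0$, while $q(L_K^-)\ge p_K>0$.

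\emph{Gap $\epsilon_j$.} The same argument applies with the roles shifted. From $R_{j-1}^-$ the head is $u_{i,j-1,r}$ (or $f$ when $j=1$), which lies left of $u_{i,j,r}$. Reaching $x$ requires first entering some copy of $R_j^+$. Let $p'_j$ be the probability of hitting $f$ first. Then
\begin{align*}
\epsilon_j = p'_j\,(1-q(R_j^+)).
\end{align*}
Here $p'_j>0$ by the backward path of $b_\cdot, d_\cdot$ transitions; for $j=1$ we have $R_0^-$ with head $f$, so $q(R_0^-)=1$ directly. Also $q(R_j^+)<1$ via the forward path.

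The only delicate point is the positivity of the transition probabilities. Zero-probability transitions would make $p_j=0$ or $q=1$ and collapse the strict inequalities. So I would state explicitly that for every finite $t$ all of $a_j,b_j,c_j,d_j\in(0,1)$: each logit moves at unit speed under sign gradient flow, hence remains finite. The fork-incoming states are fixed uniform, and the treatment of $f$ as absorbing in the definition of $q$ is consistent with this.
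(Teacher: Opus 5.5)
Your proof is correct and is essentially the paper's bottleneck argument: the paper observes that every path from $L_{j+1}^+$ to the fork must pass through $L_j^-$ and concludes by event inclusion, while you decompose at the dual bottleneck $L_{j+1}^+$ on the way from $L_j^-$ to the target (and at $R_j^+$ for $\epsilon_j$). Your factorization $\delta_j = p_j\,(1-q(L_{j+1}^+))$, together with the explicit positivity of $a_j,b_j,c_j,d_j$ at finite training times, is what actually gives the strict inequality; the paper's event-inclusion sketch only yields $\le$ and leaves strictness implicit.
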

\begin{proof}
For $\delta_j$, this follows intuitively from the fact that all paths from $L_{j+1}^+$ to the fork must pass through $L_j^{-}$. Hence, the event that a path arrives back at the fork from $L_{j+1}^{+}$ is a subset of the event that a path arrives back to the fork from $L_j^-$. Similar reasoning holds for $\epsilon_j$, and the lemma follows.
\end{proof}

\begin{lemma} \label{app:lemma:tilde_g_gap}
    Define the following $\tilde g$-gaps:
    \begin{align*}
        \Delta_j := \tilde g(L_{j+1}^+) - \tilde g(L_j^-), \quad E_j := \tilde g(R_j^+) - \tilde g(R_{j-1}^-).
    \end{align*}
    Then, it holds that $\Delta_j \geq 2$ and $E_j\geq 2$ for all $j$.
\end{lemma}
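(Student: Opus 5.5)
The plan is a first-passage (strong Markov) decomposition on a fixed non-target branch, together with a count of the minimum number of steps forced by the graph topology. Fix a target $x$ and a branch $i$ other than the branch of $x$, and suppress $i$. Under the current policy every transition probability between valid neighbouring states is strictly positive: the logits are finite at any finite training time, and the fork-incoming states are fixed to be uniform. Hence from every state of the branch the chain reaches the fork almost surely with finite expected time, so all $\tilde g(\cdot)$ in the statement are finite.

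For $\Delta_j$, start from $L_{j+1}^+$, the state $u_{j,r}\to u_{j+1,l}$, whose head lies strictly deeper in the branch than $u_{j,r}$. Removing the node $u_{j,r}$ separates $u_{j+1,l}$ from $f$. Therefore any trajectory from $L_{j+1}^+$ to the fork must at some point enter one of the $L$ multiedge states $u_{j,r}\overset{(\ell)}{\rightarrow}u_{j,l}$, that is, the aggregated state $L_j^-$.

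\begin{itemize}
\item Before that, it must enter the connector state $R_j^-=u_{j+1,l}\to u_{j,r}$. This is the only way to arrive at $u_{j,r}$ from the deeper side, and it takes at least one step.
\item It then needs at least one further step to cross a multiedge to $u_{j,l}$.
\end{itemize}

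So the first hitting time $\sigma$ of $L_j^-$ satisfies $\sigma\ge 2$ almost surely. By the strong Markov property at $\sigma$, and because all $L$ multiedge copies of $L_j^-$ have the same $\tilde g$ value by the symmetry discussed in the Notation subsection,
\begin{align*}
\tilde g(L_{j+1}^+) = \E[\sigma] + \tilde g(L_j^-) \ge 2 + \tilde g(L_j^-).
\end{align*}
For this step I must also check that the fork cannot be hit before $\sigma$. This follows from the separation argument above. The leaf state $u_{K,r}\to t$ only bounces back deterministically, so it does not create any shortcut.

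For $E_j$ the argument is the same with the roles of connector and diamond swapped. From $R_j^+=u_{j,l}\to u_{j,r}$, the node $u_{j,l}$ separates $u_{j,r}$ from $f$. Any path to $f$ must first return to $u_{j,l}$ through some multiedge state $L_j^-$ (at least one step). It must then take the connector state $R_{j-1}^-=u_{j,l}\to u_{j-1,r}$ (at least one more step). This gives $\tilde g(R_j^+)\ge 2+\tilde g(R_{j-1}^-)$.

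The boundary case $j=1$ needs separate care. There $R_0^-=u_{1,l}\to f$ has head $f$, so $\tilde g(R_0^-)=0$, and the same two-step count applies. Similarly, the case $j=K$ in $\Delta_j$ uses $L_{K+1}^+=u_{K,r}\to t$, which is handled identically.

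The only real care point is the bookkeeping with aggregated multiedge states. One must ensure that the strong Markov decomposition is applied to the first entry into any of the $L$ copies, and that $\tilde g$ is the same for each copy. This symmetry was already established in the Notation subsection, so I expect no serious obstacle.
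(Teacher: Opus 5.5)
Your proposal is correct and is essentially the paper's argument: the paper simply observes that every path from $L_{j+1}^+$ (resp.\ $R_j^+$) to the fork must pass through $L_j^-$ (resp.\ $R_{j-1}^-$), which takes at least two moves. Your first-passage decomposition $\tilde g(L_{j+1}^+) = \E[\sigma] + \tilde g(L_j^-)$ with $\sigma \ge 2$ rigorously writes out that same cut-vertex observation, and your finiteness and multiedge-symmetry bookkeeping fills in details the paper leaves implicit.
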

\begin{proof}
    For $\Delta_j$, this follows intuitively from the fact that all paths from $L_{j+1}^+$ to the fork must pass through $L_j^{-}$, which requires at least two moves. A similar reasoning holds for $E_j$, and the lemma follows.
\end{proof}

We remark that we cannot immediately obtain clean positivity results for the $g$-gaps (i.e., the expected absorbing time onto $\{f, x\}$ on for states on the branch of a target $x$). To analyze it, we first define the following quantities.

\begin{definition} \label{app:def:mu}
Fix a target $x$, and consider the entry state $L_1^+$ upon entering any branch from the fork. We define the following for a state $s$ on this same branch:
\begin{enumerate}
    \item When $s$ is on the target branch, define $\mu(s) := \E\qty[\sum_{t<\tau_{\{f,x\}}} \mathbf{1}\{s_t=s \} | s_0=L_1^+]$. In other words, $\mu_s$ is the expected number of visits to state $s$ during a target branch attempt.
    \item When $s$ is not on the target branch, define $\tilde \mu(s) := \E\qty[\sum_{t<\tau_f} \mathbf{1}\{s_t=s\} | s_0=L_1^+]$. In other words, $\tilde \mu_s$ is the expected number of visits to state $s$ during a non-target branch attempt until it goes back to $f$.
\end{enumerate}
\end{definition}

\begin{definition} \label{app:def:psucc}
    Fix a target $x$, and define the success probability of hitting the target upon entering its branch $p_\mathrm{succ}$ before hitting $f$. That is, 
    \begin{align*}
        p_\mathrm{succ} := \mathbb{P}\{\mathrm{head}(s_{\tau_{\{f,x\}}}) = x | s_0 = L_1^+\},
    \end{align*}
    where $L_1^+$ is the entry state of the branch of $x$.
\end{definition}

\begin{proposition} \label{app:prop:d_x}
    Fix a target leaf $x$. Then, the following hold:
    \begin{enumerate}
        \item If $s$ is on the same branch as $x$, then we have $d_x(s) = \frac{\mu(s)}{p_\mathrm{succ}}$.
        \item If $s$ is on a different branch from $x$, then we have $d_x(s) = \frac{\tilde \mu(s)}{p_\mathrm{succ}}$.
    \end{enumerate}
\end{proposition}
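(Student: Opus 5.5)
The plan is to decompose a full rollout (from $s_0\to f$ until the head is $x$) into a sequence of i.i.d. \emph{branch attempts}, and then count visits to $s$ attempt by attempt. By the strong Markov property at each return to a fork-headed state, the rollout has a simple structure. First comes a uniform choice of branch at $f$ (the fork-incoming transitions are fixed uniform over the $W$ branches, and $s_0\to f$ behaves the same way by the convention that all fork-headed states share transition law). Then the chain evolves inside that branch starting from the entry state $L_1^+$ of the chosen branch. It runs until it either returns to $f$ (and the process renews) or, on the target branch only, reaches $x$ (and the process stops). Each attempt independently selects the target branch with probability $1/W$. A target-branch attempt succeeds with probability $p_\mathrm{succ}$, independently of the past. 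So the number of attempts $N$ is geometric with per-attempt success probability $p_\mathrm{succ}/W$. The expected number of target-branch attempts is therefore $1/p_\mathrm{succ}$, and the expected number of attempts on any fixed non-target branch $i$ is $\tfrac{1}{W}\cdot\tfrac{W}{p_\mathrm{succ}}=1/p_\mathrm{succ}$.

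Next I would write $d_x(s)=\E\bigl[\sum_{k=1}^N V_k(s)\bigr]$, where $V_k(s)$ is the number of visits to $s$ during attempt $k$. Here $s$ is a fixed state on some branch $i$ (aggregated over multiedges, which is harmless by linearity). Only attempts on branch $i$ contribute. Conditional on the branch chosen at attempt $k$, and on $\{N\ge k\}$, the strong Markov property says that $V_k(s)$ is distributed as the visit count from a fresh start at $L_1^+$. Its conditional mean is therefore $\mu(s)$ if branch $i$ is the target branch, and $\tilde\mu(s)$ otherwise. The key justification is that $\{N\ge k\}$ is determined by attempts $1,\dots,k-1$, so it is independent of attempt $k$'s branch choice and trajectory. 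This lets me apply a Wald-type identity:
\begin{align*}
d_x(s)=\sum_{k\ge1}\mathbb{P}(N\ge k)\,\mathbb{P}(\text{attempt }k\text{ on branch }i)\,\E[V_k(s)\mid \text{branch } i]=\E[N]\cdot\tfrac1W\cdot m(s),
\end{align*}
with $m\in\{\mu,\tilde\mu\}$. Substituting $\E[N]=W/p_\mathrm{succ}$ gives both claims.

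I expect the main obstacle to be bookkeeping rather than ideas. Three points need care. First, the definitions of $\mu$ and $\tilde\mu$ count visits strictly before absorption (at $f$, or at $\{f,x\}$), so that the visits in each attempt are counted exactly once. Second, the state $s_0\to f$ and the fork-incoming states must induce the same uniform branch choice. Third, I need $p_\mathrm{succ}>0$, which holds because all transition probabilities are positive along the training trajectory, so $N<\infty$ almost surely. This finiteness justifies exchanging sum and expectation by Tonelli, since all terms are nonnegative.
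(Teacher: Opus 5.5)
Your proof is correct and follows the same route as the paper. The paper also renews at each fork departure, notes that each attempt succeeds with probability $p_\mathrm{succ}/W$, concludes there are $W/p_\mathrm{succ}$ expected attempts and hence $1/p_\mathrm{succ}$ expected entries into any fixed branch, and multiplies by the per-entry visit count $\mu(s)$ or $\tilde\mu(s)$; your strong-Markov/Wald bookkeeping simply makes explicit the final multiplication step that the paper leaves implicit.
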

\begin{proof}
    Note that the probability of success for each branch entry is $\frac{p_\mathrm{succ}}{W}$, as one needs to choose the target branch with probability $1/W$ and then succeed with probability $p_\mathrm{succ}$. Hence, the total number of fork departures on expectation is $W/p_\mathrm{succ}$, so the expected number of entries into any fixed branch is $1/p_\mathrm{succ}$.
\end{proof}

\begin{proposition} \label{app:prop:Delta_g}
    Define:\begin{align*}
        \Delta g_j^{(a)} := g(L_j^-) - g(L_{j+1}^+), \quad \Delta g_j^{(c)} := g(R_{j-1}^-) - g(R_j^+).
    \end{align*}
    Furthermore, define $\Delta g_j^{(b)} := -\Delta g_j^{(a)}$ and $\Delta g_j^{(d)} := -\Delta g_j^{(c)}$. Then, the following hold:
    \begin{enumerate}
        \item $G_j^{(a)} = \E_x\qty[d_x(R_{i,j}^+) \cdot \qty(h_x(L_{i,j}^-) - h_x(L_{i,j+1}^+))] = \frac{1}{W p_\mathrm{succ}} \qty[\mu(R_j^+) \qty(\Delta g_j^{(a)} + \delta_j H_f) - (W-1) \tilde \mu(R_j^+) \Delta_j]$.
        \item $G_j^{(b)} = \E_x\qty[d_x(R_{i,j}^-) \cdot \qty(h_x(L_{i,j+1}^+) - h_x(L_{i,j}^-))] = \frac{1}{W p_\mathrm{succ}} \qty[\mu(R_j^-)\qty(-\Delta g_j^{(a)} - \delta_j H_f) + (W-1) \tilde \mu(R_j^-) \Delta_j]$.
        \item $G_j^{(c)} = \E_x\qty[d_x(L_{i,j}^+) \cdot \qty(h_x(R_{i,j-1}^-) - h_x(R_{i,j}^+))] = \frac{1}{W p_\mathrm{succ}} \qty[\mu(L_j^+) \qty(\Delta g_j^{(c)} + \epsilon_j H_f) - (W-1) \tilde \mu(L_j^+) E_j]$.
        \item $G_j^{(d)} = \E_x\qty[d_x(L_{i,j}^-) \cdot \qty(h_x(R_{i,j}^+) - h_x(R_{i,j-1}^-))] = \frac{1}{W p_\mathrm{succ}} \qty[\mu(L_j^-) \qty(-\Delta g_j^{(c)} - \epsilon_j H_f) + (W-1) \tilde \mu(L_j^-) E_j]$.
    \end{enumerate}
\end{proposition}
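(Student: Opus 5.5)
We prove all four identities by one computation: split the expectation over the target $x\sim\mathcal{U}(\{t_1,\dots,t_W\})$ according to whether $x$ lies on the branch $i$ containing the state under consideration. By the branch symmetry of \Cref{subsec:notation}, exactly one of the $W$ targets is on branch $i$ and the other $W-1$ are not. Each of the latter gives the same contribution. So for a state $s$ on branch $i$ with desired action $a_1$ and undesired action $a_0$,
\[
\E_x\qty[d_x(s)\qty(h_x(a_0)-h_x(a_1))] = \frac{1}{W}\Big[d_{\mathrm{on}}(s)\qty(h^{\mathrm{on}}(a_0)-h^{\mathrm{on}}(a_1)) + (W-1)\, d_{\mathrm{off}}(s)\qty(h^{\mathrm{off}}(a_0)-h^{\mathrm{off}}(a_1))\Big].
\]
Next we substitute \Cref{app:prop:d_x}: $d_{\mathrm{on}}(s)=\mu(s)/p_\mathrm{succ}$ and $d_{\mathrm{off}}(s)=\tilde\mu(s)/p_\mathrm{succ}$. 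This produces the common prefactor $1/(Wp_\mathrm{succ})$.

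We then rewrite the hitting-time differences using \Cref{app:defn:g_q}.

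On the target branch, $h_x(s)=g(s)+q(s)H_f$. Hence
\[
h_x(L_j^-)-h_x(L_{j+1}^+) = \qty(g(L_j^-)-g(L_{j+1}^+)) + \qty(q(L_j^-)-q(L_{j+1}^+))H_f = \Delta g_j^{(a)}+\delta_j H_f,
\]
using the definitions in \Cref{app:lemma:q_gap} and \Cref{app:prop:Delta_g}. In the same way,
\[
h_x(R_{j-1}^-)-h_x(R_j^+)=\Delta g_j^{(c)}+\epsilon_j H_f.
\]

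Off the target branch, $h_x(s)=\tilde g(s)+H_f$, so the $H_f$ terms cancel. This gives
\[
h_x(L_j^-)-h_x(L_{j+1}^+)=-\Delta_j, \qquad h_x(R_{j-1}^-)-h_x(R_j^+)=-E_j,
\]
with $\Delta_j,E_j$ as in \Cref{app:lemma:tilde_g_gap}.

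Putting these in for $s=R_j^+$ gives item 1, and for $s=L_j^+$ gives item 3. Items 2 and 4 have the same pair of next states with the roles of desired and undesired swapped. So the hitting-time difference simply changes sign, while the visit weights become $\mu(R_j^-),\tilde\mu(R_j^-)$ and $\mu(L_j^-),\tilde\mu(L_j^-)$ respectively. This gives the stated formulas, including the conventions $\Delta g^{(b)}_j=-\Delta g^{(a)}_j$ and $\Delta g^{(d)}_j=-\Delta g^{(c)}_j$.

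The main obstacle is bookkeeping rather than an inequality, in two places.

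First, for multiedge states, the aggregated $d_x$, $\mu$ and $\tilde\mu$ must be consistent with the per-logit gradient in \Cref{lemma:logit_gap}. The $L$ multiedge logits are equal by symmetry, and $h_x$ takes the same value on every copy, so aggregating only rescales by a positive constant. The $\mathrm{sgn}$ is therefore unaffected.

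Second, we must check that \Cref{app:prop:d_x} applies with the same $p_\mathrm{succ}$ for both on-branch and off-branch states. This holds because every branch is entered $1/p_\mathrm{succ}$ times in expectation. It also requires that a visit count measured from an off-branch entry $L_1^+$ until return to $f$ is exactly $\tilde\mu$. We will verify this carefully, using the fact that the fork-incoming transitions are fixed uniform, so each return to $f$ restarts a fresh branch choice.
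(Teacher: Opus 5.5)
Your proposal is correct and is essentially the paper's proof. You condition on whether the target lies on branch $i$ (probability $1/W$) or on one of the $W-1$ symmetric other branches, substitute $d_x=\mu/p_\mathrm{succ}$ or $\tilde\mu/p_\mathrm{succ}$, and use $h_x=g+qH_f$ on the target branch versus $h_x=\tilde g+H_f$ off it; the $H_f$ terms then cancel off-branch, and items 2 and 4 follow by swapping desired and undesired actions. Your extra remarks on multiedge aggregation, and on justifying the $1/p_\mathrm{succ}$ expected entry count through the independent uniform restarts at the fork, are slightly more careful than the paper, which leaves these points implicit.
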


\begin{proof}
    We will first consider the case $G_j^{(a)}$. If $R_j^+$ is on the target branch (which happens with probability $1/W$), then we have:\begin{align*}
        d_x(R_j^+) \cdot \qty(h_x(L_{i,j}^-) - h_x(L_{i,j+1}^+)) &= \frac{\mu(R_j^+)}{p_\mathrm{succ}} \cdot \qty(\qty(g(L_j^-) - g(L_{j+1}^+)) + (q(L_j^-) - q(L_{j+1}^+)) H_f) \\
        &= \frac{\mu(R_j^+)}{p_\mathrm{succ}} \cdot \qty(\Delta g_j^{(a)} + \delta_j H_f).
    \end{align*}
    If $R_j^+$ is not on the target branch (which happens with probability $(W-1)/W$), then we have:
    \begin{align*}
        d_x(R_j^+) \cdot \qty(h_x(L_{i,j}^-) - h_x(L_{i,j+1}^+)) &= \frac{\tilde \mu(R_j^+)}{p_\mathrm{succ}} \cdot \qty( \tilde g(L_j^-) - \tilde g(L_{j+1}^+) ) = \frac{\tilde \mu(R_j^+)}{p_\mathrm{succ}} \cdot \qty(-\Delta_j).
    \end{align*}
    This gives the desired expectation. Analogous calculations can be done for the other three cases to obtain the same result. 
\end{proof}

We now observe that we can rescale time in the four logit ODE's by a factor of $\frac{2}{W p_\mathrm{succ}}$, since those terms do not change the sign in our gradient flow. Therefore, we redefine:
\begin{enumerate}
    \item $G_j^{(a)} := \qty[\mu(R_j^+) \qty(\Delta g_j^{(a)} + \delta_j H_f) - (W-1) \tilde \mu(R_j^+) \Delta_j]$.
    \item $G_j^{(b)} := \qty[\mu(R_j^-)\qty(-\Delta g_j^{(a)} - \delta_j H_f) + (W-1) \tilde \mu(R_j^-) \Delta_j]$.
    \item $G_j^{(c)} := \qty[\mu(L_j^+) \qty(\Delta g_j^{(c)} + \epsilon_j H_f) - (W-1) \tilde \mu(L_j^+) E_j]$.
    \item $G_j^{(d)} := \qty[\mu(L_j^-) \qty(-\Delta g_j^{(c)} - \epsilon_j H_f) + (W-1) \tilde \mu(L_j^-) E_j]$.
\end{enumerate}
so that for $p_j\in\{a_j,b_j,c_j,d_j\}$, the gradient flow is 
\begin{align*}
    \dv{\mathcal{D}_j^{(p)}}{t} = \mathrm{sgn}\qty( G_j^{(p)}).
\end{align*}

\subsection{Summary of Important Quantities} \label{app:subsec:summary_quantities}
For ease of exposition, we summarize the notations defined above in a concise list with its correspondence to branch type. \begin{itemize}
    \item Off-target branch: $\tilde \mu(\cdot), \tilde g(\cdot), \Delta_j, E_j$ 
    \item On-target branch: $\mu(\cdot), g(\cdot), \Delta g_j^{a}, \Delta g_j^{c}, q(\cdot), \delta_j, \epsilon_j, p_\mathrm{succ}$
    \item Overall: $H_f$, which is the expected hitting time of fixed target $t_i$ from a branch. Since the targets $t_i$ are symmetric, we have that $J(\Theta) = 1 - H_f$. Moreover, using \Cref{app:prop:d_x} yields $H_f = \frac{W + g(L_1^+) + (W-1) \tilde g(L_1^+)}{p_\mathrm{succ}}$.
\end{itemize}
Note that in the context of the population loss, the expectation is uniform over the choice of on-target branch. In the remainder of this section, we will calculate these quantities exactly in terms of arbitrary $a_j, b_j, c_j, d_j$. 

We give closed form expressions for all of the below quantities, which were calculated with the assistance of SymPy. 
Then, in \ref{app:subsubsec:closed_form_verification}, we verify separately that these quantities (which are by definition unique) indeed hold for the stochastic system we have defined thus far. Unless denoted otherwise, the quantities below hold for all $1\leq j \leq K$, with the convention that a state $R_0^-$ is a fork-head state, and $L_{K+1}^+$ is a leaf-head state, as well as the summation and product of an empty set being 0 and 1 respectively.

\subsubsection{Off-target quantities} \label{app:subsubsec:offtarget}
Define the quantity $r_j := \frac{a_j c_j}{b_j d_j}$.
\begin{enumerate}
    \item $\tilde \mu(\cdot)$: We have \begin{align*}
        \tilde \mu(L_j^+) &= \prod_{m=1}^{j-1} r_m, \\
        \tilde \mu(R_j^+) &= \tilde \mu(L_j^-) = \frac{c_j}{d_j}\prod_{m=1}^{j-1} r_m, \\
        \tilde \mu(R_j^-) &= \tilde \mu(L_{j+1}^+) = \prod_{m=1}^j r_m.
    \end{align*}
    \item $E_j$: We have \begin{align*}
        E_j = \sum_{m=j}^K \qty( \frac{2(a_m + b_m)}{b_m d_m} \prod_{i=j}^{m-1} \frac{a_i c_{i+1}}{b_i d_i} ).
    \end{align*}
    \item $\Delta_j$: We have \begin{align*}
        \Delta_j = \frac{2 + c_{j+1} E_{j+1}}{b_j} = \frac{d_j E_j - 2}{a_j}.
    \end{align*}
    \item $\tilde g(\cdot )$: Define $F_j := \tilde g(R_j^-)$, so that $F_0=0$. Then we have \begin{align*}
        F_j &= \sum_{m=1}^j \qty(2 + (1-d_m) E_m + (1-b_m) \Delta_m),
        \end{align*}
        and
        \begin{align*}
        \tilde g(R_j^-) &= F_j, \\
        \tilde g(R_j^+) &= F_{j-1} + E_j, \\
        \tilde g(L_j^+) &= 1 + F_{j-1} + c_j E_j, \\
        \tilde g(L_j^-) &= 1 + F_{j-1} + (1-d_j) E_j.
     \end{align*}
\end{enumerate}
We will generally not be working with $\tilde g$ directly, but rather their adjacent differences $E_j, \Delta_j$. 

\subsubsection{On-target quantities} \label{app:subsubsec:ontarget}
Define the following quantities:
\begin{align*}
    r_j &:= \frac{a_j c_j}{b_{j-1} d_j}, \quad \forall\; 2\leq j\leq K, \\
    P_j &:= \prod_{m=j+1}^K r_m, \\
    \alpha_j &:= (1-a_j) + \frac{a_j}{d_j}(1-c_j), \\
    S_j &:= \sum_{m=j}^K \alpha_m P_m, \\
    Z_q &:= P_1 \qty(1-a_1 + \frac{a_1}{d_1}) + S_2. \\
\end{align*}
We also have the following quantities.
\begin{enumerate}
    \item $\delta_j$: We have $\delta_j =P_j/Z_q$.
    \item $\epsilon_j$: We have $\epsilon_j = \delta_j a_j/d_j$.
    \item $p_\mathrm{succ}$: We have \begin{align*}
        p_\mathrm{succ} = \frac{a_1 c_1}{d_1}\frac{P_1}{Z_q}.
    \end{align*}
    \item $\Delta g_j^a$: First, define the following:
    \begin{align*}
        \mathcal{B}_j &:= -\sum_{m=j+1}^K \frac{2(c_m + d_m)}{b_{m-1} d_m} \prod_{i=j+1}^{m-1}r_i, \\
        \beta_m &:= (1-a_m) + \frac{a_m (1-c_m)}{d_m}, \\
        \gamma_m &:= 2 - \frac{2(1-c_m)}{d_m}, \\
        P_g &:= \sum_{m=2}^K \beta_m P_m, \\
        Q_g &:= \sum_{m=2}^K (\beta_m \mathcal{B}_m + \gamma_m).
    \end{align*}
    Then, it holds that:\begin{align*}
        \Delta g_K^a &= -\frac{\qty(\frac{a_1}{d_1} + 1-a_1)\mathcal{B}_1 - \frac{2}{d_1} + Q_g + 1}{\qty(\frac{a_1}{d_1} + 1-a_1) P_1 + P_g}, \\
        \Delta g_j^a &= P_j \Delta g_K^a + \mathcal{B}_j.
    \end{align*}

    \item $\Delta g_j^c$: We have \begin{align*}
        \Delta g_j^c = \frac{a_j \Delta g_j^a - 2}{d_j}.
    \end{align*}
    Equivalently, when $2\leq j \leq K$, 
    \begin{align*}
        \Delta g_j^c = \frac{b_{j-1}\Delta g_{j-1}^a + 2}{c_j}.
    \end{align*}
    \item $g(\cdot)$: For simplicity, we will not write down the full expression, since we will generally be working with adjacent differences $\Delta g_j^a$ and $\Delta g_j^c$.
    \item $\mu(\cdot)$: We define the prefix product $A_j := \prod_{m=1}^j a_m$, as well as $B_j, C_j, D_j$ similarly. Then,
    \begin{align*}
        \mu(L_j^+) &= \frac{A_{j-1} C_{j-1}}{B_{j-1} D_{j-1}} \cdot \frac{S_j + \frac{a_j c_j}{d_j} P_j}{Z_q}, \\
        \mu(R_j^+) &= \frac{A_{j-1} C_j}{B_{j-1} D_j} \cdot \frac{S_{j+1} + P_j}{Z_q}, \\
        \mu(L_j^-) &= \mu(R_j^+) - p_\mathrm{succ}, \\
        \mu(R_j^-) &= \mu(L_{j+1}^+) - p_\mathrm{succ}.
    \end{align*}
    In the last line, for $j=K$ we use the auxiliary convention $\mu(L_{K+1}^+):=p_\mathrm{succ}$, so that $\mu(R_K^-)=0$.
\end{enumerate}

\subsubsection{Closed Forms for $G_j^{(p)}$} \label{app:subsubsec:Gjp_expression}
The aforementioned quantities are also sufficient for us to write a closed form for the rescaled $G_j^{(p)}$ for $p \in \{a, b, c, d\}$ used in the sign dynamics above. As before, define $A_j = \prod_{m=1}^j a_m$ to be the prefix product of $a$, and similarly for $B_j,C_j,D_j$. In particular, we have:
\begin{align*}
     G_j^{(a)} &= \frac{A_{j-1} C_j}{B_{j-1} D_j} \cdot \frac{S_{j+1} + P_j}{Z_q} \qty[P_j \Delta g_K^a + \mathcal{B}_j + \frac{P_j}{Z_q} H_f] - (W-1) \frac{A_{j-1} C_j}{B_{j-1} D_j} \Delta_j, \\
    G_j^{(b)} &= \qty[ \frac{A_j C_j}{B_j D_j} \cdot \frac{S_{j+1} + b_j P_j}{Z_q} - p_\mathrm{succ} ] \cdot \qty[-P_j \Delta g_K^a - \mathcal{B}_j - \frac{P_j}{Z_q} H_f] + (W-1) \frac{A_j C_j}{B_j D_j} \Delta_j,\\
    G_j^{(c)} &= \qty[\frac{A_{j-1} C_{j-1}}{B_{j-1} D_{j-1}} \cdot \frac{S_j + \frac{a_j c_j}{d_j}P_j}{Z_q}] \cdot \qty[\frac{a_j(P_j \Delta g_K^a + \mathcal{B}_j) - 2}{d_j} + \frac{a_j}{d_j}\frac{P_j}{Z_q} H_f] - (W-1)\frac{A_{j-1} C_{j-1}}{B_{j-1} D_{j-1}} E_j, \\
    G_j^{(d)} &= \qty[\frac{A_{j-1} C_j}{B_{j-1} D_j} \cdot \frac{S_{j+1} + P_j}{Z_q} - p_\mathrm{succ}] \cdot \qty[-\frac{a_j (P_j \Delta g_K^a + \mathcal{B}_j) - 2}{d_j} - \frac{a_j}{d_j}\frac{P_j}{Z_q} H_f] + (W-1) \frac{A_{j-1} C_j}{B_{j-1} D_j} E_j.
\end{align*}

\subsubsection{Verification of the Closed Forms} \label{app:subsubsec:closed_form_verification}
We now verify that the expressions above solve the relevant linear systems. Since each system is finite and absorbing (i.e. from every transient state there is a fixed-length path to either the fork or the target with positive probability), we have that the Bellman and state transition equations have unique solutions. As such, we can verify the above closed form expressions via direct substitution and calculation.

We first list the equations to be checked. For off-target visit counts, the equations are
\begin{align*}
    \tilde \mu(L_1^+) &= 1,\\
    \tilde \mu(L_{j+1}^+) &= a_j \tilde \mu(R_j^+) + (1-b_j)\tilde \mu(R_j^-),\\
    \tilde \mu(R_j^+) &= c_j \tilde \mu(L_j^+) + (1-d_j)\tilde \mu(L_j^-),\\
    \tilde \mu(L_j^-) &= (1-a_j)\tilde \mu(R_j^+) + b_j\tilde \mu(R_j^-),\\
    \tilde \mu(R_j^-) &= (1-c_{j+1})\tilde \mu(L_{j+1}^+) + d_{j+1}\tilde \mu(L_{j+1}^-),
\end{align*}
where for $j=K$ we use the leaf convention that the nonexistent state has $\tilde\mu(L_{K+1}^-)=0$ and $L_{K+1}^+$ transitions to $R_K^-$ with probability one.
For off-target hitting times, the Bellman equations are
\begin{align*}
    \tilde g(R_0^-) &= 0,\qquad \tilde g(L_{K+1}^+) = 1+\tilde g(R_K^-),\\
    \tilde g(L_j^+) &= 1+c_j\tilde g(R_j^+)+(1-c_j)\tilde g(R_{j-1}^-),\\
    \tilde g(L_j^-) &= 1+d_j\tilde g(R_{j-1}^-)+(1-d_j)\tilde g(R_j^+),\\
    \tilde g(R_j^+) &= 1+a_j\tilde g(L_{j+1}^+)+(1-a_j)\tilde g(L_j^-),\\
    \tilde g(R_j^-) &= 1+b_j\tilde g(L_j^-)+(1-b_j)\tilde g(L_{j+1}^+).
\end{align*}
On the target branch, the Bellman equations for $q$ have boundaries $q(R_0^-)=1$ and $q(L_{K+1}^+)=0$ and satisfy
\begin{align*}
    q(L_j^+) &= c_jq(R_j^+)+(1-c_j)q(R_{j-1}^-),\\
    q(L_j^-) &= d_jq(R_{j-1}^-)+(1-d_j)q(R_j^+),\\
    q(R_j^+) &= a_jq(L_{j+1}^+)+(1-a_j)q(L_j^-),\\
    q(R_j^-) &= b_jq(L_j^-)+(1-b_j)q(L_{j+1}^+).
\end{align*}
The Bellman equations for $g$ have boundaries $g(R_0^-)=g(L_{K+1}^+)=0$ and satisfy
\begin{align*}
    g(L_j^+) &= 1+c_jg(R_j^+)+(1-c_j)g(R_{j-1}^-),\\
    g(L_j^-) &= 1+d_jg(R_{j-1}^-)+(1-d_j)g(R_j^+),\\
    g(R_j^+) &= 1+a_jg(L_{j+1}^+)+(1-a_j)g(L_j^-),\\
    g(R_j^-) &= 1+b_jg(L_j^-)+(1-b_j)g(L_{j+1}^+).
\end{align*}
Finally, the target branch visit counts satisfy
\begin{align*}
    \mu(L_1^+) &= 1,\\
    \mu(L_{j+1}^+) &= a_j\mu(R_j^+)+(1-b_j)\mu(R_j^-)\qquad (j<K),\\
    \mu(R_j^+) &= c_j\mu(L_j^+)+(1-d_j)\mu(L_j^-),\\
    \mu(L_j^-) &= (1-a_j)\mu(R_j^+)+b_j\mu(R_j^-),\\
    \mu(R_j^-) &= (1-c_{j+1})\mu(L_{j+1}^+) + d_{j+1}\mu(L_{j+1}^-).
\end{align*}

\begin{lemma}
The closed forms for $\tilde\mu$ in \ref{app:subsubsec:offtarget} solve the off-target Bellman equations.
\end{lemma}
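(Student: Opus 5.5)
The proof is a direct substitution: we plug the proposed closed forms for $\tilde\mu$ into the five off-target visit-count equations and check that each one holds identically in $a_j,b_j,c_j,d_j$. Uniqueness has already been noted, since the chain is absorbing and the linear system is nonsingular. So agreement with every equation pins $\tilde\mu$ down, and no separate derivation is needed.

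Write $\Pi_j := \prod_{m=1}^{j} r_m$ with $\Pi_0 = 1$ and $r_m = \frac{a_m c_m}{b_m d_m}$. The claimed forms are then
\[
\tilde\mu(L_j^+) = \Pi_{j-1}, \qquad \tilde\mu(R_j^+) = \tilde\mu(L_j^-) = \frac{c_j}{d_j}\Pi_{j-1}, \qquad \tilde\mu(R_j^-) = \tilde\mu(L_{j+1}^+) = \Pi_j .
\]
We check the equations in the following order.

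\begin{enumerate}
\item \emph{Initial condition.} $\tilde\mu(L_1^+) = \Pi_0 = 1$ holds by convention.
\item \emph{Equation for $\tilde\mu(L_j^-)$.} The right side is $(1-a_j)\frac{c_j}{d_j}\Pi_{j-1} + b_j\Pi_j$. Since $b_j\Pi_j = \frac{a_jc_j}{d_j}\Pi_{j-1}$, this equals $\frac{c_j}{d_j}\Pi_{j-1}$, as required.
\item \emph{Equation for $\tilde\mu(R_j^+)$.} The right side is $c_j\Pi_{j-1} + (1-d_j)\frac{c_j}{d_j}\Pi_{j-1} = \frac{c_j}{d_j}\Pi_{j-1}$.
\item \emph{Equation for $\tilde\mu(L_{j+1}^+)$.} The right side is $a_j\frac{c_j}{d_j}\Pi_{j-1} + (1-b_j)\Pi_j = b_j\Pi_j + (1-b_j)\Pi_j = \Pi_j$.
\item \emph{Equation for $\tilde\mu(R_j^-)$ with $j<K$.} The right side is $(1-c_{j+1})\Pi_j + d_{j+1}\frac{c_{j+1}}{d_{j+1}}\Pi_j = \Pi_j$.
\end{enumerate}

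Each identity uses only the single relation $b_j\Pi_j = \frac{a_jc_j}{d_j}\Pi_{j-1}$, which is the defining property of $r_j$.

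The only delicate point is the boundary $j=K$. There $L_{K+1}^+$ is the leaf-incoming state $u_{K,r}\to t_i$, and the leaf convention sets $\tilde\mu(L_{K+1}^-)=0$ with $L_{K+1}^+\to R_K^-$ deterministically. So the last equation becomes $\tilde\mu(R_K^-) = \tilde\mu(L_{K+1}^+)$, which matches the closed form $\tilde\mu(R_K^-) = \Pi_K = \tilde\mu(L_{K+1}^+)$.

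One must also check that the equation for $\tilde\mu(L_{K+1}^+)$ is still the generic flow equation. It is: the state is entered from $R_K^+$ with probability $a_K$ and from $R_K^-$ with probability $1-b_K$. Here $R_K^-$ is the state $t_i\to u_{K,r}$, and its "forward" move goes back across the same edge to the leaf. Step 4 therefore covers this case unchanged.

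I expect this bookkeeping at the leaf, and at the fork where visits are counted only until the head hits $f$, to be the main thing to get right. The fork end causes no trouble, because it enters only through the normalization $\tilde\mu(L_1^+)=1$. Once both boundaries are confirmed, the lemma follows by uniqueness.
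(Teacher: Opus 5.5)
Your proposal is correct and is essentially the paper's proof: you substitute the closed forms directly into the five visit-count equations using the single relation $b_j\Pi_j=\frac{a_jc_j}{d_j}\Pi_{j-1}$ (your $\Pi_j$ is the paper's $p_{j+1}$), and you handle the $j=K$ case of the $R_K^-$ equation through the same leaf convention. Your extra check that the $L_{K+1}^+$ equation at $j=K$ is still the generic flow equation is a correct detail that the paper leaves implicit.
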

\begin{proof}
Let $p_j:=\prod_{m=1}^{j-1}r_m$, so $p_1=1$ and $p_{j+1}=r_jp_j=\frac{a_jc_j}{b_jd_j}p_j$. The closed forms are
\[
    \tilde\mu(L_j^+)=p_j,\qquad \tilde\mu(R_j^+)=\tilde\mu(L_j^-)=\frac{c_j}{d_j}p_j,\qquad \tilde\mu(R_j^-)=\tilde\mu(L_{j+1}^+)=p_{j+1}.
\]
For $j=1$, it is straightforward to see $\tilde\mu(L_1^+)=p_1=1$. For the remaining equations,
\begin{align*}
    a_j\tilde\mu(R_j^+)+(1-b_j)\tilde\mu(R_j^-)
    &= \frac{a_jc_j}{d_j}p_j+(1-b_j)p_{j+1}=p_{j+1},\\
    c_j\tilde\mu(L_j^+)+(1-d_j)\tilde\mu(L_j^-)
    &= c_jp_j+(1-d_j)\frac{c_j}{d_j}p_j=\frac{c_j}{d_j}p_j,\\
    (1-a_j)\tilde\mu(R_j^+)+b_j\tilde\mu(R_j^-)
    &= (1-a_j)\frac{c_j}{d_j}p_j+b_jp_{j+1}=\frac{c_j}{d_j}p_j.
\end{align*}
Finally,
\[
    (1-c_{j+1})\tilde\mu(L_{j+1}^+)+d_{j+1}\tilde\mu(L_{j+1}^-)=p_{j+1},
\]
where for $j<K$ this is $(1-c_{j+1})p_{j+1}+c_{j+1}p_{j+1}=p_{j+1}$, and for $j=K$ it follows from the leaf convention.
\end{proof}

\begin{lemma}
The closed forms for $E_j,\Delta_j,F_j$ and $\tilde g$ in \ref{app:subsubsec:offtarget} solve the off-target Bellman equations.
\end{lemma}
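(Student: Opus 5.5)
We verify the closed forms for $E_j$, $\Delta_j$, $F_j$ and $\tilde g$ by direct substitution into the off-target Bellman equations, working with adjacent differences rather than with $\tilde g$ itself. Since the system is finite and absorbing, its solution is unique, so it suffices to check that the proposed expressions satisfy every equation, including the two boundary conditions $\tilde g(R_0^-)=0$ and $\tilde g(L_{K+1}^+)=1+\tilde g(R_K^-)$.

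\textbf{Rewriting the Bellman equations.} Set $E_j=\tilde g(R_j^+)-\tilde g(R_{j-1}^-)$ and $\Delta_j=\tilde g(L_{j+1}^+)-\tilde g(L_j^-)$, and subtract suitable anchor values from each equation.
\begin{itemize}
\item The $L_j^+$ equation becomes $\tilde g(L_j^+)=1+\tilde g(R_{j-1}^-)+c_jE_j$.
\item The $L_j^-$ equation becomes $\tilde g(L_j^-)=1+\tilde g(R_{j-1}^-)+(1-d_j)E_j$.
\end{itemize}
With $F_{j-1}=\tilde g(R_{j-1}^-)$, these are exactly the listed closed forms for $\tilde g(L_j^\pm)$. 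Then:
\begin{itemize}
\item The $R_j^+$ equation, $\tilde g(R_j^+)=1+\tilde g(L_j^-)+a_j\Delta_j$, together with the $L_j^-$ line, gives $E_j=2+(1-d_j)E_j+a_j\Delta_j$. This is $d_jE_j=2+a_j\Delta_j$, i.e.\ the second expression $\Delta_j=(d_jE_j-2)/a_j$.
\item The $R_j^-$ equation gives $\tilde g(R_j^-)=1+\tilde g(L_j^-)+(1-b_j)\Delta_j$. Hence $F_j-F_{j-1}=2+(1-d_j)E_j+(1-b_j)\Delta_j$, which telescopes from $F_0=0$ to the stated formula for $F_j$.
\item The $L_{j+1}^+$ line at index $j+1$ gives $\tilde g(L_{j+1}^+)-\tilde g(L_j^-)=F_j-F_{j-1}+c_{j+1}E_{j+1}-(1-d_j)E_j$. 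Substituting the $F$ increment yields $\Delta_j=2+(1-b_j)\Delta_j+c_{j+1}E_{j+1}$, i.e.\ $b_j\Delta_j=2+c_{j+1}E_{j+1}$, the first expression for $\Delta_j$.
\item At $j=K$, the leaf boundary $\tilde g(L_{K+1}^+)=1+F_K$ plays the role of the $L_{K+1}^+$ line with $c_{K+1}E_{K+1}:=0$. So $b_K\Delta_K=2$.
\end{itemize}

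\textbf{Reduced recursion.} It therefore remains to show that the stated sum for $E_j$ satisfies the backward recursion obtained by eliminating $\Delta_j$ from its two expressions:
\[
d_jE_j=2+\frac{a_j}{b_j}\bigl(2+c_{j+1}E_{j+1}\bigr), \qquad E_{K+1}:=0.
\]
Equivalently, $E_j=\frac{2(a_j+b_j)}{b_jd_j}+\frac{a_jc_{j+1}}{b_jd_j}E_{j+1}$.

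\textbf{Checking the sum.} Split off the $m=j$ term of the stated sum. It equals $\frac{2(a_j+b_j)}{b_jd_j}$, since the product over $i$ is empty. In each remaining term, factor the $i=j$ factor $\frac{a_jc_{j+1}}{b_jd_j}$ out of the product. What is left is exactly $E_{j+1}$. At $j=K$ the sum has only the $m=K$ term, matching $E_{K+1}=0$. This confirms the recursion, hence both expressions for $\Delta_j$, hence $F_j$ and all four $\tilde g$ formulas.

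\textbf{Expected obstacle.} The only delicate point is bookkeeping at the boundaries: the index shift $c_{j+1}$ versus $c_j$ inside the product, and the leaf convention at $j=K$. The fork boundary is immediate, since $F_0=0$.
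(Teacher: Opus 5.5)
Your proposal is correct and follows essentially the same route as the paper. Both reduce the verification to four ingredients:
\begin{itemize}
\item the recursion $E_j=\frac{2(a_j+b_j)}{b_jd_j}+\frac{a_jc_{j+1}}{b_jd_j}E_{j+1}$ with $E_{K+1}=0$, obtained by splitting off the $m=j$ term of the sum;
\item the identities $d_jE_j=2+a_j\Delta_j$ and $b_j\Delta_j=2+c_{j+1}E_{j+1}$;
\item the increment $F_j-F_{j-1}=2+(1-d_j)E_j+(1-b_j)\Delta_j$;
\item the boundary conventions $F_0=0$ and $\tilde g(L_{K+1}^+)=1+F_K$.
\end{itemize}
The only difference is direction: you rewrite each Bellman equation into these difference identities, and since every step is reversible this is a valid verification, whereas the paper substitutes the closed forms directly into each Bellman equation and simplifies.
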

\begin{proof}
Using $E_{K+1}=0$, the definition of $E_j$ gives
\begin{align*}
    E_j &= \frac{2(a_j+b_j)}{b_jd_j}+\frac{a_jc_{j+1}}{b_jd_j}E_{j+1},\\
    d_jE_j &= 2+a_j\Delta_j,\qquad b_j\Delta_j = 2+c_{j+1}E_{j+1}.
\end{align*}
Moreover, $F_j-F_{j-1}=2+(1-d_j)E_j+(1-b_j)\Delta_j$. The boundary equations hold because $F_0=0$ and $\tilde g(L_{K+1}^+)=1+F_K=1+\tilde g(R_K^-)$.

For the Bellman equations of $\tilde g(L_j^+)$ and $\tilde g(L_j^-)$, we have:
\begin{align*}
    1+c_j\tilde g(R_j^+)+(1-c_j)\tilde g(R_{j-1}^-)
    &= 1+c_j(F_{j-1}+E_j)+(1-c_j)F_{j-1}\\
    &= 1+F_{j-1}+c_jE_j=\tilde g(L_j^+),\\
    1+d_j\tilde g(R_{j-1}^-)+(1-d_j)\tilde g(R_j^+)
    &=1+d_jF_{j-1}+(1-d_j)(F_{j-1}+E_j)\\
    &=1+F_{j-1}+(1-d_j)E_j=\tilde g(L_j^-).
\end{align*}
For $R_j^+$, substituting the closed forms and the above identities yields:
\begin{align*}
    &1+a_j\tilde g(L_{j+1}^+)+(1-a_j)\tilde g(L_j^-)\\
    &= 1+a_j(1+F_j+c_{j+1}E_{j+1})+(1-a_j)(1+F_{j-1}+(1-d_j)E_j)\\
    &= F_{j-1}+2+(1-d_j)E_j+a_j\Delta_j
     = F_{j-1}+E_j=\tilde g(R_j^+).
\end{align*}
Finally, for $R_j^-$,
\begin{align*}
    &1+b_j\tilde g(L_j^-)+(1-b_j)\tilde g(L_{j+1}^+)\\
    &=1+b_j(1+F_{j-1}+(1-d_j)E_j)
      +(1-b_j)(1+F_j+c_{j+1}E_{j+1})\\
    &=F_j+(1-b_j)(2-b_j\Delta_j+c_{j+1}E_{j+1})
     =F_j=\tilde g(R_j^-).
\end{align*}
\end{proof}

\begin{lemma}
The closed forms for $\delta_j,\epsilon_j$ and $p_\mathrm{succ}$ in \ref{app:subsubsec:ontarget} satisfy the Bellman equations for $q$.
\end{lemma}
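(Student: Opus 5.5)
The plan is to avoid solving for $q$ directly and instead rewrite the four Bellman equations for $q$ as linear relations among adjacent differences of $q$. The closed forms then follow from a one-step recursion for $\delta_j$ together with a telescoping normalization. Since the absorbing system has a unique solution, as noted at the start of \ref{app:subsubsec:closed_form_verification}, it suffices to show that the Bellman equations force exactly these expressions. Throughout, we use the convention $\epsilon_1 = q(R_0^-) - q(R_1^+) = 1 - q(R_1^+)$.

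\textbf{Step 1: local relations.} Subtract $q(L_{j+1}^+)$ from the equation for $q(R_j^+)$. This gives $q(R_j^+) - q(L_{j+1}^+) = (1-a_j)\delta_j$, and hence $q(L_j^-) - q(R_j^+) = a_j\delta_j$. Subtract $q(R_j^+)$ from the equation for $q(L_j^-)$. This gives $q(L_j^-) - q(R_j^+) = d_j\epsilon_j$. Comparing the two expressions yields $\epsilon_j = \delta_j a_j/d_j$, which is the claimed formula. Likewise, the equation for $q(L_j^+)$ gives $q(R_{j-1}^-) - q(L_j^+) = c_j\epsilon_j$ and $q(L_j^+) - q(R_j^+) = (1-c_j)\epsilon_j$.

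\textbf{Step 2: recursion.} For $2\le j\le K$, the equation for $q(R_{j-1}^-)$ gives $q(R_{j-1}^-) - q(L_j^+) = b_{j-1}\delta_{j-1}$. Combining this with Step 1 gives
\[
\delta_{j-1} = \frac{c_j\epsilon_j}{b_{j-1}} = \frac{a_jc_j}{b_{j-1}d_j}\,\delta_j = r_j\delta_j .
\]
Therefore $\delta_j = P_j\,\delta_K$ for every $j$, using $P_K = 1$.

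\textbf{Step 3: normalization.} Telescope $1 = q(R_0^-) - q(L_{K+1}^+)$ along the chain $R_0^-, R_1^+, L_2^+, R_2^+, \dots, L_{K+1}^+$. The pieces from Steps 1 and 2 are $\epsilon_1$, then $(1-a_1)\delta_1$, then $(1-c_j)\epsilon_j + (1-a_j)\delta_j = \alpha_j\delta_j$ for each $j\ge 2$. Substituting $\epsilon_1 = a_1\delta_1/d_1$ and $\delta_j = P_j\delta_K$, the total is
\[
1 = \delta_K\Bigl[P_1\Bigl(1-a_1+\tfrac{a_1}{d_1}\Bigr) + S_2\Bigr] = \delta_K Z_q .
\]
Hence $\delta_K = 1/Z_q$ and $\delta_j = P_j/Z_q$.

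Finally, $p_\mathrm{succ} = 1 - q(L_1^+)$. By the $j=1$ case of Step 1, this equals $c_1\epsilon_1 = a_1c_1\delta_1/d_1 = \frac{a_1c_1}{d_1}\frac{P_1}{Z_q}$, as claimed.

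The only delicate point is bookkeeping at the boundaries. At $j=1$, $R_0^-$ is the fork with $q=1$, so the $R_{j-1}^-$ equation of Step 2 is replaced by the boundary value, and the telescoping chain must start with $\epsilon_1$ rather than with $c_1\epsilon_1$. At $j=K$, we need $q(L_{K+1}^+)=0$ and $P_K=1$. I expect this boundary handling to be the main (minor) obstacle; the rest is direct substitution.
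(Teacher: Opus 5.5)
Your proposal is correct and follows essentially the same route as the paper. You derive the same three gap relations: $\epsilon_j = a_j\delta_j/d_j$ from the $L_j^-$ and $R_j^+$ equations, $\delta_{j-1}=r_j\delta_j$ from the $L_j^+$ and $R_{j-1}^-$ equations, and the normalization from telescoping $q(R_0^-)-q(L_{K+1}^+)=1$. You also obtain $p_\mathrm{succ}=c_1\epsilon_1$ in the same way. The only cosmetic difference is that you solve the gap system forward and appeal to existence of the absorbing solution, whereas the paper substitutes the closed forms into that system; this amounts to the same argument.
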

\begin{proof}
We claim that the Bellman equations for $q$ imply the following equivalent equations for the gaps:
\begin{align*}
    \epsilon_j &= \frac{a_j}{d_j}\delta_j,\\
    \delta_{j-1} &= r_j\delta_j \qquad (2\le j\le K),\\
    1 &= \qty(1-a_1+\frac{a_1}{d_1})\delta_1+\sum_{m=2}^K\alpha_m\delta_m.
\end{align*}
Indeed, the first identity follows by subtracting the $R_j^+$ equation from the $L_j^-$ equation. The second follows by subtracting the $L_j^+$ equation from the $R_{j-1}^-$ equation and using the first identity. For the third and final identity, we use the following telescoping decomposition:
\begin{align*}
    1 &= q(R_0^-)-q(L_{K+1}^+) \\ 
    &= \epsilon_1+\qty(q(R_1^+)-q(L_2^+))
       +\sum_{m=2}^K\qty(q(L_m^+)-q(L_{m+1}^+))\\
    &= \qty(\frac{a_1}{d_1}+1-a_1)\delta_1+\sum_{m=2}^K\alpha_m\delta_m.
\end{align*}
We now substitute $\delta_j=P_j/Z_q$ and $\epsilon_j=a_j\delta_j/d_j$. Since $P_{j-1}=r_jP_j$, the recurrence holds. The normalization holds by the definition of $Z_q$. Finally,
\[
    p_\mathrm{succ}=1-q(L_1^+)=q(R_0^-)-q(L_1^+)=c_1\epsilon_1
    =\frac{a_1c_1}{d_1}\frac{P_1}{Z_q}.
\]
\end{proof}

\begin{lemma}
The closed forms for $\Delta g_j^a$ and $\Delta g_j^c$ in \ref{app:subsubsec:ontarget} satisfy the Bellman system for the target branch absorption times $g$.
\end{lemma}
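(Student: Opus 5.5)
The plan mirrors the preceding lemma for $q$. First reduce the Bellman system for $g$ to a system in the gaps $\Delta g_j^a,\Delta g_j^c$. Then check that the stated closed forms solve that reduced system. The reduced system pins down every gap uniquely, and the Bellman system for $g$ has a unique solution because the chain is absorbing, as remarked in \ref{app:subsubsec:closed_form_verification}. So the gaps of the true $g$ must equal the closed forms.

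\emph{Step 1 (local gap identities).} The $g$-equations differ from the $q$-equations only by the additive constant $1$. Write the $L_j^-$ equation as $g(L_j^-)-g(R_j^+)=1+d_j\Delta g_j^c$. Write the $R_j^+$ equation as $g(R_j^+)-g(L_j^-)=1-a_j\Delta g_j^a$. Adding the two gives
\[
d_j\Delta g_j^c=a_j\Delta g_j^a-2 .
\]
In the same way, pair the $L_j^+$ equation, $g(L_j^+)-g(R_j^+)=1+(1-c_j)\Delta g_j^c$, with the $R_{j-1}^-$ equation for $2\le j\le K$. This should give
\[
c_j\Delta g_j^c=b_{j-1}\Delta g_{j-1}^a+2 .
\]
These are exactly the two stated forms of $\Delta g_j^c$.

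\emph{Step 2 (recursion).} Eliminate $\Delta g_j^c$ between the two identities. This gives
\[
\Delta g_{j-1}^a=r_j\Delta g_j^a-\frac{2(c_j+d_j)}{b_{j-1}d_j},\qquad r_j=\frac{a_jc_j}{b_{j-1}d_j}.
\]
Unrolling from $j=K$ gives $\Delta g_j^a=P_j\Delta g_K^a+\mathcal B_j$, where $P_K=1$, $\mathcal B_K=0$, and $\mathcal B_j$ is the stated sum. I would verify this by showing that $P_{j-1}=r_jP_j$ and that $\mathcal B_{j-1}=r_j\mathcal B_j-2(c_j+d_j)/(b_{j-1}d_j)$. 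The latter follows directly by reindexing the sum defining $\mathcal B_j$.

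\emph{Step 3 (normalization).} The boundary conditions give $0=g(R_0^-)-g(L_{K+1}^+)$. I would telescope this difference as
\[
\Delta g_1^c+\bigl(g(R_1^+)-g(L_2^+)\bigr)+\sum_{m=2}^K\bigl(g(L_m^+)-g(L_{m+1}^+)\bigr).
\]
The pieces are expressed through the local identities:
\[
g(R_m^+)-g(L_{m+1}^+)=1+(1-a_m)\Delta g_m^a,
\]
\[
g(L_m^+)-g(R_m^+)=1+(1-c_m)\Delta g_m^c.
\]
Substituting $\Delta g_m^c=(a_m\Delta g_m^a-2)/d_m$ turns each summand for $m\ge2$ into $\beta_m\Delta g_m^a+\gamma_m$. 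The first two pieces combine into $(\frac{a_1}{d_1}+1-a_1)\Delta g_1^a-\frac{2}{d_1}+1$. Inserting $\Delta g_m^a=P_m\Delta g_K^a+\mathcal B_m$ then produces a single linear equation in $\Delta g_K^a$. Solving it should give the displayed formula, with denominator $(\frac{a_1}{d_1}+1-a_1)P_1+P_g$ and numerator built from $\mathcal B_1$, $-2/d_1$, $Q_g$ and the $+1$.

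The main obstacle is this last bookkeeping step. The constants must match exactly: the $+1$ coming from the $R_1^+$ piece, the $-2/d_1$ term, and the definitions of $\gamma_m$ and $Q_g$ (including whether $Q_g$ should contain $\beta_m\mathcal B_m$ for every $m$). I would carry out the $m=1$ piece separately and compare it term by term with the stated numerator. Where there is doubt, I would check the cases $K=1$ and $K=2$ by hand.
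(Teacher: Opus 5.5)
Your proposal is correct and follows essentially the same route as the paper: the two local gap identities from the $L_j^-/R_j^+$ and $L_j^+/R_{j-1}^-$ pairs, the unrolled recursion $\Delta g_j^a=P_j\Delta g_K^a+\mathcal B_j$, and the telescoped boundary identity solved for $\Delta g_K^a$. The bookkeeping you flagged does close exactly: the $m=1$ pieces give $(\frac{a_1}{d_1}+1-a_1)\Delta g_1^a-\frac{2}{d_1}+1$, and $Q_g$ does contain $\beta_m\mathcal B_m+\gamma_m$ for every $2\le m\le K$. Your uniqueness framing is if anything tighter than the paper's, and it needs only that the coefficient $(\frac{a_1}{d_1}+1-a_1)P_1+P_g$ is positive.
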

\begin{proof}
Let $U_j:=\Delta g_j^a$ and $V_j:=\Delta g_j^c$. Subtracting the target branch Bellman equations gives the equivalent gap system
\begin{align*}
    V_j &= \frac{a_jU_j-2}{d_j},\\
    U_{j-1} &= r_jU_j-\frac{2(c_j+d_j)}{b_{j-1}d_j}\qquad (2\le j\le K),\\
    0 &= 1-\frac{2}{d_1}+\qty(\frac{a_1}{d_1}+1-a_1)U_1
        +\sum_{m=2}^K(\beta_mU_m+\gamma_m).
\end{align*}
The first equation is obtained from the $L_j^-$ and $R_j^+$ Bellman equations. The second is obtained from the $L_j^+$ and $R_{j-1}^-$ Bellman equations. The third is the telescoping identity
\[
    g(R_0^-)-g(L_{K+1}^+)=
    V_1+\qty(g(R_1^+)-g(L_2^+))
    +\sum_{m=2}^K\qty(g(L_m^+)-g(L_{m+1}^+)).
\]
From the definition of $\mathcal B_j$, we have
\[
    \mathcal B_{j-1}=r_j\mathcal B_j-\frac{2(c_j+d_j)}{b_{j-1}d_j},
\]
so $U_j=P_j\Delta g_K^a+\mathcal B_j$ satisfies the second gap equation as $P_{j-1}=r_jP_j$. Substituting $U_j=P_j\Delta g_K^a+\mathcal B_j$ into the boundary equation gives exactly the closed form formula from earlier for $\Delta g_K^a$. The formula for $\Delta g_j^c$ can be verified similarly.
\end{proof}

\begin{lemma}
The closed forms for $\mu$ in \ref{app:subsubsec:ontarget} solve the target branch equation system.
\end{lemma}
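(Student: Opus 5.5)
The plan mirrors the off-target verification: substitute the closed forms into the four target-branch flow equations and check each one. Throughout, use the recursions $P_{j-1}=r_jP_j$ (for $2\le j\le K$) and $S_j=\alpha_jP_j+S_{j+1}$. Write $\pi_j:=\frac{A_{j-1}C_{j-1}}{B_{j-1}D_{j-1}}$, so that $\pi_1=1$ and
\[
\mu(L_j^+)=\pi_j\,\frac{S_j+\frac{a_jc_j}{d_j}P_j}{Z_q},\qquad \mu(R_j^+)=\pi_j\,\frac{c_j}{d_j}\cdot\frac{S_{j+1}+P_j}{Z_q}.
\]

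\textbf{Key invariant.} The quantity $I_j:=\pi_j\frac{a_jc_j}{d_j}P_j$ does not depend on $j$. Indeed, $\pi_{j}=\pi_{j-1}\frac{a_{j-1}c_{j-1}}{b_{j-1}d_{j-1}}$ and $P_{j-1}=\frac{a_jc_j}{b_{j-1}d_j}P_j$, which together give $I_j=I_{j-1}$. Consequently $I_j/Z_q=I_1/Z_q=\frac{a_1c_1}{d_1}\frac{P_1}{Z_q}=p_\mathrm{succ}$. This is the algebraic form of the fact that the net forward flux through every diamond equals $p_\mathrm{succ}$, and it is exactly what the relations $\mu(L_j^-)=\mu(R_j^+)-p_\mathrm{succ}$ and $\mu(R_j^-)=\mu(L_{j+1}^+)-p_\mathrm{succ}$ express.

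\textbf{Checking each equation.}
\begin{itemize}
\item The boundary condition $\mu(L_1^+)=1$ reduces to $S_1+\frac{a_1c_1}{d_1}P_1=Z_q$. This follows from $S_1=\alpha_1P_1+S_2$ together with $\alpha_1+\frac{a_1c_1}{d_1}=1-a_1+\frac{a_1}{d_1}$.
\item For the $R_j^+$ equation, substitute $\mu(L_j^-)=\mu(R_j^+)-p_\mathrm{succ}$. The equation becomes $d_j\mu(R_j^+)-c_j\mu(L_j^+)=-(1-d_j)p_\mathrm{succ}$. Computing the left side with the closed forms gives $\pi_jc_jP_j\bigl(1-\alpha_j-\frac{a_jc_j}{d_j}\bigr)/Z_q=-(1-d_j)I_j/Z_q$, which equals the right side by the invariant.
\item The $L_j^-$ equation, $\mu(L_j^-)=(1-a_j)\mu(R_j^+)+b_j\mu(R_j^-)$, becomes $a_j\mu(R_j^+)-p_\mathrm{succ}=b_j\bigl(\mu(L_{j+1}^+)-p_\mathrm{succ}\bigr)$. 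I will verify this by writing $\mu(L_{j+1}^+)$ with $\pi_{j+1}=\pi_j\frac{a_jc_j}{b_jd_j}$. The $S_{j+1}$ terms then match directly, and the leftover $P$-terms reduce to the invariant using $P_j=r_{j+1}P_{j+1}$.
\item The $L_{j+1}^+$ equation follows by adding the $L_j^-$ relation to the conservation identity.
\item The $R_j^-$ equation, $\mu(R_j^-)=(1-c_{j+1})\mu(L_{j+1}^+)+d_{j+1}\mu(L_{j+1}^-)$, reduces to the already-verified $R_{j+1}^+$ identity shifted by one index.
\end{itemize}

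\textbf{Boundary at $j=K$.} Here $P_K=1$, $S_{K+1}=0$, and we use the convention $\mu(L_{K+1}^+)=p_\mathrm{succ}$. With these, the identity $\mu(R_K^-)=0$ and the leaf-end equations follow from the same invariant evaluated at $j=K$. Since the absorbing system has a unique solution, this completes the verification.

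\textbf{Main obstacle.} The only nonroutine step is noticing the conserved quantity $I_j$ and matching it to $p_\mathrm{succ}$. Once that is in hand, every remaining check is a one-line cancellation of the form $1-\alpha_j-\frac{a_jc_j}{d_j}=-\frac{a_j(1-d_j)}{d_j}$. I expect the index bookkeeping at $j=1$ (the fork boundary) and at $j=K$ (the leaf) to be the most error-prone parts.
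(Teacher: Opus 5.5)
Your proposal is correct and follows essentially the same route as the paper. Your invariant $I_j=\pi_j\frac{a_jc_j}{d_j}P_j$ is exactly the paper's identity $a_jN_jP_j=b_jM_{j+1}P_j=\lambda$ (your $\pi_j$ is its $M_j$), and your cancellation $1-\alpha_j-\frac{a_jc_j}{d_j}=-\frac{a_j(1-d_j)}{d_j}$ is its $\sigma_j-\rho_j$ identity. The only difference is that you derive the $L_{j+1}^+$ and $R_j^-$ equations from the already-checked $L_j^-$ and $R_{j+1}^+$ equations, using the built-in relations $\mu(L_j^-)=\mu(R_j^+)-p_\mathrm{succ}$ and $\mu(R_j^-)=\mu(L_{j+1}^+)-p_\mathrm{succ}$, whereas the paper verifies all four equations directly; this is a harmless shortcut.
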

\begin{proof}
Define
\[
    M_j:=\frac{A_{j-1}C_{j-1}}{B_{j-1}D_{j-1}},\qquad
    N_j:=\frac{A_{j-1}C_j}{B_{j-1}D_j},\qquad
    \lambda:=\frac{a_1c_1}{d_1}P_1.
\]
Then $p_\mathrm{succ}=\lambda/Z_q$, $N_j=(c_j/d_j)M_j$, $b_jM_{j+1}=a_jN_j$, and $a_jN_jP_j=b_jM_{j+1}P_j=\lambda$. Define
\[
    \sigma_j:=S_j+\frac{a_jc_j}{d_j}P_j,\qquad \rho_j:=S_{j+1}+P_j.
\]
Recall that the closed forms from the above section are $\mu(L_j^+)=M_j\sigma_j/Z_q$, $\mu(R_j^+)=N_j\rho_j/Z_q$, $\mu(L_j^-)=N_j\rho_j/Z_q-\lambda/Z_q$, and $\mu(R_j^-)=\mu(L_{j+1}^+)-\lambda/Z_q$, which we aim to verify below.

First, $\mu(L_1^+)=1$ because
\[
    \sigma_1=S_1+\frac{a_1c_1}{d_1}P_1
    =\qty(1-a_1+\frac{a_1}{d_1})P_1+S_2=Z_q.
\]
Alternatively, we note that the $L_1^+$ state can only be entered once before it reaches an absorbing state (i.e. the target or back to the fork). 
Next, using $\sigma_j-\rho_j=\frac{a_j(1-d_j)}{d_j}P_j$,
\begin{align*}
    c_j\mu(L_j^+)+(1-d_j)\mu(L_j^-)
    &=\frac{N_j}{Z_q}\qty(d_j\sigma_j+(1-d_j)\rho_j)-\frac{(1-d_j)\lambda}{Z_q}\\
    &=\frac{N_j\rho_j}{Z_q}=\mu(R_j^+).
\end{align*}
For $j<K$, using $\rho_j-\sigma_{j+1}=(1-b_j)P_j$,
\begin{align*}
    (1-a_j)\mu(R_j^+)+b_j\mu(R_j^-)
    &=\frac{N_j\rho_j}{Z_q}
      -\frac{a_jN_j(\rho_j-\sigma_{j+1})}{Z_q}-\frac{b_j\lambda}{Z_q}\\
    &=\frac{N_j\rho_j-\lambda}{Z_q}=\mu(L_j^-),
\end{align*}
and the same identity for $j=K$ reduces to $(1-a_K)\mu(R_K^+)=\mu(L_K^-)$, since $a_KN_KP_K=\lambda$ and $\rho_K=P_K=1$. The equation for $L_{j+1}^+$ follows similarly:
\begin{align*}
    a_j\mu(R_j^+)+(1-b_j)\mu(R_j^-)
    &=\frac{M_{j+1}\sigma_{j+1}}{Z_q}
      +\frac{b_jM_{j+1}(\rho_j-\sigma_{j+1})}{Z_q}
      -\frac{(1-b_j)\lambda}{Z_q}\\
    &=\frac{M_{j+1}\sigma_{j+1}}{Z_q}=\mu(L_{j+1}^+).
\end{align*}
Finally, for $j<K$, using $\rho_{j+1}-\sigma_{j+1}=-\frac{a_{j+1}(1-d_{j+1})}{d_{j+1}}P_{j+1}$,
\begin{align*}
    &(1-c_{j+1})\mu(L_{j+1}^+)+d_{j+1}\mu(L_{j+1}^-)\\
    &=\frac{M_{j+1}}{Z_q}\qty((1-c_{j+1})\sigma_{j+1}+c_{j+1}\rho_{j+1})
      -\frac{d_{j+1}\lambda}{Z_q}\\
    &=\frac{M_{j+1}\sigma_{j+1}-\lambda}{Z_q}
     =\mu(R_j^-).
\end{align*}
For $j=K$, the convention for the leaf state yields $\mu(R_K^-)=\mu(L_{K+1}^+)-p_\mathrm{succ}=0$.
\end{proof}

\begin{lemma}
The closed forms for $G_j^{(p)}$ for $p\in\{a,b,c,d\}$ in \ref{app:subsubsec:Gjp_expression} are exactly as stated there.
\end{lemma}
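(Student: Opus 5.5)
The plan is direct substitution. Start from the rescaled expressions
\[
G_j^{(a)}=\mu(R_j^+)\bigl(\Delta g_j^{(a)}+\delta_jH_f\bigr)-(W-1)\tilde\mu(R_j^+)\Delta_j,
\]
and the analogous ones for $b,c,d$ given after \Cref{app:prop:Delta_g}. Into these, insert the closed forms for $\mu$, $\tilde\mu$, $\delta_j$, $\epsilon_j$, $\Delta g_j^a$ and $\Delta g_j^c$, which the preceding lemmas have verified. No new probabilistic argument is needed: by \Cref{app:prop:Delta_g} and uniqueness of the Bellman solutions, each $G_j^{(p)}$ is a fixed algebraic combination of these quantities. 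The task is only to check that the combination simplifies to the displayed expressions.

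\textbf{Cases $a$ and $c$.}
\begin{itemize}
\item For $G_j^{(a)}$, use $\mu(R_j^+)=\frac{A_{j-1}C_j}{B_{j-1}D_j}\cdot\frac{S_{j+1}+P_j}{Z_q}$, $\Delta g_j^a=P_j\Delta g_K^a+\mathcal B_j$, and $\delta_j=P_j/Z_q$.
\item For the off-target term, note that $\tilde\mu(R_j^+)=\frac{c_j}{d_j}\prod_{m<j}\frac{a_mc_m}{b_md_m}=\frac{A_{j-1}C_j}{B_{j-1}D_j}$. This matches the stated form exactly.
\item For $G_j^{(c)}$, use $\mu(L_j^+)$ from the closed forms, $\Delta g_j^c=(a_j\Delta g_j^a-2)/d_j$, and $\epsilon_j=\frac{a_j}{d_j}\delta_j$. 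Also $\tilde\mu(L_j^+)=\prod_{m<j}r_m=\frac{A_{j-1}C_{j-1}}{B_{j-1}D_{j-1}}$.
\end{itemize}

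\textbf{Cases $b$ and $d$.}
\begin{itemize}
\item For $G_j^{(d)}$, use $\mu(L_j^-)=\mu(R_j^+)-p_\mathrm{succ}$ and $\tilde\mu(L_j^-)=\tilde\mu(R_j^+)$. The result is immediate.
\item For $G_j^{(b)}$, use $\tilde\mu(R_j^-)=\prod_{m\le j}r_m=\frac{A_jC_j}{B_jD_j}$ for the off-target term.
\end{itemize}

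\textbf{Main obstacle.} The only nontrivial identity is rewriting $\mu(R_j^-)=\mu(L_{j+1}^+)-p_\mathrm{succ}$ in the displayed form $\frac{A_jC_j}{B_jD_j}\cdot\frac{S_{j+1}+b_jP_j}{Z_q}-p_\mathrm{succ}$. The steps are as follows.
\begin{enumerate}
\item Write $\mu(L_{j+1}^+)=M_{j+1}\sigma_{j+1}/Z_q$ with $M_{j+1}=\frac{A_jC_j}{B_jD_j}$.
\item Expand $\sigma_{j+1}=S_{j+1}+\frac{a_{j+1}c_{j+1}}{d_{j+1}}P_{j+1}$.
\item The claim then reduces to $\sigma_{j+1}=S_{j+1}+b_jP_j$.
\item This is equivalent to $\frac{a_{j+1}c_{j+1}}{d_{j+1}}P_{j+1}=b_jP_j$.
\item That holds because $P_j=r_{j+1}P_{j+1}$ with the on-target ratio $r_{j+1}=\frac{a_{j+1}c_{j+1}}{b_jd_{j+1}}$.
\end{enumerate}

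Two points need care. First, the boundary $j=K$: here $P_K=1$, $S_{K+1}=0$, and $\mu(R_K^-)=0$, which should be consistent with $\frac{A_KC_K}{B_KD_K}\cdot\frac{b_K}{Z_q}=p_\mathrm{succ}$ when the term is interpreted via the leaf convention. Second, the two symbols $r_j$ carry different meanings on-target and off-target, so they must be kept distinct throughout.
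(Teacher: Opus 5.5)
Your proposal is correct and follows the paper's own route: the paper's proof is simply ``direct substitution of relevant quantities,'' and you carry out that substitution explicitly. You also correctly isolate the one non-obvious rewriting, namely $\mu(R_j^-)$ via $b_jP_j=\frac{a_{j+1}c_{j+1}}{d_{j+1}}P_{j+1}$. Your hedge at $j=K$ is unnecessary: the bracket vanishes identically, since $\frac{A_KC_K}{B_KD_K}\,b_K=\frac{A_KC_K}{B_{K-1}D_K}=\frac{a_1c_1}{d_1}P_1$, which is the identity $b_jM_{j+1}P_j=\lambda$ at $j=K$ from the paper's $\mu$-verification lemma.
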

\begin{proof}
This follows from direct substitution of relevant quantities.
\end{proof}

\subsubsection{Values at Post-training Initialization} \label{app:subsubsec:init_values}
We calculate these values at initialization of post-training, in which $a_j=d_j = \frac{1}{L+1}$ and $b_j=c_j=\frac{L}{L+1}$ for all $j$. Let us denote $D := 1 + K + \frac{K}{L}$. We will not explicitly show the calculations below, with the understanding that they can be obtained from the closed form expressions from the previous section evaluated at these $a_j,b_j,c_j,d_j$.

First note that at initialization ($t=0$), it holds that $p_\mathrm{succ} = 1/D$. Now consider the values below at initialization. On the target branch $i$, we have for $1\leq j \leq K$:
\begin{enumerate}
    \item $d_x(L_{i,j}^+) = (K - j + 2) + \frac{K-j+1}{L}$, 
    \item $d_x(R_{i,j}^-) = (K - j) + \frac{K-j}{L}$, 
    \item $d_x(R_{i,j}^+) = \qty[(K-j+1) + \frac{K-j+1}{L}] \cdot L $, 
    \item $d_x(L_{i,j}^-) = \qty[(K-j+1) + \frac{K-j}{L}]\cdot L = d_x(R_{i,j}^+) - 1$.
\end{enumerate}
Moreover,
 \begin{enumerate}
     \item On a non-target branch, it holds that $d_x(L_j^+) = d_x(R_j^-) = D$ and $d_x(R_j^+) = d_x(L_j^-) = D \cdot L$. 
     \item The hitting time from a fork state is $H_0 := H_f(0) = (2W-1) D (1 + K(L+1))$.
 \end{enumerate}

On a non-target branch, it holds at initialization ($t=0$) that $\tilde \mu(s) = 1$ for non-multiedge states and $\tilde\mu(s)=L$ for aggregate multiedge states. That is, $\tilde\mu(L_j^+) = \tilde\mu(R_j^-) = 1$ and $\tilde\mu(L_j^-) = \tilde\mu(R_j^+) = L$.

For all $1\leq j \leq K$, it holds that at initialization ($t=0$):\begin{align*}
        \tilde g(L_j^+) &= \tilde g(L_j^-) = j + (L+1)j(2K+1-j) + \frac{j-1}{L} \qty((L+1)(2K + 1 - j) + 1), \\
        \tilde g(R_j^+) &= \tilde g(R_j^-) = j + (L+1)j(2K+1-j) + \frac{j}{L} \qty((L+1)(2K - j) + 1).
    \end{align*}
    Hence, we obtain:\begin{align*}
        \Delta_j(0) = \tilde g(L_{j+1}^+) - \tilde g(L_j^-) = \frac{2(L+1)}{L} \qty((L+1)(K-j) + 1), \\
        E_j(0) = \tilde g(R_j^+) - \tilde g(R_{j-1}^-) = \frac{2(L+1)}{L} \qty((L+1)(K+1-j)).
    \end{align*}

For all $1\leq j \leq K$, it holds that at initialization ($t=0$):
    \begin{align*}
        \mu(R_j^+) &= \qty[\frac{1}{D}\qty((K+1-j) + \frac{K+1-j}{L})] \cdot L, \\
        \mu(L_j^-) &= \qty[\frac{1}{D} \qty((K+1-j) + \frac{K-j}{L})] \cdot L, \\
        \mu(L_j^+) &= \frac{1}{D} \qty((K+2-j) + \frac{K+1-j}{L}), \\
        \mu(R_j^-) &= \frac{1}{D} \qty((K-j) + \frac{K-j}{L}).
    \end{align*}

For all $1\leq j \leq K$, it holds that at initialization ($t=0$):
    \begin{align*}
        g(L_j^+) &= g(L_j^-) = (L+1)(K+1-j)\qty(j + \frac{j-1}{L}), \\
        g(R_j^+) &= g(R_j^-) = (L+1)\qty(j (K+1-j) + \frac{j(K-j)}{L}).
    \end{align*}
    Hence, we have that:
    \begin{align*}
        \Delta g_j^{(a)} &= g(L_j^-) - g(L_{j+1}^+) = -(L+1)\qty((K-2j) + \frac{K-2j+1}{L}), \\
        \Delta g_j^{(c)} &= g(R_{j-1}^-) - g(R_j^+) = -(L+1)\qty((K+2-2j) + \frac{K-2j+1}{L}).
    \end{align*}

For all $1 \leq j \leq K$, it holds that:\begin{align*}
        q(L_j^+)(0) = q(L_j^-)(0) = \frac{K-j+1 + \frac{K-j+1}{L}}{D}, \quad q(R_j^-)(0) = q(R_j^+)(0) = \frac{K-j+1 + \frac{K-j}{L}}{D}.
    \end{align*}
    Hence, we obtain:\begin{align*}
        \delta_j(0) &= q(L_j^-)(0) - q(L_{j+1}^+)(0) = \frac{1}{D}\cdot \frac{L+1}{L}, \\
        \epsilon_j(0) &= q(R_{j-1}^-) - q(R_j^+) = \frac{1}{D}\cdot \frac{L+1}{L}.
    \end{align*}

Given these quantities, we can now evaluate $G_j^{(p)}$ for $p\in \{a,b,c,d\}$, allowing us to show \Cref{lemma:Gjp_init}.
\begin{proof}[Proof of \Cref{lemma:Gjp_init}]
    We first give the values of the rescaled $G_j^{(p)}$ at initialization. For all depths $1\leq j \leq K$, it holds at initialization ($t=0$) that:
        \begin{align*}
            G_j^{(a)} (0) &= \frac{2(L+1)}{L D} \qty((L+1)^2 j (K-j) + j\qty(W(L^2-1) + 2(L+1)) + (W-1) ((L+1)K + 1)), \\
            G_j^{(b)} (0) &= \frac{2(L+1)}{L^2 D} \qty((L+1)^2 j^2 - (L+1)(K(L+1) + (L-1)(W-1) )j + L(W-1)(KL+K+1) ), \\
            G_j^{(c)} (0) &= \frac{2(L+1)}{L^2 D} \qty(W(KL+K+L) + (L+1)^2(j-1)(K-j) + (L+1)(j-1)(LW+L-W+1)), \\
            G_j^{(d)} (0) &= \frac{2(L+1)}{LD} \qty((L+1)^2j^2 - (L+1)((K+1)(L+1) + W(L-1) )j + LW(KL+K+L) ).
        \end{align*}
    It is easy now to check that $G_j^{(a)}$ and $G_j^{(c)}$ are positive at initialization, whereas the $j$ in the middle depths can possibly cause $G_j^{(b)}$ and $G_j^{(d)}$ to be negative.
\end{proof}

\subsection{RLVR Dynamics}

We first give a complete proof in the case $L=1$. In this case there are no multiedges, 
and a branch is simply a path of length $2K+1$ from the fork to the leaf. Moreover, it is useful to write $N:=2K$ for this section.
For ease of exposition, we will redefine notations (possibly already defined from \ref{app:subsec:summary_quantities})
to be adapted for use specifically in the $L=1$ case.

Fix one branch and denote its vertices in order as
\[
    z_0=f,\qquad z_1,\qquad z_2,\ldots,\qquad z_N,\qquad z_{N+1}=t.
\]
For $1\leq j\leq N$, define the forward and backward edge-states
\[
    F_j:=z_{j-1}\to z_j,\qquad B_j:=z_{j+1}\to z_j.
\]
We also use the boundary states
\[
    B_0:=z_1\to z_0,
    \qquad
    F_{N+1}:=z_N\to z_{N+1}.
\]
At a forward state $F_j$, let $a_j$ denote the probability of continuing forward to $F_{j+1}$. Thus the probability of reversing to $B_{j-1}$ is $1-a_j$. At a backward state $B_j$, let $b_j$ denote the probability of continuing backward to $B_{j-1}$. Thus the probability of reversing to $F_{j+1}$ is $1-b_j$. Define the two logit gaps
\[
    D_j^a:=\Theta_{F_j,F_{j+1}}-\Theta_{F_j,B_{j-1}},
    \qquad
    D_j^b:=\Theta_{B_j,B_{j-1}}-\Theta_{B_j,F_{j+1}}.
\]
Since there are only two valid next states, the desired probabilities satisfy
\[
    a_j=\sigma(D_j^a),\qquad b_j=\sigma(D_j^b),
    \qquad \text{where} \quad
    \sigma(u):=\frac{e^u}{1+e^u}.
\]

Let us similarly define the quantities $G_j^a$ and $G_j^b$ to be the $L=1$ analogues of $G_j^{(a)}$ and $G_j^{(b)}$ from \Cref{app:prop:Delta_g}.
Then,
\[
    \dv{D_j^a}{t}=\operatorname{sgn}(G_j^a),
    \qquad
    \dv{D_j^b}{t}=\operatorname{sgn}(G_j^b).
\]

\paragraph{Target and off-target quantities.}

For $1\leq j\leq N$, define
\[
    \delta_j:=q(B_{j-1})-q(F_{j+1}),
    \qquad
    U_j:=g(B_{j-1})-g(F_{j+1}),
\]
and
\[
    V_j:=\widetilde g(F_{j+1})-\widetilde g(B_{j-1}).
\]

Observe that these quantities are the same as we defined from \ref{app:subsec:RLVRnotation}, only that we have adapted it for $L=1$.
In addition, the full hitting-time gap is
\begin{equation}\label{eq:l1-A-def}
    A_j:=h_x(B_{j-1})-h_x(F_{j+1})=U_j+\delta_j H.
\end{equation}
As before, we have that the inequalities $\delta_j>0$ and $V_j>0$ follow 
because every path from $F_{j+1}$ to the fork must pass through $B_{j-1}$, 
and every path from $B_{j-1}$ to the target must pass through $F_{j+1}$.

Furthermore, let
\[
    \mu_j^F:=\E\left[\sum_{t<\tau_{\{f,x\}}}\mathbf 1\{s_t=F_j\} | s_0 = F_1 \right],
    \qquad
    \mu_j^B:=\E\left[\sum_{t<\tau_{\{f,x\}}}\mathbf 1\{s_t=B_j\} | s_0 = F_1\right],
\]
be target branch visit counts before absorption at $\{f,x\}$. Define $\widetilde\mu_j^F,\widetilde\mu_j^B$ analogously on a non-target branch before hitting $f$. Suppressing the common positive factor $1/(Wp)$ in \Cref{app:prop:Delta_g}, we have
\begin{align}
    G_j^a
    &=\mu_j^F A_j-(W-1)\widetilde\mu_j^F V_j,
    \label{eq:l1-Ga}\\
    G_j^b
    &=-\mu_j^B A_j+(W-1)\widetilde\mu_j^B V_j.
    \label{eq:l1-Gb}
\end{align}

\paragraph{Closed forms.}
In this section, we adapt the closed forms from \ref{app:subsec:RLVRnotation} to the $L=1$ case. 
For $1\leq j \leq N$, define $S_j := \prod_{m=1}^{j-1} \frac{a_m}{b_m}$.
Then the off-target visit counts are
\begin{equation}\label{eq:l1-offtarget-visits}
    \widetilde\mu_j^F=S_j,
    \qquad
    \widetilde\mu_j^B=S_{j+1},
\end{equation}
which follows from conservation of flux across the cut between $z_j$ and $z_{j+1}$ on a non-target branch.
That is, every forward crossing of the cut must be matched by a backward crossing, so
\[
    a_j\widetilde\mu_j^F+(1-b_j)\widetilde\mu_j^B=\widetilde\mu_j^B,
\]
which is equivalent to $b_j\widetilde\mu_j^B=a_j\widetilde\mu_j^F$.

Next, we define $R_j:=\prod_{m=1}^{j-1}\frac{b_m}{a_{m+1}}$ for $1\leq j\leq N$, and $Z_q:=1+\sum_{r=2}^N(1-a_r)R_r$.
Then observe that
\begin{equation}\label{eq:l1-delta-p}
    \delta_j=\frac{R_j}{Z_q},
    \qquad
    p=a_1\delta_1=\frac{a_1}{Z_q}.
\end{equation}
To verify this, subtract the Bellman equations for $q(B_j)$ and $q(F_{j+1})$ to obtain
\[
    b_j\delta_j=a_{j+1}\delta_{j+1}\qquad (1\leq j<N),
\]
so $\delta_j=R_j\delta_1$. By a telescoping decomposition, we have:
\begin{align*}
    1&=q(B_0)-q(F_{N+1})\\
     &=q(B_0)-q(F_1)+\sum_{r=1}^N\big(q(F_r)-q(F_{r+1})\big)\\
     &=a_1\delta_1+\sum_{r=1}^N(1-a_r)\delta_r
      =\delta_1+\sum_{r=2}^N(1-a_r)\delta_r,
\end{align*}
which implies $\delta_1Z_q=1$.

For target branch visit counts, define $M_1:=1$ and recursively
\begin{equation}\label{eq:l1-M-rec}
    M_{j+1}:=\frac{a_jM_j-(1-b_j)p_{\mathrm{succ}}}{b_j}
    \qquad (1\leq j\leq N).
\end{equation}
Then
\begin{equation}\label{eq:l1-target-visits}
    \mu_j^F=M_j,
    \qquad
    \mu_j^B=M_{j+1}-p_{\mathrm{succ}}.
\end{equation}
The identity $\mu_j^B=M_{j+1}-p_{\mathrm{succ}}$ says that, across the cut at depth $j$, a successful target attempt has one more forward crossing than backward crossing. Substituting this into the flux equation
\[
    M_{j+1}=a_jM_j+(1-b_j)\mu_j^B,
\]
gives \eqref{eq:l1-M-rec}.

The off-target gaps satisfy
\begin{equation}\label{eq:l1-V-rec-boundary}
    V_N=\frac{2}{b_N},
\end{equation}
and, for $1\leq j<N$,
\begin{equation}\label{eq:l1-V-rec}
    V_j=\frac{2+a_{j+1}V_{j+1}}{b_j}.
\end{equation}
Indeed, subtracting the off-target Bellman equations gives $b_jV_j=2+a_{j+1}V_{j+1}$. At the leaf, $\widetilde g(F_{N+1})=1+\widetilde g(B_N)$ and the Bellman equation for $B_N$ gives $V_N=2/b_N$.

For the target time gaps, define $P_1:=1$, $C_1:=0$, and recursively
\[
    P_{j+1}:=\frac{b_j}{a_{j+1}}P_j,
    \qquad
    C_{j+1}:=\frac{2+b_jC_j}{a_{j+1}}.
\]
Then
\begin{equation}\label{eq:l1-U-linear}
    U_j=P_jU_1+C_j,
\end{equation}
where
\begin{equation}\label{eq:l1-U1}
    U_1=-\frac{N-1+\sum_{r=2}^N(1-a_r)C_r}{1+\sum_{r=2}^N(1-a_r)P_r}.
\end{equation}
The recurrence $a_{j+1}U_{j+1}=2+b_jU_j$ is obtained by subtracting the target branch Bellman equations. The scalar $U_1$ is determined from the telescoping identity
\[
    0=g(B_0)-g(F_{N+1})=N-1+U_1+\sum_{r=2}^N(1-a_r)U_r,
\]
which gives \eqref{eq:l1-U1} after substituting \eqref{eq:l1-U-linear}.
Finally,
\begin{equation}\label{eq:l1-H-closed}
    H=\frac{W+1-a_1U_1+(W-1)(1+a_1V_1)}{p_{\mathrm{succ}}}.
\end{equation}

Having specialized all of the quantities to the $L=1$ case, we now proceed with the analysis of the training dynamics, starting with Phase I.

\paragraph{Phase I.}
For $1\leq j<N$, define the following quantity derived from the two terms in $G_j^b$:
\begin{equation}\label{eq:l1-Gamma-def}
    \Gamma_j(t):=
    \frac{(W-1)\widetilde\mu_j^B(t)V_j(t)}{\mu_j^B(t)A_j(t)}.
\end{equation}
In particular for $j=N$, we have $\mu_N^B=0$, and hence $G_N^b=(W-1)\widetilde\mu_N^B V_N>0$ whenever the state is reachable.
As such, we set $\Gamma_N:=+\infty$. It is easy to see the following:
\[
    G_j^b(t)>0\iff \Gamma_j(t)>1,
    \qquad
    G_j^b(t)<0\iff \Gamma_j(t)<1.
\]
At initialization, for $1\leq j<N$,
\begin{equation}\label{eq:l1-Gamma-init}
    \Gamma_j(0)=
    \frac{(W-1)(N+1)(N-j+1)}{(N-j)\big(j+(W-1)(N+1)\big)}.
\end{equation}
We observe that indeed for the middle depths, this can possibly be smaller than 1.
In general, both the numerator and denominator will be positive, as these are the contributing terms to the two directions in $G_j^b$.
We now proceed with the main guarantee of Phase I, which is the most technical part of our analysis, yet the shortest phase in empirical simulations.

\begin{lemma}\label{lem:l1-phase1-certificate}
When $L=1$, the following hold for every depth $1\leq j<N$:
\begin{enumerate}[label=(\roman*)]
    \item (Forward state logit gaps are always increasing): $G_j^a(t)>0$ for all $t\geq 0$;
    \item (Negative $G_j^b$ states always make progress towards becoming positive): whenever $\Gamma_j(t)\leq 1$,
    \[
        \frac{d^+}{dt}\log \Gamma_j(t)\geq 2;
    \]
    \item (Boundary condition): if $\Gamma_j(t)=1$, then $\frac{d^+}{dt}\log \Gamma_j(t)>0$.
\end{enumerate}
Thus, once $G_j^b$ becomes nonnegative, it never becomes negative again.
\end{lemma}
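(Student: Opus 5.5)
The plan is to work entirely with the $L=1$ closed forms \eqref{eq:l1-offtarget-visits}--\eqref{eq:l1-H-closed}, viewing every quantity as a function of the logit gaps $(D_m^a,D_m^b)_{m\le N}$. The dynamics move each gap at unit speed with sign $\operatorname{sgn}(G_m^{a})$ or $\operatorname{sgn}(G_m^{b})$. I would prove (i) first, because it fixes the direction of every forward coordinate ($\dot D_m^a=+1$ for all $m$). That leaves only the backward gaps with undetermined sign for (ii) and (iii).

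\emph{Step 1, proof of (i).} Write $G_j^a=\mu_j^F A_j-(W-1)\widetilde\mu_j^F V_j$ with $A_j=U_j+\delta_jH$. The idea is that $H$ carries the factor $1/p_{\mathrm{succ}}$ together with the full off-target excursion term $(W-1)(1+a_1V_1)$, by \eqref{eq:l1-H-closed}. So $\delta_j H$ should dominate $(W-1)\widetilde\mu_j^FV_j/\mu_j^F$. Concretely, I would:
\begin{itemize}
\item express $\widetilde\mu_j^F V_j=S_jV_j$ by unrolling \eqref{eq:l1-V-rec}, which gives a sum $\sum_{m\ge j}2\prod(\cdot)$;
\item compare this sum termwise with the corresponding piece of $a_1V_1$ inside $H$, using $\delta_j=R_j/Z_q$ and $p_{\mathrm{succ}}=a_1/Z_q$;
\item separately show that the target-branch term $\mu_j^FU_j+\mu_j^F\delta_j(W+1-a_1U_1)/p_{\mathrm{succ}}$ is positive, using \eqref{eq:l1-U-linear}, \eqref{eq:l1-U1} and the lower bound $M_j\ge p_{\mathrm{succ}}$ from \eqref{eq:l1-M-rec}.
\end{itemize}
The cleanest route is probably a cut argument. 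Any excursion of a non-target branch beyond depth $j$ is, through $H$, paid for by every target attempt that passes the cut at $j$. This should yield an inequality of the form $\mu_j^F\delta_jH\ge(W-1)\widetilde\mu_j^FV_j+(\text{positive})$, valid for \emph{all} parameter values, so no dynamics are needed for (i).

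\emph{Step 2, proof of (ii).} Expand
\[
\log\Gamma_j=\log(W-1)+\log S_{j+1}+\log V_j-\log(M_{j+1}-p_{\mathrm{succ}})-\log A_j,
\]
and differentiate along the flow using $\dot a_m=a_m(1-a_m)$ and $\dot b_m=\pm b_m(1-b_m)$. When $\Gamma_j\le1$ we have $\dot D_j^b=-1$, so the own coordinate $b_j$ decreases.

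I would aim to isolate a ``local'' part that contributes exactly $2$ in logit units. The term $\log S_{j+1}=\sum_{m\le j}(\log a_m-\log b_m)$ together with the $1/b_j$ factor of $V_j$ in \eqref{eq:l1-V-rec} combines into $\log(a_j/b_j^2)$ modulo the $M_{j+1}$ denominator. The same is true after rewriting $M_{j+1}-p_{\mathrm{succ}}=(a_jM_j-p_{\mathrm{succ}})/b_j$, which is where the odds $e^{D_j^a}$ and $e^{-D_j^b}$ appear. Passing to odds ratios, the intended local contribution is $\tfrac{d}{dt}(D_j^a-D_j^b)=2$. Every remaining dependence is then checked to be monotone in the favorable direction, one coordinate class at a time:
\begin{itemize}
\item the forward coordinates $a_m$ all increase;
\item backward coordinates $b_m$ with $G_m^b>0$ increase;
\item backward coordinates $b_m$ with $G_m^b<0$ decrease.
\end{itemize}
For the last class I would use that, by the initial profile \eqref{eq:l1-Gamma-init} and induction, the negative set is a contiguous middle interval. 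So for $m\ne j$ in that interval, the sign of $\partial\log\Gamma_j/\partial D_m^b$ has to be verified directly. The plan is to show that $\partial\log\Gamma_j/\partial D_m^b\le0$ for $m$ in the middle and $\ge0$ elsewhere, via the sign pattern of $\partial U_j/\partial b_m$ and $\partial\delta_j/\partial b_m$.

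\emph{Step 3, proof of (iii), and the main obstacle.} Part (iii) is the boundary case of the same computation at $\Gamma_j=1$. There $\dot D_j^b$ is set-valued, so I would use the one-sided derivative $d^+$ and show that the non-local terms alone are strictly positive, which gives invariance of $\{G_j^b\ge0\}$. The main obstacle is Step 2's control of $\log A_j$. Here $A_j$ depends on all coordinates through $U_1$ and $H$, and the derivative of $H$ with respect to the middle $b_m$ has no obvious sign. My first attempt would be to bound $|\partial_t\log A_j|$ by the positive part coming from $\partial_t\log(\delta_jH)$. This uses the fact that $H\propto1/p_{\mathrm{succ}}$ increases when middle $b_m$ decrease, which only helps the inequality. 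If that fails, I would fall back on a weaker constant than $2$, which still suffices for the finite-time conclusion.
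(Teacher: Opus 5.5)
\paragraph{Step 1 fails: part (i) is not a parameter-free inequality.} The gap is in Step~1. No cut argument can prove (i) ``for all parameter values,'' because that statement is false. Write $\Sigma_j$ for the expected number of visits, on a non-target branch, to $F_1,\dots,F_j,B_1,\dots,B_{j-1}$. It depends only on $(a_m,b_m)_{m<j}$. Three facts combine:
\begin{itemize}
\item $\delta_j/p_{\mathrm{succ}}=R_j/a_1=1/(a_jS_j)$;
\item $p_{\mathrm{succ}}H=W+g(F_1)+(W-1)\widetilde g(F_1)$, by \eqref{eq:l1-H-closed};
\item $\widetilde g(F_1)=\Sigma_j+a_jS_jV_j$, since each of the $a_jS_j$ expected transitions $F_j\to F_{j+1}$ opens an excursion of expected length $V_j$ that ends at $B_{j-1}$.
\end{itemize}
Together they give
\[
G_j^a=M_jU_j+\frac{M_j}{a_jS_j}\Big(W+g(F_1)+(W-1)\Sigma_j\Big)-(W-1)(S_j-M_j)V_j .
\]
So your cut argument recovers exactly $(W-1)M_jV_j$ from $\mu_j^F\delta_jH$, but the off-target cost is $(W-1)S_jV_j$. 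Comparing \eqref{eq:l1-M-rec} with $S_{j+1}=\frac{a_j}{b_j}S_j$ gives $S_{j+1}-M_{j+1}=\frac{a_j}{b_j}(S_j-M_j)+\frac{1-b_j}{b_j}p_{\mathrm{succ}}$. Hence $S_j>M_j$ for all $j\ge 2$: off-target attempts revisit $F_j$ more often than target attempts do.

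Now send $b_j\to 0$ with the other coordinates fixed. Then $V_j\ge 2/b_j\to\infty$, while $M_j,U_j,g(F_1),\Sigma_j,p_{\mathrm{succ}}$ stay bounded, so $G_j^a\to-\infty$. Concretely, take $N=3$, $W=2$, $a_1=a_2=a_3=b_1=b_3=\tfrac12$, $b_2=\epsilon$. Then $S_2=1$, $M_2=\frac{2+2\epsilon}{3+2\epsilon}$, $V_2=4/\epsilon$, and $G_2^a=-\frac{4}{3\epsilon}+O(1)<0$ for small $\epsilon$. So (i) can only hold along the training trajectory, where the $b_m$ are pushed down only while $G_m^b<0$.

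\paragraph{The missing idea, and consequences for (ii).} The paper's key tool is the identity
\[
a_jG_j^a+b_jG_j^b=(a_j\mu_j^F-b_j\mu_j^B)A_j+(W-1)V_j\big(b_j\widetilde\mu_j^B-a_j\widetilde\mu_j^F\big)=p_{\mathrm{succ}}A_j>0 .
\]
It follows from the flux relations $b_j\widetilde\mu_j^B=a_j\widetilde\mu_j^F$ and $a_j\mu_j^F-b_j\mu_j^B=p_{\mathrm{succ}}$.
\begin{itemize}
\item It gives $G_j^a>0$ whenever $G_j^b\le 0$, with no estimates. That is exactly the regime $\Gamma_j\le 1$ where (ii) is asserted.
\item When $G_j^b>0$, positivity of $G_j^a$ must be tied to the same monotonicity of $\Gamma_j$. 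So (i) has to be proved jointly with (ii)--(iii), not before them. Your plan of proving (i) first and then setting $\dot D_m^a=+1$ for all $m$ is not available.
\end{itemize}
For (ii), the paper also differentiates $\log\Gamma_j$. It gets the constant $2$ from two recursions and their time derivatives:
\begin{itemize}
\item $b_rV_r=2+a_{r+1}V_{r+1}$;
\item $a_{r+1}A_{r+1}=2+b_rA_r$, which follows from $b_r\delta_r=a_{r+1}\delta_{r+1}$ and $a_{r+1}U_{r+1}=2+b_rU_r$.
\end{itemize}
With the flux identities, $\mu_j^BA_j\,\frac{d^+}{dt}\log\Gamma_j$ becomes $2\mu_j^BA_j$ plus nonnegative sums in the $A_r,V_r$. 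Your claim that the local part contributes exactly $\frac{d}{dt}(D_j^a-D_j^b)=2$ does not hold as stated. First, $\frac{d}{dt}\log a_j=(1-a_j)\dot D_j^a$. Second, $a_j$ and $b_j$ also enter $\mu_j^B=(a_jM_j-p_{\mathrm{succ}})/b_j$ and $A_j$. Falling back to a weaker constant would not prove (ii) as stated.
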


\begin{proof}
Recall that $b_j\delta_j = a_{j+1}\delta_{j+1}$ and $a_{j+1}U_{j+1} = 2 + b_j U_j$. Then, using the fact that $A_j = U_j+\delta_j H$ yields
\[
    a_{j+1} A_{j+1} = 2+b_j A_j.
\]
In addition, we have that
\begin{align*}
a_jG_j^a+b_jG_j^b &= a_j\qty(\mu_j^F A_j-(W-1)\widetilde\mu_j^F V_j) + b_j\qty(-\mu_j^B A_j+(W-1)\widetilde\mu_j^B V_j) \\
&= (a_j \mu_j^F - b_j \mu_j^B) A_j \\
&= p_{\mathrm{succ}} A_j,
\end{align*}
where in the second line we used the fact that $b_j\tilde \mu_j^B = a_j \tilde \mu_j^F$ and in the 
last line we used $a_j\mu_j^F - b_j\mu_j^B=p$. Note that the latter follows from the expected visitation difference
between a forward and backward state of the same edge, which is $p$ for an on-target branch as it is exactly equal
to the probability we absorb at the target. Since $p_{\mathrm{succ}}, A_j > 0$, we have that $G_j^a > 0$ whenever $G_j^b \leq 0$.

It remains to handle the case of showing the $G_j^a> 0$ when $G_j^b > 0$. We will use the same approach to show both that
$G_j^b$ stays positive after it becomes so, and similarly with the $G_j^a$ that correspond to these $G_j^b$.
Concretely, we consider analyzing the regimes where $\Gamma_j(t) < 1$ and $\Gamma_j(t)=1$.
Consider what happens to $\log\Gamma_j$ over time.  
We differentiate \eqref{eq:l1-Gamma-def}. 
While $\Gamma_j\leq 1$, we have $G_j^b\leq 0$, and from above $G_m^a>0$ for every $m$. Hence $\dot D_m^a=1$ for all $m$, 
and every $m$ where $G_m^b<0$ has $\dot D_m^b=-1$. 
Since $\widetilde\mu_j^B=S_{j+1}$, we calculate:
\begin{align*}
    \frac{d^+}{dt}\log\Gamma_j
    &=\frac{d^+}{dt}
      \left(\log S_{j+1}+\log V_j-\log\mu_j^B-\log A_j\right) \\
    &=\frac{\dot S_{j+1}}{S_{j+1}}
      +\frac{\dot V_j}{V_j}
      -\frac{\dot\mu_j^B}{\mu_j^B}
      -\frac{\dot A_j}{A_j}.
\end{align*}

Upon substituting and clearing denominators, we have the following calculation.
First, define
\[
    P_{r,j}:=\prod_{m=r}^{j-1}\frac{b_m}{a_{m+1}}
    \quad (r\leq j),
    \qquad
    Q_{j,r}:=\prod_{m=j}^{r-1}\frac{a_{m+1}}{b_m}
    \quad (j\leq r).
\]

For the target branch terms $-\frac{\dot\mu_j^B}{\mu_j^B}-\frac{\dot A_j}{A_j}$,
we have that
\begin{align}\label{eq:l1-target-logdiff-telescope}
    -\mu_j^B\dot A_j-A_j\dot\mu_j^B+\mu_j^BA_j
    =
    \mu_j^BA_j
    +2p\sum_{r=1}^{j}P_{r,j}A_r,
\end{align}
where we used the fact that 
\[
    b_rV_r=2+a_{r+1}V_{r+1},
    \qquad
    a_{r+1}A_{r+1}=2+b_rA_r,
\]
and
\[
    b_r\widetilde\mu_r^B=a_r\widetilde\mu_r^F,
    \qquad
    a_r\mu_r^F-b_r\mu_r^B=p_{\mathrm{succ}},
\]
hold, as well as their time-derivatives. Similarly, for the off-target terms $\frac{\dot S_{j+1}}{S_{j+1}} + \frac{\dot V_j}{V_j}$ in the time derivative,
we have:
\begin{align}\label{eq:l1-offtarget-logdiff-telescope} 
    \mu_j^BA_j
    \left(
      \frac{\dot S_{j+1}}{S_{j+1}}
      +\frac{\dot V_j}{V_j}
      -1
    \right)
    &=
      \frac{2(W-1)S_{j+1}\mu_j^BA_j}{\widetilde\mu_j^BV_j}
      \sum_{r=j}^{N}
      Q_{j,r}
      \frac{V_r^2}{V_{r+1}+V_r}.
\end{align}

Adding
\eqref{eq:l1-target-logdiff-telescope} and
\eqref{eq:l1-offtarget-logdiff-telescope} yields
\begin{align*}
    &\mu_j^BA_j
      \left(
      \frac{\dot S_{j+1}}{S_{j+1}}
      +\frac{\dot V_j}{V_j}
      -\frac{\dot\mu_j^B}{\mu_j^B}
      -\frac{\dot A_j}{A_j}
      \right) \\
    &\qquad =
      2\mu_j^BA_j
      +2p_{\mathrm{succ}}
      \sum_{r=1}^{j}
      \left(\prod_{m=r}^{j-1}\frac{b_m}{a_{m+1}}\right)A_r \\
    &\qquad\quad
      +\frac{2(W-1)S_{j+1}\mu_j^BA_j}{\widetilde\mu_j^BV_j}
      \sum_{r=j}^{N}
      \left(\prod_{m=j}^{r-1}\frac{a_{m+1}}{b_m}\right)
      \frac{V_r^2}{V_{r+1}+V_r}.
\end{align*}

Since $\mu_j^B,A_j>0$, we have that this time derivative is positive, and hence $\Gamma_j$ is increasing for all $j$, as desired.
\end{proof}

We now use \Cref{lem:l1-phase1-certificate} to complete the proof of \Cref{lemma:RLVRflipping_time}.
\begin{proof}[Proof of \Cref{lemma:RLVRflipping_time}]
    Consider a depth $j$ for which $\Gamma_j(0) \leq 1$, or equivalently $G_j^b<0$ at initialization.
    Then, by \Cref{lem:l1-phase1-certificate}, as long as $\Gamma_j(t)\leq 1$, it holds that $\Gamma_j(t)\geq \Gamma_j(0) e^{2t}$.
    Hence, $G_j^b$ becomes nonnegative no later than
    \begin{equation}\label{eq:l1-tauj}
        \tau_j\leq \frac12\left[\log\frac{1}{\Gamma_j(0)}\right]_+,
        \qquad [u]_+:=\max\{u,0\}.
    \end{equation}
    Consequently, all backward signs are nonnegative by time
    \begin{equation}\label{eq:l1-Tmeet}
        T_{\rm meet}^{(1)}:=
        \frac12\max_{1\leq j<N}
        \left[\log
        \frac{(N-j)\big(j+(W-1)(N+1)\big)}{(W-1)(N+1)(N-j+1)}
        \right]_+,
    \end{equation}
    where these values come from the $L=1$ case of \ref{app:subsubsec:init_values}.

\end{proof}

\paragraph{Phase II.}
After Phase I, all the $G_j^a, G_j^b$ are now positive. In this section, we will prove the dynamics of Phase II, which ends up being more straightforward than the first phase.
\begin{proof} [Proof of \Cref{lemma:RLVRconvergence}]
    At time $T_{\rm meet}^{(1)}$ which marks the end of Phase I, it holds that all $G_j^b$ are nonnegative (and of course $G_j^a$ will hold regardless).
By the boundary condition of $\Gamma_j=1$ in \Cref{lem:l1-phase1-certificate}, we have that the $G_j^b$ will never flip back to being negative.
Thus, for all $s\geq 0$,
\[
    D_j^a(T_{\rm meet}^{(1)}+s)=D_j^a(T_{\rm meet}^{(1)})+2s,
    \qquad
    D_j^b(T_{\rm meet}^{(1)}+s)=D_j^b(T_{\rm meet}^{(1)})+2s.
\]
Let $\kappa \in (0,1/2)$, and define $\Lambda_\kappa:=\log\frac{1-\kappa}{\kappa}$.
Then, by a straightforward calculation we obtain that in time $T_{\rm meet}^{(1)} + \Lambda_\kappa$ after Phase II begins, we will have
\[
    \min_{1\leq j\leq N}\min\{a_j(T_\kappa^{(1)}),b_j(T_\kappa^{(1)})\}\geq 1-\kappa.
\]
Here, we used the fact that the logit gaps at time $T_{\rm meet}^{(1)}$ satisfy 
\[
D_j^a(T_{\rm meet}^{(1)} ) = 2T_{\rm meet}^{(1)}, \quad D_j^b(T_{\rm meet}^{(1)}) \geq -2T_{\rm meet}^{(1)}.
\]

Altogether, this proves \Cref{lemma:RLVRconvergence} for the Phase II dynamics.

\end{proof}

\begin{proof}[Proof of \Cref{thm:rlvr}]
We can retrace the proof of the $L=1$ case to adapt it for the case of general $L$ in the original diamond graph.
The main idea is to unfold each diamond into two half-steps. 
After aggregating over the $L$ symmetric multiedges, the unfolded chain has the same path structure as the $L=1$ case, 
with the difference being that the aggregate log-probabilities have fixed offsets of $\log L$.

More precisely, we fix a branch, and define the ordered vertices
\[
    z_0:=f,
    \qquad
    z_{2j-1}:=u_{j,l},
    \qquad
    z_{2j}:=u_{j,r}\quad (1\leq j\leq K),
    \qquad
    z_{2K+1}:=t.
\]

Once again, we set $N:=2K$. For $1\leq m\leq N$, define
\[
    F_m:=z_{m-1}\to z_m,
    \qquad
    B_m:=z_{m+1}\to z_m.
\]
Then
\[
    F_{2j-1}=L_j^+,
    \qquad
    F_{2j}=R_j^+,
\]
and
\[
    B_{2j-1}=L_j^-,
    \qquad
    B_{2j}=R_j^-.
\]

Following the same analysis as before, except with different initialization probabilities of $a_j=d_j=1/(L+1)$ and $b_j=c_j=L/(L+1)$,
we obtain the same result using an identical argument for Phase I. As for Phase II, we observe that
the time for all probabilities to be at least $1-\kappa$ now incurs an extra time of $\log L$ compared to the $L=1$ case.
All in all, this proves convergence towards our learned backtracking policy, and thus concludes the proof of the training dynamics.
\end{proof}

\section{Inference Time Separation} \label{app:sec:inference}
In this section, we will prove the separation in inference time compute needed for the SFT model vs. the RLVR model, as formalized by \Cref{thm:separation}.

\begin{proof}[Proof of \Cref{thm:separation}]
As noted in \ref{subsec:inference_sketch}, the RLVR model requires only $\Theta(WK)$ time to find a path. For the converged SFT model in \Cref{thm:sft}, we recall the following properties of the backtracking behavior that were described in the main text.
\begin{enumerate}
    \item Upon entering a branch $i$, since $a_j=c_j=1$ for all $j$, we reach the $t_i$ node in $\Theta(K)$ time.
    \item When exiting the branch, starting from the state $R_{K+1}^+$ (i.e. reached the state with head $t_i$), let $g_i$ denote the expected time of first entry into $R_{K+1-i}^-$, and $f_i$ denote the expected time of first entry into $L_{K+1-i}^-$. Then, with the base case of $g_1 = 1$, we have the following recursion:
    \begin{align*}
        f_i &= \frac{L+1}{L} g_i + (2i-1)\cdot \frac{1}{L} + 1, \\
        g_i &= (L+1) f_{i-1} + (2i-2)\cdot L + 1.
    \end{align*}
\end{enumerate}
Hence, the time needed from entry of a wrong branch to getting back to the fork state is $g_{K+1}$. 

We now justify our recurrence equations. Suppose we are currently at the state $R_{K+1-i}^-$, and we are trying to reach the next state $L_{K+1-i}^-$. There is a $L/(L+1)$ chance that we proceed forward, and a $1/(L+1)$ chance that we make a u-turn and end up at the leaf again (which takes $2i-1$ time). By standard geometric mean properties, we must make on expectation $(L+1)/L$ attempts before we can successfully get from $R_{K+1-i}^-$ to $L_{K+1-i}^-$. This tells us that on the first $\frac{L+1}{L}-1=\frac{1}{L}$ attempts, we must pay an extra cost of $2i-1$ to go back to the leaf. A similar analysis holds for the case of $L_{K+1-i}^-$, where the immediate goal is to reach $R_{K-i}^-$. Here, we must make on expectation $L+1$ attempts, which means the first $L+1-1=L$ attempts pay an extra $2i-2$ to get back to the leaf.

We will proceed to write down a closed form for $g_{K+1}$.
First, note that:
\begin{align*}
f_{i-1} = \frac{L+1}{L} g_{i-1} + (2i-1)\cdot \frac{1}{L} + 1.
\end{align*}
Substituting this into the expression for $g_i$, we have:\begin{align*}
    g_i &= (L+1) f_{i-1} + (2i-2)\cdot L + 1  \\
    &= (L+1)\qty(\frac{L+1}{L} g_{i-1} + (2i-1)\cdot \frac{1}{L} + 1 ) + (2i-2)\cdot L + 1 \\
    &= \frac{(L+1)^2}{L} g_{i-1} + i\cdot \frac{2(L^2+L+1)}{L} - \frac{L^2+L+3}{L}.
\end{align*}
Let $r=\frac{(L+1)^2}{L}$, $\alpha = \frac{2(L^2+L+1)}{L}$, and $\beta=- \frac{L^2+L+3}{L}$. Then, our recurrence becomes $g_i = r g_{i-1} + \alpha i + \beta$. By an induction argument, we obtain that:
\begin{align*}
    g_{K+1} = r^K + \alpha \sum_{t=2}^{K+1} t r^{K+1-t} + \beta\sum_{t=2}^{K+1} r^{K+1-t}.
\end{align*}

The first term is of course at least $L^K$. For the second term, the $t=2$ term multiplied by $\alpha$ already contributes a term of at least $2L^K$. For the third term, it is simply a geometric series, which can be evaluated to be:
\begin{align*}
    \beta\cdot \frac{r^K-1}{r-1} \geq -2(r^K-1),
\end{align*}
for $L\geq 2$. Combining everything, we obtain that \begin{align*}
        g_{K+1} = r^K + \alpha \sum_{t=2}^{K+1} t r^{K+1-t} + \beta\sum_{t=2}^{K+1} r^{K+1-t} \geq L^K + 2 L^K - 2L^K = \Omega(L^K).
\end{align*}

Similar to the RLVR model, we have that by symmetry, it holds that we will have on expectation of $W$ branch entries before reaching the desired target branch. Therefore, the total inference-time compute needed for the SFT model is $\Omega( W L^K)$, and the desired result follows.
\end{proof}

\section{Distilling RLVR Reasoning Traces} \label{app:sec:distill}
In this section, we will prove that fine-tuning our pretrained model on the reasoning traces of the RLVR-tuned model will also be effective in mitigating the inefficiency caused by a model trained only on golden examples. We will once again invoke \Cref{app:lem:rowwise-grad}.

\begin{proof}[Proof of \Cref{thm:distillation}]
Let $\mathcal{Q}_{\mathrm{distill}}$ be the distribution over adjacent transition pairs $(s,a)$ induced by $(x,y)\sim\mathcal{D}$; that is, sample $(x,y)$ and then sample a uniformly random transition $(y_{t-1},y_t)$ from the trace. Then
\[
L_{\mathrm{distill}}(\Theta)=\mathbb{E}_{(s,a)\sim\mathcal{Q}_{\mathrm{distill}}}\big[-\log \pi_\Theta(a\mid s)\big],
\]
and Lemma~\ref{app:lem:rowwise-grad} applies row-wise. In the remainder of the proof, 
we define the distillation teacher $\pi^\star$ to be the fully converged RLVR policy from \Cref{thm:rlvr}.
In particular, this means that for any state $s$ and corresponding next state $a$,

\[
p_{\mathcal{Q}_{\mathrm{distill}}}(a\mid s)=\pi^\star(a\mid s).
\]

By the support assumption in the theorem statement, each such state $s$ satisfies
$d_{\mathcal{Q}_{\mathrm{distill}}}(s)>0$, so the row corresponding to $s$ is updated by gradient flow
independently of other rows. Similar to the analyses of pretraining and SFT in the previous sections, it is the case that for each state $s$,
the action space out of $s$ splits into two symmetry classes: the desired action(s) and the undesired action(s).
With the pretrained initialization, logits are equal within each symmetry class, and the same ODE uniqueness argument from before
implies that they remain equal within each class over time.
Thus each such row reduces to a two-logit system (desired logit $u(t)$, undesired logit $v(t)$), and the total desired
probability $p(t)$ (which is one of $a_j,b_j,c_j,d_j$ depending on the state type) takes the form
\[
p(t)=\frac{m_1 e^{u(t)}}{m_1 e^{u(t)}+m_0 e^{v(t)}},
\]
for some class sizes $m_1,m_0$ (e.g. $m_1=1,m_0=L$ for $R_j^+$; $m_1=L,m_0=1$ for $L_j^+$; etc.).

Because the teacher conditional is supported entirely on the desired class, Lemma~\ref{app:lem:rowwise-grad} gives (up to time rescaling)
the same monotone gap dynamics as in the SFT proof.
That is, the logit gap $g(t):=u(t)-v(t)$ satisfies $g'(t)\ge 0$ whenever $p(t)<1$, and in fact the exponential gap
$w(t):=e^{g(t)}$ grows at least linearly in $t$. Consequently $1-p(t)$ decays to $0$, and for any $\kappa>0$ there exists finite
time $T_s(\kappa)$ such that $p(t)\ge 1-\kappa$ for all $t\ge T_s(\kappa)$.

Applying this argument to every visited state $R_j^+,L_j^+,R_j^-,L_j^-$ and taking
$T(\kappa):=\max_s T_s(\kappa)$ over these finitely many state types yields
\[
\min_j \min\{a_j(t),b_j(t),c_j(t),d_j(t)\}\ge 1-\kappa
\quad\text{for all }t\ge T(\kappa).
\]

In particular, this means the distilled policy inherits the same $\Theta(WK)$ expected hitting-time bound established in \Cref{app:sec:inference} for the
RLVR policy, as desired.
\end{proof}

\section{Additional Experiments}

\begin{figure}[H]
    \centering
    \includegraphics[width=\linewidth]{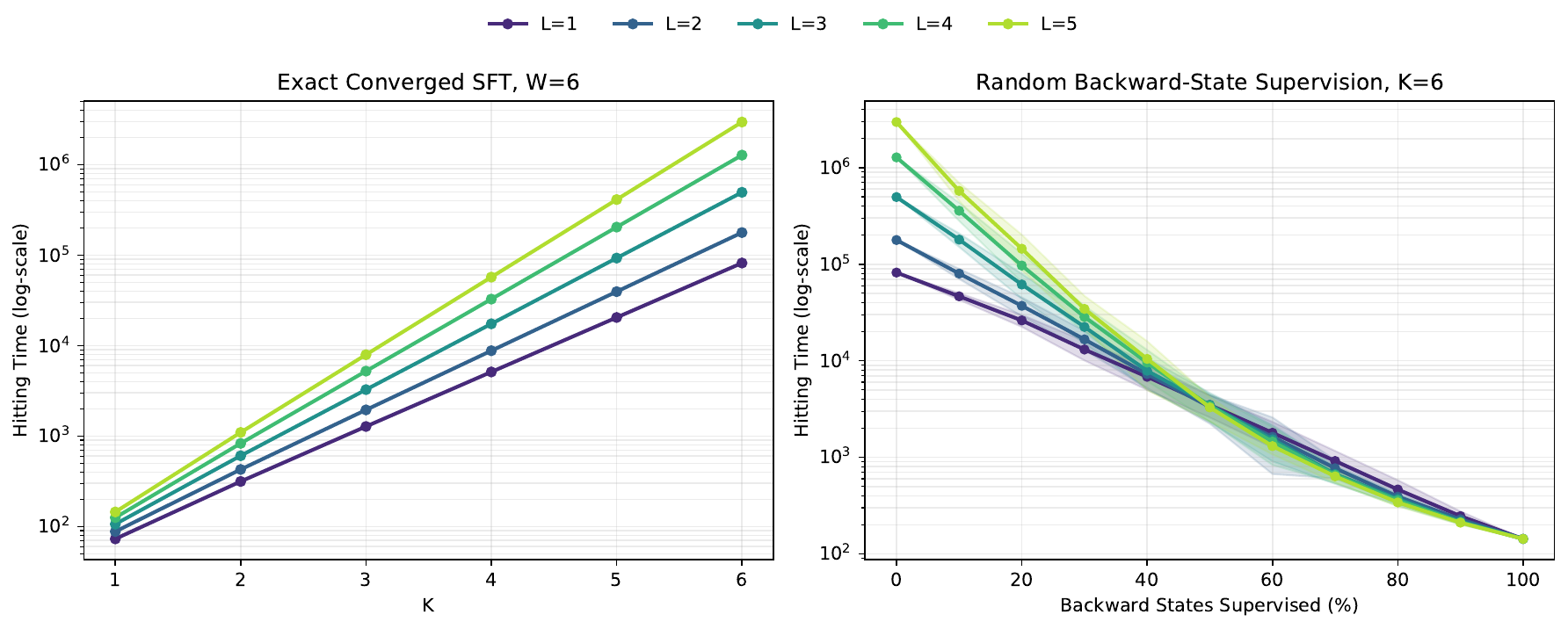}
    \caption{For our SFT experiments, we fix $W=6$ branches, and plot the hitting times on a log scale. \textbf{Left:} For each $1\leq L\leq 5$, we plot a curve of the (log-) hitting times with respect to the branch length $K$. \textbf{Right:} Here, we additionally fix $K=6$. For $1\leq L\leq 5$, we observe the empirical hitting times decrease upon additional supervision of backtracking data. That is, we randomly sample a $p$ fraction of backwards states $L_j^-$ and $R_j^-$ over all $1\leq j \leq K$, give it supervision through SFT data, and plot the average hitting time under this policy where the supervised backwards states learn to continue backwards. For our experiments, we take the average over 100 random supervisions for each $p=0,0.1,\dots,0.9, 1.0$.}
    \label{fig:sft}
\end{figure}

\begin{figure}[H]
    \centering
    \includegraphics[width=\linewidth]{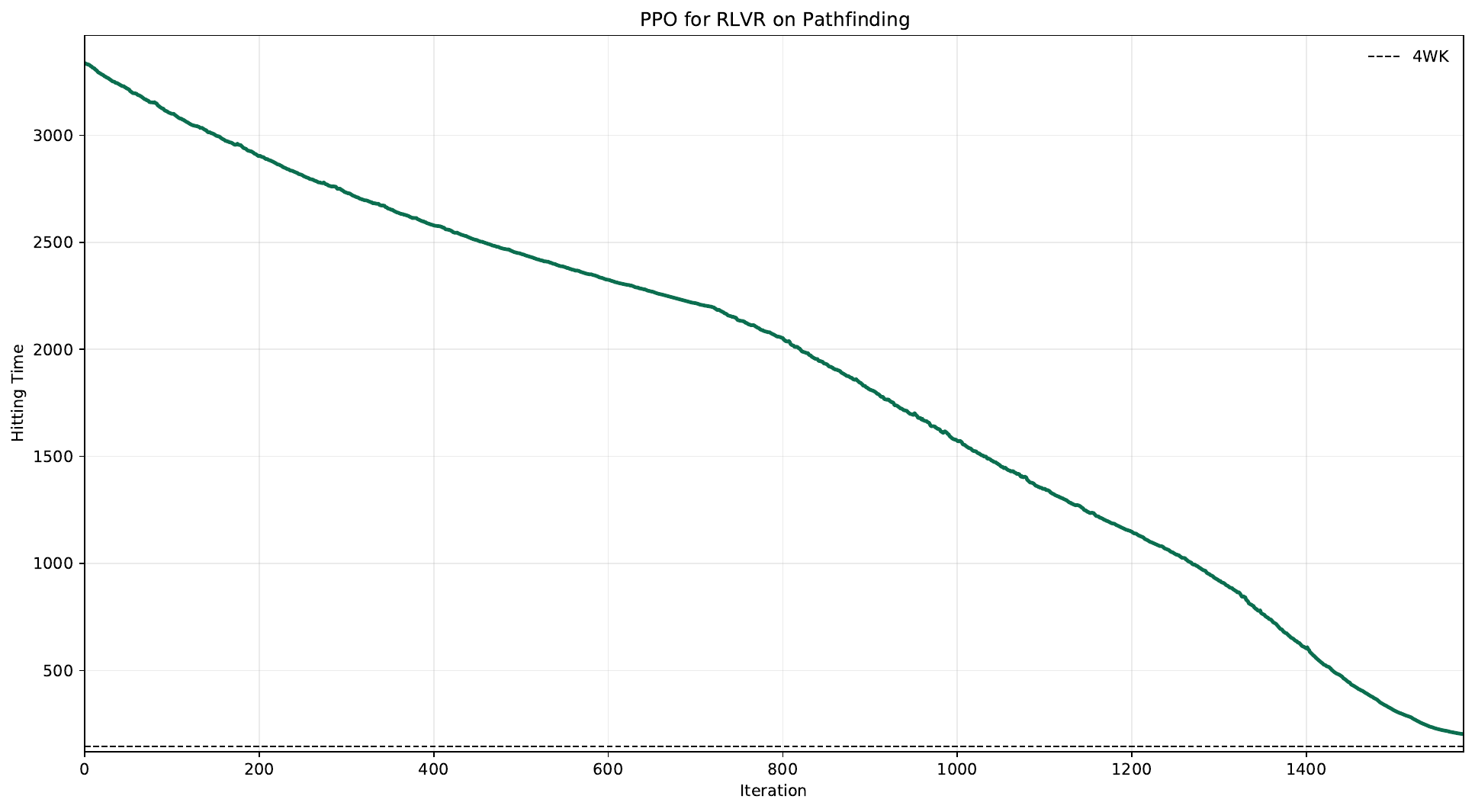}
    \caption{We run PPO on rollout samples for RLVR. Notably, for a given iteration, we use a rollout horizon of two times the current policy's hitting time of uniform targets, and a very small length penalty of $\beta=3\times 10^{-4}$. Our choice ensures we can observe the synergy between the horizon and the length penalty in a more realistic setting. Convergence to the hitting time of $4WK$ (the learned backtracking policy) is shown above.}
    \label{fig:ppo}
\end{figure}

\begin{figure}[H]
    \centering
    \includegraphics[width=\linewidth]{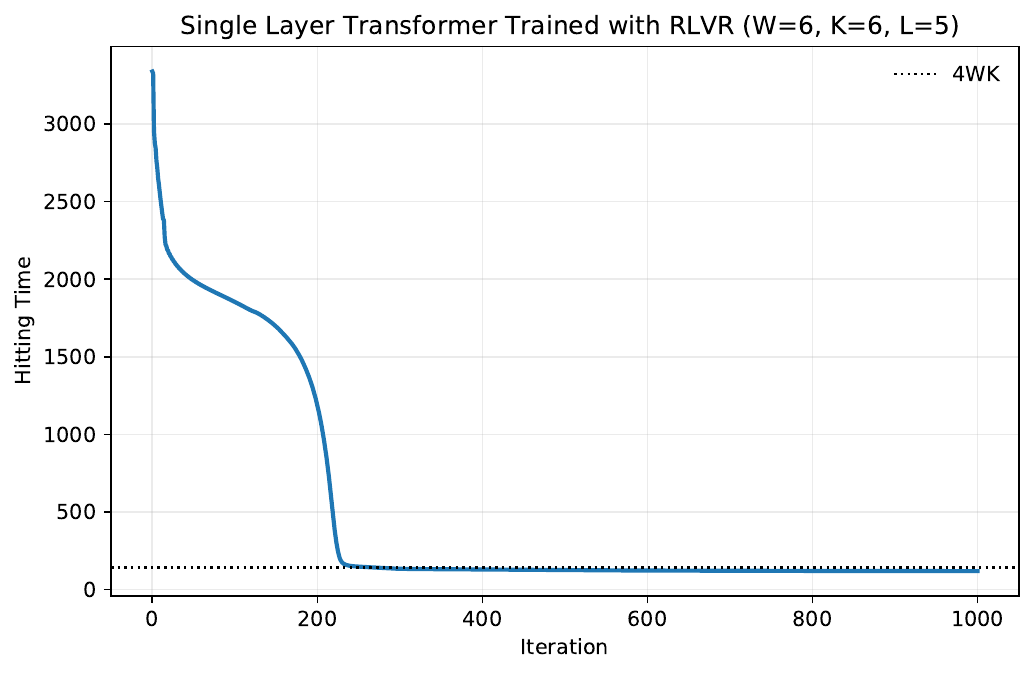}
    \caption{We simulate the training of a single-layer transformer on our pathfinding task. We initialize with the bigram policy as our pretrained policy as per our theory (e.g., by setting the attention matrix equal to identity), and run RLVR via the policy gradient. As predicted by our theoretical analysis, the policy is able to converge to the theoretical optimal of $4WK$ (in fact, it does slightly better than that since a transformer has more expressivity than a bigram model and hence may not have an exact bigram policy). }
    \label{fig:transformer}
\end{figure}

\begin{figure}[H]
    \centering
    \includegraphics[width=\linewidth]{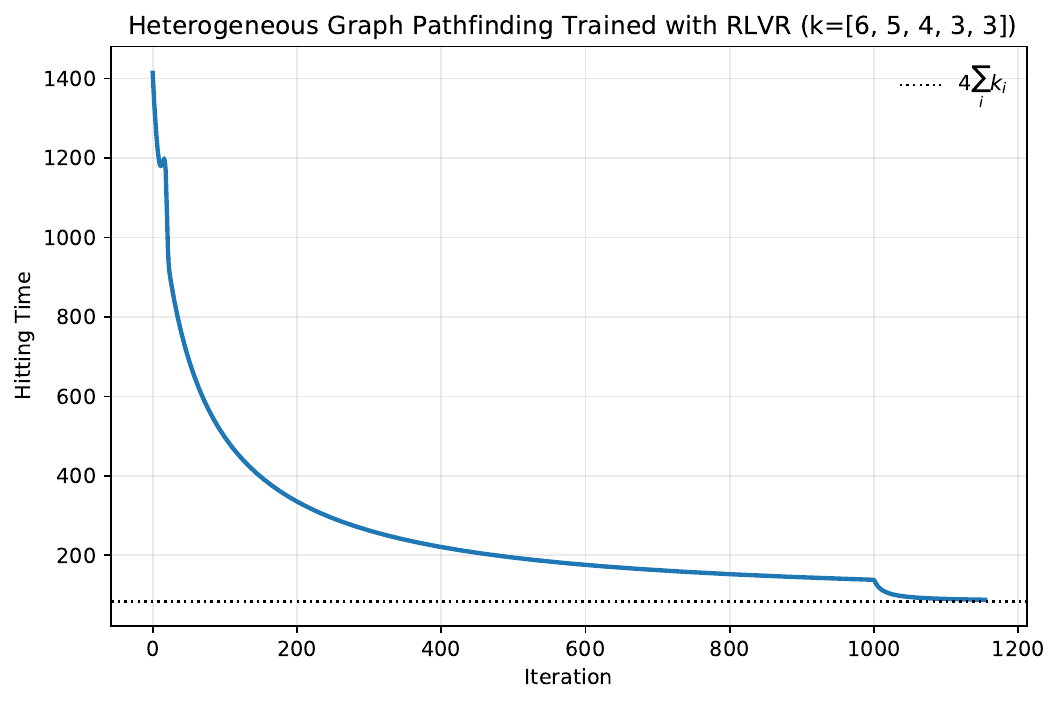}
    \caption{We go beyond the symmetric setting of our paper and analyze the convergence dynamics on a non-symmetric pathfinding task, where different branches are allowed to have different lengths, and different diamonds are allowed to have differing amounts of multiedges. Here, the backtracking target policy will have hitting time $4\sum_i k_i$, instead of $4WK$. Indeed, RLVR on the policy gradient converges to this policy.}
    \label{fig:hetero}
\end{figure}

\end{document}